\theoremstyle{plain}
\newtheorem{theorem}{Theorem}
\newtheorem{lemma}{Lemma}
\theoremstyle{definition}
\newtheorem{example}{Example}
\newtheorem{definition}{Definition}
\newtheorem{assumption}{Assumption}
\theoremstyle{remark}
\def\Ds{\mathcal{D}_{s}}
\def\ind{\mathbbm{1}}
\def\Dt{\mathcal{D}_{t}}
\def\R{\mathbb{R}}
\def\cX{\mathcal{X}}
\def\cC{\mathcal{C}}
\def\cY{\mathcal{Y}}
\def\cZ{\mathcal{Z}}
\def\cQ{\mathcal{Q}}
\def\cq{q}
\def\cH{\mathcal{H}}
\def\n{\mathcal{N}}
\def\cD{\mathcal{D}}
\def\dist{\Delta}
\def\dd{\partial}
\def\Lin{Lin}
\def\nERM{\textsc{feature\_validate}}
\def\ualg{\textsc{presrv\_contract\_nn}}
\def\nLabel{\textsc{direct\_generalize\_nn}}
\def\bayes{\Gamma}
\def\marginx{\rho}
\def\marginy{\Delta}
\def\cor{\textnormal{Cor}}
\def\proj{\textnormal{Proj}}
\DeclareMathOperator*{\argmax}{arg\,max}
\DeclareMathOperator*{\argmin}{arg\,min}
\newcommand{\supp}{\textnormal{supp}}
\def\bayes{g}
\def\coverconst{\Lambda}
\def\mv{\alpha}     
\def\Assumption{Statistical IRM Assumption}
\def\Phis{\mathcal{S}(\Phi)}  
\def\Phic{\mathcal{C}(\Phi)} 
\def\Phiu{\mathcal{U}(\Phi)} 
\def\Phirealize{\Phi^*} 
\title{Beyond Discrepancy: A Closer Look at the Theory of Distribution Shift}
\author{
Robi Bhattacharjee  \\
University of Tübingen and Tübingen AI Center \\
 \texttt{robi.bhattacharjee@wsii.uni-tuebingen.de} \\
\And
Nick Rittler  \\
University of California- San Diego\\
\texttt{nrittler@ucsd.edu} \\
\And
Kamalika Chaudhuri \\
University of California - San Diego\\
\texttt{kamalika@cs.ucsd.edu} \\
}
\begin{document}

\maketitle

\begin{abstract}
Many machine learning models appear to deploy effortlessly under distribution shift, and perform well on a target distribution that is considerably different from the training distribution. Yet, learning theory of distribution shift bounds performance on the target distribution as a function of the discrepancy between the source and target, rarely guaranteeing high target accuracy. Motivated by this gap, this work takes a closer look at the theory of distribution shift for a classifier from a source to a target distribution. Instead of relying on the discrepancy, we adopt an Invariant-Risk-Minimization (IRM)-like assumption connecting the distributions, and characterize conditions under which data from a source distribution is sufficient for accurate classification of the target. When these conditions are not met, we show when only unlabeled data from the target is sufficient, and when labeled target data is needed. In all cases, we provide rigorous theoretical guarantees in the large sample regime.
\end{abstract}


\section{Introduction}

Classical learning theory operates within the statistical learning framework, in which the training and testing datasets are assumed to be drawn from the same distribution \cite{valiant1984}. However, this assumption is rarely met in practice, where models often succeed in ever-changing real world environments rarely matching the precise conditions of their training data. This motivates the problem of distribution shift, in which a learner trains on a source distribution, with the goal of generalizing well over a distinct target distribution.

Thus far, the theory of distribution shift has consistently taken a worst-case approach, typically bounding generalization error in terms of some notion of discrepancy between the source and target distributions \citep{bendavid2006, bendavid2010, mansour2012}. In cases where the source and target distributions are completely unrelated, or the source provides little information about the decision boundary of the target, discrepancy-based analyses correctly capture the difficulty of generalization. However, in practice, many large models appear to generalize effortlessly to target distributions with non-zero discrepancy. 

Motivated by this gap, we take a closer look at the theory of distribution shift. In our setting, we consider a source distribution $\Ds$ and a target distribution $\Dt$, with the goal of building an accurate classifier over $\Dt$, primarily via training samples from $\Ds$. To accomplish this, we first select a feature map, $\hat{\phi} \in \Phi$, under which the source and target distributions are similar. To make predictions, we then use $k_n$-nearest neighbors ($k_n$-NN) inside feature space over data sampled from $\Ds$.  

Instead of a worst-case, discrepancy-based approach, we study generalization under an Invariant Risk Minimization (IRM)-like assumption which we term the ``Statistical IRM Assumption''. IRM assumes on the existence of a feature map $\psi^*$ and a classifier (over feature space) $h^*$ so that their composition $h^* \circ \psi^*$ achieves optimal accuracy over both source and target distributions \cite{arjovsky2020}. We adapt this assumption to the nearest neighbors setting, and replace the existence of $h^*$ with the assumption that the some feature map $\phi^* \in \Phi$ maps points from the target $\Dt$ close to those from the source $\Ds$ while retaining information sufficient for optimal prediction. This property allows us to leverage the fact that nearest neighbors enjoys strong generalization properties within the support of its training distribution. 

One might hope that such a condition is sufficient for generalization from source data alone. Unfortunately, the existence of a suitable feature map does not imply its identifiability - there may be many poor feature maps in $\Phi$ that appear suitable when only source data are available. We show (Theorem \ref{theorem:type-1-lower-bound}) that to guarantee generalization to the target using only source data, the source must be rich enough so that this cannot happen, i.e. that \textit{all} maps that lead to optimal classification over the source distribution appropriately unify the source and target. We further exhibit a learning rule which leads to provable generalization to the target under this additional condition (Theorem \ref{thm:setting_2_upper_bound}). 

\begin{figure*}
\centering
  \begin{subfigure}[t]{.32\linewidth}
    \includegraphics[width=\linewidth]{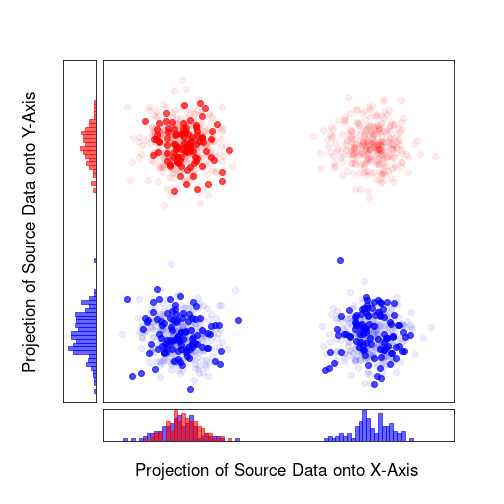}
    \caption{Pure source data is sufficient, as $\phi_x$ significantly reduces accuracy over the source distribution.}
  \end{subfigure}
    \hfill
   \begin{subfigure}[t]{.32\linewidth}
    \includegraphics[width=\linewidth]{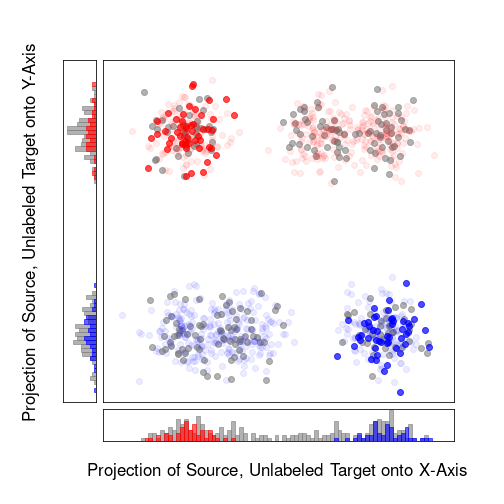}
    \caption{Unlabeled target data coupled with labeled source data is sufficient -- $\phi_x$ can be eliminated because it does not map target data close to source data in the implied feature space.}
  \end{subfigure}
  	\hfill
  \begin{subfigure}[t]{.32\linewidth}
    \includegraphics[width=\linewidth]{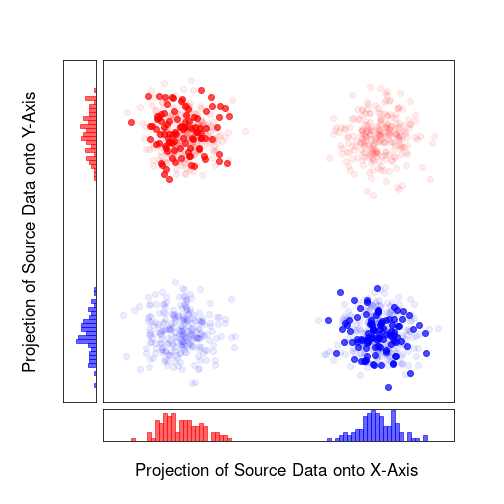}
    \caption{Labeled target data is needed to determine which projection is correct -- without it both projections appear completely symmetric, but with competing notions of how to label in feature space. }
  \end{subfigure}  
  \caption{Examples of similar distribution shift problems with different data demands. Faded data points are sampled from the target distribution, while the bold points are selected from the source. In all cases, we wish to generalize to the target via the selection a feature map from $\Phi = \{\phi_x, \phi_y\}$, with $\phi_x$ and  $\phi_y$ denoting projection onto the $x$ and $y$ axis, respectively.} \label{fig:page_2_fig}
\end{figure*}

We next consider the case where the learner has access to unlabeled target data in addition to labeled source data. Here, the target data provides crucial new information about which feature maps transform target data close to source data in feature space. We find that it is necessary and sufficient (Theorems \ref{thm:lower_bound_setting_unlabeled} and \ref{thm:unlabel_upper_bound}) that all maps which both lead to optimal classification over the source, and map target data close to source data, further appropriately unify the source and target classification tasks.

When generalization is not possible with the addition of unlabeled target data, some labeled target data is needed. In this setting, the goal is to minimize the amount of labeled target data used -- if large amounts of labeled target data are obtainable, we could simply use standard learning algorithms directly on the target data. We introduce a complexity measure on the embedding class, $\Phi$, which we term the \textit{distance dimension}, and use it to provide an upper bound on the amount target data needed for generalization. In particular, we show that the natural procedure of minimizing the empirical risk (over $\phi \in \Phi$) on the target distribution of the source data-trained nearest neighbor classifier meets this upper bound. 

\subsection{An Illustrative Example}\label{sec:intro_example}
Figure \ref{fig:page_2_fig} illustrates three learning problems in which we seek to generalize from the bold source data to the faded target data. In each case, the set of possible features maps is $\Phi = \{\phi_x, \phi_y\}$, the projections of onto the $x$ and $y$-axis, respectively. Here, the Statistical IRM Assumption manifests itself in the following way: the learner knows that perfect classification can be performed on the source and the target through the intermediate projection onto either the $x$ or $y$-axis. If the correct projection can be identified, a classifier generalizing to the target can be built by composing the correct projection with a classifier that accurately classifies source data in feature space.

The possibility of generalizing directly to the target is illustrated by Figure \ref{fig:page_2_fig}(a). In this case, using source data alone, it can be deduced that $\phi_x$ is not suitable, given that projection under $\phi_x$ significantly reduces accuracy over the source distribution. Thus, a classifier can be constructed through composition with $\phi_y$ that allows for generalization to the target.

By contrast, in panel (b), we see that both $\phi_x$ and $\phi_y$ admit good classification over the source distribution. However, note that only $\phi_y$ leads to good generalization over the target distribution, and that there is no way to pin down which embedding should be used with source data alone. That said, given access to unlabeled target data, $\phi_x$ can be eliminated from contention -- it fails to uniformly map target data close to source data in feature space, another condition for correctly relating the source and target. 

In panel (c), we see an instance in which no amount of source data and unlabeled target data will allow the learner to distinguish a winner between the two possible feature maps. In this case, labeled target data is needed.  However, note that only a relatively small amount of labeled target data will be needed -- all that is required is enough points to validate that a source-trained classifier arising from first projecting onto the $x$-axis has inferior performance to an analogous classifier where data are projected to the $y$-axis. 

\subsection{Guarantees Beyond Discrepancy}
The examples of Figure \ref{fig:page_2_fig} also serve to showcase the potential for generalization guarantees in scenarios where worst-case analyses indicate that generalization to the target should be hard. 

There are a few veins of the discrepancy literature \cite{hanneke2020}. One prominent vein considers bounding generalization error in terms of divergence measures between the source and target \citep{bendavid2006, bendavid2010, mansour2012}. Another considers density ratios between target and source \cite{quionero2009, sugiyama2012}. In each case, the idea is that the degradation of prediction quality on the target will be small when the source and target distributions are not ``too far'' from each other. 

Consider again the examples of Figure \ref{fig:page_2_fig}. A density ratio analysis indicates that generalization to the target in (a) is impossible from pure source data, and expensive in a transfer learning setting (c), as the source has no mass in large chunks of the support of the target. Divergence measures paint a similar picture. 
Thus, our assumption allows us to consider the possibility of cheap generalization to targets which may have a completely different support from the source in the original data space, but are related in some deeper manner. In such scenarios, discrepancy-based analyses may often be overly pessimistic.

\section{Related Work}


As alluded to above, the theory literature has primarily studied distribution shift through the lens of discrepancy \cite{bendavid2006, bendavid2010, quionero2009, mansour2012, sugiyama2012, nguyen2022, wang2023}. In the transfer learning literature, in which one considers the possibility of updating a model trained on the source distribution with a relatively small amount of target data, divergence-based analyses have also been prevalent \cite{sugiyama2007, mohri2012, cortes2019, mansour2023}. A notable line of work attains strong guarantees in certain cases where the divergence between source and target distributions is large, but the decision boundary on the source and target are similar by honing in on the information about the decision boundary contained in the source distribution \cite{hanneke2020, hanneke2023}.

Much of the attention towards the selection of feature representations has been devoted to problem of ``domain generalization'', wherein the learner tries to generalize to a large of set testing environments using samples from a smaller set of source environments, which provide training data \cite{muandet2013, zhao2017}. As mentioned above, the IRM literature hinges on the existence of a feature map, $\psi^*$ and a classifier $h^*$ whose composition achieves optimal accuracy over both source and target distributions \cite{arjovsky2020}. Another line of work considers a different assumption, namely the existence of some suitable feature map $\psi^*$ for which the conditional distributions of transformed features $\psi^*(x)$ given a label are shared across all environments \cite{chen2021, sun2015, li2018}. 

An important part of this work considers the case where some relatively small amount of labeled target data is available to the learner, and can be exploited in the determination of a suitable feature map. In the theory literature, this setting is most closely explored by the work on `'few-shot representation learning'' \cite{du2021, maurer2016}, where the goal is to use data on a set of source tasks to learn a low dimensional representation that connects tasks together, allowing for generalization to a related target task without too many extra samples.

In considering the case where the learner has access to unlabeled target samples, we enter the `'unsupervised domain adaptation'' setting. Here, one often uses unlabeled target data to find some feature space under which source and target supports align \cite{glorot2011, liu2022}. The literature has shown that unlabeled data has provable utility in certain common settings, e.g. under covariate shift \cite{bendavid2012, huang2006}.  Unsupervised domain adaptation has also been studied through the lens of discrepancy \cite{nguyen2022, wang2023}.


\section{Preliminaries}\label{sec:prelims}

Let the instance space $(\mathcal{X}, d_\cX)$ be a compact metric space, and $\cY$ be a finite label set.  A data distribution $\cD = (\mu, \eta)$ over $\cX \times \cY$ is defined by a Borel measure $\mu$ over $\cX$, and a conditional probability function $\eta(y|x) := \Pr_{(X, Y) \sim \cD}[Y=y \mid X= x]$. 

We assume our distributions satisfy some measure-theoretic regularity conditions. In particular, we assume our Borel measures are open measures, and that the Lebesgue Differentiation Theorem always holds. See Appendix \ref{app:measure} for details to this end.


For a classifier $h: \cX \to \cY$,  we define its \textbf{risk} $R(h, \cD)$ over $\cD$ as the probability it misclassifies, i.e. we define $R(h, \cD) := \Pr_{(X, Y) \sim \cD}[h(X) \neq Y]$. The classifier with the lowest possible risk is called the \textbf{Bayes optimal classifier}, defined as $\bayes_\cD(x) = \argmax_{y \in \cY}\eta(y|x)$. 

\subsection{Problem Statement and Goal}
In this work, we are interested in the problem of distribution shift, in which the goal is to build a classifier with low risk over a target distribution $\Dt = (\mu_t, \eta_t)$, primarily using data from a source distribution, $\Ds = (\mu_s, \eta_s)$. We denote the Bayes risk on source and target via $R_s^*$ and $R_t^*$. 

The challenge in this setting is that $\mu_s$ and $\mu_t$ can put mass in drastically different regions in $\cX$ making direct generalization from the source distribution to the target distribution difficult or impossible in the worst case.

\subsection{Feature Maps}\label{sec:feature_maps_in_main_body}
We consider classification after first applying a transformation into a feature space $(\cZ, d_{\cZ})$, also a compact metric space. 

We assume we are given $\Phi$, a class of feature maps $\phi: \cX \to \cZ$. Here, each $\phi \in \Phi$ represents a potential feature map under which the source and target distributions could plausibly be connected. Let $d_\phi$ denote the distance metric induced on $\cX$ by $\phi$, i.e. $d_\phi(x, x') = d_\cZ\left(\phi(x),\phi(x')\right)$. We assume all $\phi \in \Phi$ are continuous, and $\Phi$ is compact with respect to the supremum distance metric. We also include further technical assumptions on $\Phi$ in Appendix \ref{app:phi_assumptions_new}.

Note that the following important examples of feature map collections which meet these regularity assumptions when the domain is a compact subset of $\mathbb{R}^D$.
\begin{example}\label{def:cor}
Let $\cor_{D, K}$ denote the set of all projections from $\R^D \to \R^K$ onto a set of $K$ coordinates. Formally, we may write $\cor_{D, K} = \{\phi_J: J \subset [D], \ |J| = K\}$,
where for each $J \subseteq [D]$ with $J = \{j_1, \dots, j_k\}$, we let $\phi_J(x) = (x_{j_1}, x_{j_2}, \dots, x_{j_k})$.
\end{example}

\begin{example}\label{def:proj}
Let $\proj_{D, K}$ denote the set of all linear maps corresponding to matrices in $\R^{D \times K}$ with each entry contained in $[-1, 1]$.
\end{example}

For any data distribution $\cD = (\mu, \eta)$ over $\cX \times \cY$, we denote via $\cD^\phi$ the distribution defined via $(\phi(X), Y)$ where $(X, Y) \sim \cD$, often writing $\cD^\phi = (\mu^\phi, \eta^\phi)$, where $\mu^\phi$ and $\eta^\phi$ are the induced marginal and conditional distributions of $\cD^\phi$. We assume that the induced marginals are also open measures. Measure-theoretic details of induced distributions are discussed in Appendix \ref{app:Lebesgue}.

\subsection{Nearest Neighbors}
We let $\n_{S}: \cX \to \cY$ denote the $k_n$-nearest neighbor classifier arising from an i.i.d. sample $S \sim \cD^n$ and a metric over the instances, where ties are broken arbitrarily. It is well known that under mild regularity conditions, $k_n/n \to 0$ and $k_n \to \infty$ imply that $k_n$-nearest neighbors will converge to the Bayes optimal classifier \citep{chaudhuri14}. Motivated by technical concerns, we will make the slightly stronger assumption that $k_n/n \to 0$ and $k_n/\log(n) \to \infty$. 

Because we consider classification in feature space, we will often consider the composition of $k_n$-NN with maps $\phi \in \Phi$. To this end, we let $\n_{S}^\phi: \cX \to \cY$ denote the map defined by 
\begin{equation*}
\n_S^\phi(x) = \n_{\{(\phi(x), y): (x,y) \in S\}} \big( \phi(x) \big).
\end{equation*}

%
%

\subsection{Margin Conditions}
Finally, we restrict our attantions data distributions in which Bayes-optimal classification is clearly non-ambiguous, and regions in which Bayes-optimal predictions differ are separated by a margin. We formalize this as follows.

\begin{definition}\label{defn:well-separated}
A data distribution $\cD$ over $\cX \times \cY$ is $(\marginx, \marginy)$-$\textbf{separated}$ if there exist $\marginx, \marginy > 0$, and disjoint sets $\{\mu^y: y \in \cY\}$, so that the following hold:
\begin{enumerate}
	\item The sets cover the support: $\supp(\mu) = \cup_{y \in \cY} \mu^y$.
	\vspace{-1mm}
	\item On the set where $y$ is the Bayes-optimal decision, no other label has similar conditional probability: If $y \neq y'$, then $\forall x \in \mu^y$, $\eta(y|x) > \eta(y'|x) + \marginy$. 
	\vspace{-1.5mm}
	\item These sets themselves are separated by a margin: $\min_{y \neq y'} d(\mu^y, \mu^{y'}) = \marginx$. 
\end{enumerate}
\end{definition}
When $\cD$ is $(\marginx, \marginy)$-separated, we say that $\cD$ has margin $\marginx$, and label margin $\marginy$. The conditions of well-separated distributions are met in most practical cases, where classification is rarely ambiguous, and arbitrarily close examples are usually classified identically.

\section{The Statistical IRM Assumption}\label{sec:phi_covariates}
Generalizing from source data in a feature space induced by some $\phi \in \Phi$ is only possible if $\Phi$ contains a map that appropriately unifies the classification tasks on $\Ds$ and $\Dt$. We use this section to motivate and define some desirable properties of feature maps vis a vis this goal, and to introduce the \Assumption, formalizing our requirement for the existence of quality maps in $\Phi$.

\subsection{Desirable Properties of Feature Maps}
In Invariant Risk Minimization, the fundamental assumption is the existence of a feature map $\psi^*: \cX \to \cZ$ and an ``invariant predictor'' $h^*: \cZ \to \cY$ for which $h^* \circ \psi^*$ is Bayes-optimal on all training and testing environments. This allows a learner to assume that selecting a feature space through which good performance on training environments is attainable is not a completely futile approach to constructing a generalizing classifier. 

In this spirit, we first interest ourselves in feature maps which preserve the possibility of optimal classification on our single source distribution. We consider a slightly stronger but natural notion that encodes the idea that no information relevant to the classification task on the source should be lost under the mapping. 

\begin{definition}\label{defn:source_preserves}
We say a feature map $\phi$ \textbf{source-preserves} if the induced source distribution $\Ds^\phi$ is separated, and the Bayes risk on $\Ds^\phi$ equal to that of $\Ds$, i.e.
\begin{equation*}
R(\bayes_{\Ds^\phi}, \Ds^\phi) = R_s^*. 
\end{equation*}
Let $\Phis$ denote the set of all source preserving feature maps in $\Phi$.
\end{definition}

Thus, source-preserving feature maps retain all information needed for optimal classification in the sense that the risk of the Bayes optimal in original space $\cX$ and feature space $\cZ$ should be the same under the correct embedding.  We also require that some margin is preserved in the arising feature space.  

While not strictly necessary under the IRM assumption, it also desirable that an embedding maps examples that are similar with respect to the classification task to similar parts of the feature space, regardless of which distribution they come from. 
We formalize a condition capturing this idea via the following.

\begin{definition}\label{defn:covers}
We say a feature map $\phi$ \textbf{contracts}  $\Ds$ and $\Dt$ if the induced source $\Ds^\phi$ is separated with margin $\marginx^\phi$, and for each $z_t \in \supp(\mu_t^\phi)$, there is some $z_s \in \supp(\mu_s^\phi)$ such that 
\begin{equation*}
d_{\cZ}(z_t, z_s) < \frac{\marginx^\phi}{\coverconst}, 
\end{equation*} 
where $\coverconst > 2$ is a fixed constant. Let $\Phic$ denote the set of all contracting feature maps in $\Phi$. 
\end{definition} 

Ultimately, we are interested in feature spaces in which we can generalize to the target by classifying target data as we would source data. This possibility is captured by the notion of the invariant predictor $h^*$ in the IRM assumption. We interest ourselves in feature maps with a similar property - ones for which the optimal classification decision is locally the same across $\Ds$ and $\Dt$. 

\begin{definition}\label{defn:joint_classifies}
We say a feature map $\phi$ \textbf{Bayes-unifies} $\Ds$ and $\Dt$ if for all $x_s \in \supp(\mu_s)$ and $x_t \in \supp(\mu_t)$, 
\begin{equation*}
d_\phi(x_s, x_t) < \frac{\rho^\phi}{2} \implies g_{\Ds}(x_s) = g_{\Dt}(x_t).
\end{equation*} 
Let $\Phiu$ denote the set of all Bayes-unifying feature maps in $\Phi$.  
\end{definition}
Under a feature map which Bayes-unifies, any points which are mapped closer together than half the induced margin are classified the same under the source and target distributions. 

\subsection{Stating the Statistical IRM Assumption} 
It's intuitive that if a feature map both preserves the Bayes risk on the source, and unifies the classification tasks of source and target, then converging to the Bayes risk on the target is possible when source data populate the support of the induced target. 

Thus, we would like a feature map which possess \textit{all} of these properties. Our fundamental assumption is that there exists at least one such feature map in $\Phi$ -- we term this the Statistical IRM Assumption.

\begin{assumption}[Statistical IRM Assumption]\label{assum:SIRM}
We assume there is some $\phi^* \in \Phi$ such that 
\begin{enumerate}
\vspace{-1.5mm}
\item $\phi^*$ source-preserves $\Ds$
\vspace{-1.5mm}
\item $\phi^*$ contracts the source $\Ds$ and target $\Dt$
\vspace{-1.5mm}
\item $\phi^*$ Bayes-unifies source $\Ds$ and target $\Dt$
\vspace{-1mm}
\end{enumerate}
We say that $\phi^*$ with all of these properties \textbf{realizes} the \Assumption, and let $\Phirealize$ denote the set of all maps in $\Phi$ which realize the \Assumption.
\end{assumption}

This assumption is an analogue of the IRM assumption, adapted to our single-source, single-target setting. Like IRM, it allows for the possibility of optimal classification on both source and target via the selection of an appropriate feature space. Contraction, which is not an assumption in IRM, allows for that optimal classification to be realized via a local classification scheme such as $k_n$-NN. 

\subsection{The Statistical IRM Theorem}
One would expect that if a learner were handed $\phi^* \in \Phirealize$,  generalization to the target should be possible with source data alone -- because the classification task on $\Ds$ and $\Dt$ is unified in the feature space arising from $\phi^*$, and every example in the target support is mapped close to the training support, the learner able to construct a good classifier for the target by simply constructing a constructing a good classifier on the induced source.

We formalize this intuition via the following theorem, which states that given knowledge of a realizing feature map $\phi^*$, generalization to the target can be accomplished with source data only via the construction of a $k_n$-NN classifier in feature space.

\begin{theorem}[Statistical IRM Theorem]\label{thm:k_nn_converge}
Suppose $\phi^*$ realizes the Statistical IRM assumption. Then for all $\epsilon, \delta > 0$, there exists $N$ such that for all $n \geq N$, with probability $\geq 1-\delta$ over $S \sim \Ds^n$, 
\begin{equation*}
    R(\n_{S}^{\phi^*}, \Dt) \leq R_t^* + \epsilon.
\end{equation*}
\end{theorem}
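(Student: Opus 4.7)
My plan is to reduce the excess risk to a disagreement probability via the standard inequality $R(h, \Dt) - R_t^* \leq \Pr_{X \sim \mu_t}[h(X) \neq g_{\Dt}(X)]$, and then show that with high probability over $S \sim \Ds^n$ the classifier $\n_S^{\phi^*}$ agrees with $g_{\Dt}$ on a set of $\mu_t$-measure at least $1-\epsilon$. The machinery behind the agreement has three ingredients supplied by the three clauses of the realizing map.

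Fix a target point $x_t \in \supp(\mu_t)$ and let $z_t = \phi^*(x_t)$. By contraction, there exists $z_s \in \supp(\mu_s^{\phi^*})$ with $d_{\cZ}(z_t, z_s) < \marginx^{\phi^*}/\coverconst$, where $\marginx^{\phi^*}$ is the margin of the separated induced source (which exists by source-preservation). Using continuity of $\phi^*$ and compactness of $\supp(\mu_s)$, I can lift $z_s$ to some $x_s \in \supp(\mu_s)$ with $\phi^*(x_s) = z_s$. Since $\coverconst > 2$ we have $d_{\phi^*}(x_s, x_t) < \marginx^{\phi^*}/2$, so Bayes-unification yields $g_{\Ds}(x_s) = g_{\Dt}(x_t)$. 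Thus the target Bayes label at $x_t$ coincides with the source Bayes label $y := g_{\Ds^{\phi^*}}(z_s)$, and it suffices to show that $\n_S^{\phi^*}(x_t) = y$ with high probability.

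For this, I would invoke the standard $k_n$-NN consistency machinery in the induced feature space. Since $\Ds^{\phi^*}$ is $(\marginx^{\phi^*}, \marginy^{\phi^*})$-separated, its Bayes-$y$ components are separated by distance $\marginx^{\phi^*}$. Under $k_n/n \to 0$, the $k_n$-th nearest-neighbor distance from $z_t$ to the feature-space sample $S^{\phi^*}$ converges to $0$, so for $n$ large enough these neighbors all fall in the ball $B(z_t, \marginx^{\phi^*}/2 - \marginx^{\phi^*}/\coverconst) \subseteq B(z_s, \marginx^{\phi^*}/2)$, which intersects only the Bayes-$y$ component of $\supp(\mu_s^{\phi^*})$. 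Within that component, the label margin $\marginy^{\phi^*}$ means the true label's conditional probability beats every competitor by at least $\marginy^{\phi^*}$, so combined with $k_n/\log n \to \infty$ a Hoeffding tail bound makes the plurality label among the neighbors be $y$ with error probability decaying faster than $1/n$.

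The main obstacle is promoting these pointwise guarantees to a guarantee uniform over $\supp(\mu_t^{\phi^*})$, which may be an infinite set. I would handle this by exploiting the compactness of $\cZ$ to fix a finite $\eta$-cover of $\supp(\mu_t^{\phi^*})$, apply the pointwise bound at each cover center via a union bound, and use continuity and $1$-Lipschitz stability of nearest-neighbor distances to transfer the conclusion to a neighborhood of each center. Target points whose images fall outside the well-behaved portion of the cover (e.g., in thin boundary layers of the component decomposition) are absorbed into an $\epsilon$-sized exceptional set whose $\mu_t$-mass is negligible by choosing $\eta$ small; the residual probability of failure is made $\leq \delta$ by the standard $N(\epsilon, \delta)$ dependence.
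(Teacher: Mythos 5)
Your high-level plan matches the paper's: use contraction to find a nearby induced-source point, lift it to $\supp(\mu_s)$, invoke Bayes-unification to identify the target Bayes label with a source Bayes label, and then show the $k_n$-NN vote in feature space recovers it. Two of the intermediate steps have problems, however, one minor and one substantive.

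The minor one is the claim that ``the $k_n$-th nearest-neighbor distance from $z_t$ to the feature-space sample converges to $0$.'' For a target point $z_t$ outside $\supp(\mu_s^{\phi^*})$ this is false; the distance converges to $d_\cZ(z_t, \supp(\mu_s^{\phi^*}))$, which can be as large as (just under) $\marginx^{\phi^*}/\coverconst$. Your subsequent radius $\marginx^{\phi^*}/2 - \marginx^{\phi^*}/\coverconst$ is therefore not enough: you need $\marginx^{\phi^*}/\coverconst < \marginx^{\phi^*}/2 - \marginx^{\phi^*}/\coverconst$, i.e.\ $\coverconst > 4$, whereas the hypothesis only gives $\coverconst > 2$. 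The fix is to bound distances the other way: show that every wrong-label source point in feature space is at distance $\geq \marginx^{\phi^*}(1 - 1/\coverconst)$ from $z_t$, which strictly exceeds $\marginx^{\phi^*}/\coverconst$ precisely because $\coverconst > 2$, so for $n$ large the $k_n$ nearest neighbors of $z_t$ fall in the correct component. (This is effectively what the paper's Lemma~\ref{lem:k_nn_p_delta_accurate} does, working with the witness set $\cX_{p,r}^{\phi^*}$ and the choice $r = \marginx(1/2 - 1/\coverconst)$.)

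The substantive gap is the uniformity step. You propose a finite $\eta$-cover of $\supp(\mu_t^{\phi^*})$ plus ``$1$-Lipschitz stability of nearest-neighbor distances'' to propagate the pointwise guarantee from cover centers to a neighborhood. The difficulty is that what must be propagated is not a distance but a plurality vote over a $k_n$-element set, and the identity of that set is not Lipschitz in the query point. As $z$ moves inside an $\eta$-ball, points at the margin of the $k_n$-neighborhood swap in and out, and the Hoeffding bound on the vote holds for each \emph{fixed} $k_n$-subset, not simultaneously for all subsets that can arise. Patching this would require a union bound over the distinct $k_n$-NN sets realizable in the ball (a VC-type argument), which you do not supply. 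The paper avoids the issue entirely: since the pointwise failure probability over $S$ is $\leq n^{-4}$ for each $x_t$, Fubini gives $\mathbb{E}_S\Pr_{x_t\sim\mu_t}[\n_S^{\phi^*}(x_t)\neq g_{\Dt}(x_t)] \leq n^{-4}$, and Markov's inequality converts this to a high-probability bound on the $\mu_t$-mass of the misclassified set (see Lemma~\ref{lem:strong_k_nn_converge}). That route requires no cover, no continuity of the predictor, and no control of boundary layers; I would recommend replacing your covering argument with it.
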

Thus, from the perspective of target generalization from source data, it suffices to determine a feature map $\phi \in \Phi$ which realizes the \Assumption. In what follows, we characterize the statistical identifiability of $\phi^*$ (and thus the learnability of $\Dt$) under this assumption in each of our data availability settings.

\section{The Distance Dimension of $\Phi$}
Our investigation of the identifiability of realizing feature maps relies on one further notion -- one of embedding classes with bounded complexity. To this end, we introduce a complexity measure on $\Phi$ which will play a key role in each of the settings we consider. We begin with an intermediate definition.
\begin{definition}\label{defn:dist_comparer}
For a given $\phi: \cX \to \cZ$, we define its \textbf{induced distance comparer} $\dist_\phi: \cX^4 \to \{0, 1\}$ as the map 
\begin{equation*}
\dist_\phi(x_1,  x_2, x_3, x_4) = \ind\left( d_{\cZ}\left(\phi(x_1), \phi(x_2)\right) \geq d_{\cZ}\left(\phi(x_3), \phi(x_4)\right)\right).
\end{equation*}
We also define $\dist\Phi := \{\dist_\phi: \phi \in \Phi\}$ as the \textbf{induced distance comparer class} of $\Phi$.
\end{definition}
Distance comparers are a natural tool for our analysis -- all nearest-neighbor computations inside the feature space $\cZ$ can be expressed in such terms. This observation gives rise to a natural complexity measure for the determination of a suitable feature map, which we term the \textit{distance dimension}.

\begin{definition}\label{defn:dist_dim}
The \textbf{distance dimension} of $\Phi$, denoted $\dd(\Phi)$, is the VC dimension of the induced comparer class $\dist\Phi$. 
\end{definition}

In providing upper bounds, it will be important that the distance dimension be finite. We note that it is easily bounded for the two important classes of feature maps mentioned in Section \ref{sec:prelims}.

\begin{theorem}\label{thm:bound_linear_dd}
Suppose $\cor_{D, K}$ and $\proj_{D, K}$ are defined as in Examples \ref{def:cor} and \ref{def:proj}, respectively.  Then 
\begin{equation*}
\dd(\cor_{D, K}) \leq K\log D \ \text{ and  } \ \dd(\proj_{D, K}) \leq D^2.
\end{equation*}
\end{theorem}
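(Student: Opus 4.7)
The plan is to handle the two classes by distinct standard routes: for $\cor_{D,K}$ a cardinality bound, and for $\proj_{D,K}$ a reduction to halfspaces via the Gram matrix $M = A^T A$.

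For $\cor_{D,K}$, each $\phi_J$ is determined by its index set $J \subseteq [D]$, so $|\cor_{D,K}| = \binom{D}{K}$ and hence $|\dist \cor_{D,K}| \leq \binom{D}{K} \leq D^K$. Since any binary concept class $H$ satisfies $\textnormal{VC}(H) \leq \log_2 |H|$ (shattering $d$ points requires at least $2^d$ distinct functions), we obtain $\dd(\cor_{D,K}) \leq \log_2 \binom{D}{K} \leq K \log_2 D$.

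For $\proj_{D,K}$, take the feature metric on $\cZ \subseteq \R^K$ to be Euclidean. For $A \in \R^{K \times D}$ and points $x_1, \ldots, x_4 \in \cX$, set $u := x_1 - x_2$, $v := x_3 - x_4$, and $M := A^T A \in \R^{D \times D}$. Then
\begin{equation*}
d_{\cZ}(Ax_1, Ax_2)^2 - d_{\cZ}(Ax_3, Ax_4)^2 = u^T M u - v^T M v = \langle M, uu^T - vv^T \rangle_F,
\end{equation*}
where $\langle \cdot, \cdot \rangle_F$ is the Frobenius inner product. Define $\Psi: \cX^4 \to \R^{D^2}$ by $\Psi(x_1, x_2, x_3, x_4) := \textnormal{vec}(uu^T - vv^T)$ and $w_A := \textnormal{vec}(M) \in \R^{D^2}$. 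Each $\dist_A$ then has the form $\ind(\langle w_A, \Psi(\cdot)\rangle \geq 0)$, so $\dist \proj_{D,K}$ embeds, through $\Psi$, into the class of homogeneous halfspaces on $\R^{D^2}$. The latter has VC dimension $D^2$, and VC dimension is monotone under pullback by a fixed map $\Psi$ (any set shattered in the pulled-back class is witnessed as shattered by its $\Psi$-image), so $\dd(\proj_{D,K}) \leq D^2$.

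There is no substantial obstacle here — both arguments are routine invocations of VC theory. The only point requiring care is the linearization step for $\proj_{D,K}$: it is essential that the comparer depends only on squared Euclidean distances, which converts the quadratic-in-$A$ condition into a condition linear in the entries of the Gram matrix $M$, so that the $D^2$-dimensional halfspace VC bound applies despite the fact that the parameters $A$ are only $KD$-dimensional (parameterizing $A$ directly would be unhelpful, since the comparer is nonlinear in $A$).
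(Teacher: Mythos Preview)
Your proposal is correct and, for $\proj_{D,K}$, takes essentially the same route as the paper: both linearize the comparer via the Gram matrix $M=A^TA$, writing $\dist_\phi$ as the sign of $\langle M, (x_1-x_2)(x_1-x_2)^T - (x_3-x_4)(x_3-x_4)^T\rangle$ and invoking the VC bound for halfspaces in $\R^{D^2}$. The paper's written proof omits the $\cor_{D,K}$ case entirely; your cardinality argument $\log_2\binom{D}{K}\le K\log_2 D$ is the natural (and correct) way to fill that gap.
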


\section{Direct Generalization from $\Ds$}

We first study the possibility of constructing a classifier that generalizes to $\Dt$ using only labeled samples from $\Ds$. In this setting, a learner $L$ takes input $S \sim D_s^n$, and outputs a classifier $L(S): \cX \to \cY$, with the goal of achieving a small risk on $\Dt$.

While one might hope that the Statistical IRM assumption alone is sufficient for generalization to the target, this is unfortunately false. In fact, we have already seen an example of this phenomenon in Figure \ref{fig:page_2_fig}(c). Here, we essentially argued that $\phi_y$ was a realizing feature map: it preserves the source risk, unifies the classification tasks on source and target, and maps all target points close to source points. However, we argued that $\phi_x$ and $\phi_y$ were statistically indistinguishable in this setting, leaving the learner in need of more information.


To the end of a general characterization of learnability in this setting, recall our discussion of Figure \ref{fig:page_2_fig}(a). Here, the projection $\phi_y$ could be determined as realizing the Statistical IRM assumption given that it was the only map in $\Phi$ that preserved the source distribution -- it's clear from the figure that that $\phi_x$ does not preserve the source, and so cannot possibly realize the Statistical IRM assumption. It is vital that this reasoning could be carried out with source data alone.

More generally, by the Statistical IRM Theorem, it is sufficient for generalization from source data alone that the learner be able to identify $\phi^*$ \textit{from source data alone}. Such a realizing feature map must of course satisfy all three requirements of Assumption \ref{assum:SIRM}.  However, note that only one of these requirements, namely source-preservation, depends on the source distribution alone --  the others, namely contraction and Bayes-unification, are defined in terms of the target distribution. As such, only source-preservation can be tested using source data. 

That said, if the learner can be assured that \textit{all} source-preserving feature maps realize the Statistical IRM assumption, i.e. $\Phirealize = \Phis$, it can identify realizing feature maps by identify source-preserving feature maps. We formalize this intuitive idea with the following theorem, which shows that PAC guarantees for target generalization are obtainable when the additional condition $\Phirealize = \Phis$ holds.

\begin{theorem}\label{thm:setting_2_upper_bound}
Suppose the Statistical IRM Assumption holds, the distance dimension $\partial(\Phi) <\infty$, and that $\Phirealize = \Phis.$ Then there is a learning rule $L$ such that for every $\epsilon, \delta > 0$, there exists $N$ such that if $n \geq N$,  with probability $\geq 1-\delta$ over $S \sim \Ds^n$, 
\begin{equation*}
R(L(S), \Dt) \leq R_t^* + \epsilon.
\end{equation*}
\end{theorem}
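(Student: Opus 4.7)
The plan is to design a learning rule that uses source data to identify an approximately source-preserving feature map $\hat\phi \in \Phi$, then returns a $k_n$-NN classifier built from source data in the induced feature space. Under the hypothesis $\Phirealize = \Phis$, any source-preserving map realizes the \Assumption, so once $\hat\phi$ is identified, Theorem~\ref{thm:k_nn_converge} delivers generalization to $\Dt$. The main challenge is that source-preservation has two components---matching the Bayes risk on the induced source, and separating the induced support with positive margin---so the rule must verify both empirically.

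First, I would split $S$ into four disjoint pieces $S_{tr}, S_{loss}, S_{margin}, S_{final}$, each of size $\Theta(n)$. For every $\phi \in \Phi$, form the candidate $\n_{S_{tr}}^\phi$ and let $\widehat R_\phi$ be its empirical source risk on $S_{loss}$. With a slowly shrinking tolerance $\epsilon_n \to 0$, form the candidate pool $\Phi_{\epsilon_n} := \{\phi : \widehat R_\phi \leq \min_{\phi' \in \Phi} \widehat R_{\phi'} + \epsilon_n\}$. Using $S_{margin}$, estimate the induced feature-space margin of each $\phi \in \Phi_{\epsilon_n}$ via pairwise distance comparisons between same- and different-labeled sample points, choose $\hat\phi$ as the maximizer of this empirical margin within the pool, and output $\n_{S_{final}}^{\hat\phi}$. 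The two-stage filter mirrors the two defining components of source-preservation.

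The analysis has three steps. First, using $\partial(\Phi) < \infty$, both the empirical source risk of $\n_{S_{tr}}^\phi$ and the empirical induced-margin estimate can be written as events over the distance comparer class $\dist\Phi$, which yields uniform convergence over $\phi \in \Phi$ at standard VC rates. Second, standard $k_n$-NN consistency on $\Ds^{\phi^*}$ (which is separated because $\phi^*$ source-preserves) implies $\widehat R_{\phi^*} \to R_s^*$, so $\phi^*$ lies in $\Phi_{\epsilon_n}$ with high probability, and uniform concentration ensures $\hat\phi$ is also nearly risk-optimal on the induced source. Third, because $\phi^*$ has strictly positive true margin, its empirical margin is bounded below uniformly in $n$, forcing $\hat\phi$ to retain a non-negligible true margin as well.

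The main obstacle is bridging approximate source-preservation (as measured on data) and actual membership in $\Phirealize$, which is needed to invoke Theorem~\ref{thm:k_nn_converge}. Here I would lean on the topological structure of $\Phi$: compactness and continuity of the conditions defining $\Phis$ imply that any limit point $\phi_\infty$ of $\{\hat\phi_n\}_{n \geq 1}$ source-preserves, and by the hypothesis $\Phirealize = \Phis$, realizes the \Assumption. Combined with a uniform-in-$\phi$ strengthening of Theorem~\ref{thm:k_nn_converge} over the compact set $\Phirealize$---so the sample size $N$ may be chosen independently of the specific realizing map---a standard subsequence argument promotes this to $R(\n_{S_{final}}^{\hat\phi_n}, \Dt) \to R_t^*$ in probability, yielding the PAC statement.
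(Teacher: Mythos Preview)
Your proposal is correct and follows essentially the same approach as the paper: the four-way data split, the two-stage filter (empirical source risk to form a candidate pool, then empirical margin maximization within it), uniform convergence via the distance dimension, and the compactness/subsequence argument to pass from approximate to exact source-preservation all match the paper's construction and analysis. The paper's uniform-in-$\phi$ strengthening (its Lemma~\ref{lem:strong_k_nn_converge}) is stated over realizing maps with margin bounded below by a fixed $\rho>0$ rather than over all of $\Phirealize$, which is the form you actually need since you must apply it to $\hat\phi_n$ itself; your step~3 (non-negligible true margin for $\hat\phi$) together with Lemma~\ref{lem:characterize_well_sep} is exactly what places $\hat\phi_n$ in $\Phis=\Phirealize$ with a uniform margin lower bound, so the pieces already line up.
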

We relegate specification of the learning rule to the Appendix. It is founded on minimizing the empirical risk on the source data over feature maps in $\Phi$, but further leverages the knowledge that source-preserving feature maps induce separated distributions over feature space. After selecting a candidate feature map which empirically matches the requirements for source preservation, it uses $k_n$-NN in the implied feature space to make predictions.

In light of the discussion above, the condition that $\Phirealize = \Phis$ is intuitively necessary as well -- without it, there may be some feature map which is source-preserving, but which e.g. fails to Bayes-unify the source and the target. It's simple to see that attempting to generalize to the target via such a feature space could be catastrophic, and thus that blindly choosing between source-preserving feature maps will eventually lead the learner astray. On the other hand, not classifying through a feature space subjects the learner to the standard pitfalls of out of distribution generalization. 
We formalize these ideas via the following hardness result.

\begin{theorem}\label{theorem:type-1-lower-bound}
Fix a source $\Ds$, a target $\Dt$, and some embedding class $\Phi$ for which the Statistical IRM assumption holds. Suppose that $\Phis \setminus \Phirealize$ is non-empty, and that a learner $L$ successfully generalizes to $\Dt$ (with high probability) using only samples from $\Ds$. Then for all $\epsilon > 0$, there exists data distributions $\Ds', \Dt'$ such that the following hold::
\begin{enumerate}
	\item $W(\Ds, \Ds') < \epsilon$.
	\item There is a $\phi \in \Phi$ realizing the Statistical IRM assumption on alternative source $\Ds'$ and alternative target $\Dt'$.
	\item For all $N$, there exists $n > N$ such that with probability at least $\frac{1}{4}$ over $S \sim \Ds^n$, $$R(L(S), \Dt') > R(g_{\Dt'}, \Dt') + \frac{1}{4}.$$
\end{enumerate}
\end{theorem}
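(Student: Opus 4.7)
The plan is to mount an indistinguishability (``gaslighting'') argument. We construct an alternative pair $(\Ds', \Dt')$ satisfying the Statistical IRM assumption whose source is Wasserstein-close to $\Ds$ but whose target Bayes classifier differs substantially from $g_{\Dt}$ on a large set. Because $L$ sees only source samples, its output is essentially the same whether the ``true'' target is $\Dt$ or $\Dt'$; since by hypothesis $L$ succeeds on $\Dt$, it must then fail on $\Dt'$. The witness for the Statistical IRM assumption on $(\Ds', \Dt')$ will be some $\phi' \in \Phis \setminus \Phirealize$, whose existence is granted by assumption.

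The first step is to fix such a $\phi'$ and diagnose why it fails to realize on $(\Ds, \Dt)$. Since $\phi'$ source-preserves $\Ds$, it must fail either contraction or Bayes-unification on $(\Ds, \Dt)$. In the Bayes-unification failure case, there exist witnesses $x_s \in \supp(\mu_s)$ and $x_t \in \supp(\mu_t)$ with $d_{\phi'}(x_s, x_t) < \rho^{\phi'}/2$ yet $g_{\Ds}(x_s) \neq g_{\Dt}(x_t)$; I would set $\Ds' = \Ds$ (making $W(\Ds, \Ds') = 0 < \epsilon$) and build $\Dt'$ concentrated on a small neighborhood of $x_t$ inside $\supp(\mu_t)$ with labels set to $g_{\Ds}(x_s)$. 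In the pure contraction-failure case, I would pick $z_t \in \supp(\mu_t^{\phi'})$ isolated in feature space from $\supp(\mu_s^{\phi'})$ with preimage $x_t \in \supp(\mu_t)$, define $\Ds'$ by adding a mass atom of weight $O(\epsilon)$ at a point $x_0$ chosen so that $\phi'(x_0)$ lies very close to $z_t$ and carrying a label $y_0 \neq g_{\Dt}(x_t)$, and define $\Dt'$ concentrated near $x_t$ with label $y_0$. In both cases, by construction $\phi'$ realizes the Statistical IRM assumption on $(\Ds', \Dt')$, while $g_{\Dt'}$ disagrees with $g_{\Dt}$ on essentially all of $\supp(\mu_{t'})$.

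Next, I would translate the assumed success of $L$ into pointwise agreement with $g_{\Dt}$: for sufficiently large $n$, with probability at least $\tfrac{1}{2}$ over $S \sim \Ds^n$, $L(S)$ has $\Dt$-risk within some $\epsilon'$ of $R_t^*$, and hence agrees with $g_{\Dt}$ outside a set of small $\mu_t$-mass. Since $\supp(\mu_{t'}) \subseteq \supp(\mu_t)$ (in the contraction-failure case, essentially so, up to the location of $x_t$) and $g_{\Dt'}$ has been engineered to disagree with $g_{\Dt}$ on nearly all of $\supp(\mu_{t'})$, the classifier $L(S)$ must disagree with $g_{\Dt'}$ on a set of $\mu_{t'}$-mass close to $1$. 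Choosing the relative $\mu_{t'}$-mass of the disagreement region sufficiently above $\tfrac{1}{2} + R_{t'}^*$ (which we can arrange freely in specifying $\Dt'$) and then applying a union bound yields $R(L(S), \Dt') > R(g_{\Dt'}, \Dt') + \tfrac{1}{4}$ with probability at least $\tfrac{1}{4}$ over $S \sim \Ds^n$, for infinitely many $n$.

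The main obstacle is verifying that $(\Ds', \Dt')$ genuinely satisfies all three conditions of the Statistical IRM assumption simultaneously, together with the underlying separability and margin requirements. The delicate step is in the contraction-failure case: the added source atom must be placed so that the induced source distribution $\Ds'^{\phi'}$ remains $(\marginx, \marginy)$-separated with appropriate margins, and so that $\phi'$ continues to source-preserve $\Ds'$, meaning the Bayes risks still line up exactly. Keeping $W(\Ds, \Ds') < \epsilon$ while preserving these structural invariants and producing a $g_{\Dt'}$ substantially different from $g_{\Dt}$ is where the technical bookkeeping will concentrate.
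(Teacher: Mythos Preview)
Your approach is genuinely different from the paper's, and the difference is instructive. The paper does \emph{not} use the hypothesis that $L$ succeeds on $\Dt$. Instead, it fixes a realizing $\phi_1 \in \Phirealize$ and a non-realizing $\phi_2 \in \Phis \setminus \Phirealize$, then (invoking the indomitability and infinite-preimage assumptions on $\Phi$ from Appendix~\ref{app:phi_assumptions_new}) builds a single perturbed source $\Ds'$ together with \emph{two} candidate targets $\Dt^1, \Dt^2$, each a point mass at the same $x$ but with opposite labels, such that $\phi_1^\alpha$ realizes SIRM on $(\Ds', \Dt^1)$ and $\phi_2^\alpha$ realizes SIRM on $(\Ds', \Dt^2)$. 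Since $L$ sees only source data, a randomization over $\{\Dt^1, \Dt^2\}$ forces expected risk $1/2$, and Markov finishes. Your single-target argument instead leverages the success hypothesis directly: because $L(S)$ agrees with $g_{\Dt}$ on most of $\supp(\mu_t)$, it must disagree with an adversarially relabeled $\Dt'$ supported there. This is more elementary and avoids the two-target machinery, but it proves only the theorem as stated, whereas the paper's argument actually does not need the success hypothesis at all.

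There is, however, a concrete gap in your Bayes-unification case. You set $\Ds' = \Ds$ and take $\phi'$ as the SIRM witness for $(\Ds', \Dt')$, but you must verify that $\phi'$ \emph{contracts} $(\Ds, \Dt')$. The Bayes-unification failure only gives you a witness pair with $d_{\phi'}(x_s, x_t) < \rho^{\phi'}/2$, while contraction demands every target feature lie within $\rho^{\phi'}/\Lambda$ of the source support, and $\Lambda > 2$ makes this strictly stronger. The fix is to refine your case split: if $\phi' \in \Phis \cap \Phic$, then contraction of $(\Ds, \Dt')$ is inherited automatically from contraction of $(\Ds, \Dt)$ since $\supp(\mu_{t'}) \subseteq \supp(\mu_t)$, and $\phi'$ must fail Bayes-unification, so your construction goes through. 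If $\phi' \notin \Phic$, fall into your contraction-failure case regardless of whether Bayes-unification also fails. In that second case, your acknowledged ``delicate step'' of placing the atom $x_0$ so that $\Ds'^{\phi'}$ stays separated with matching Bayes risk is exactly where the paper deploys the indomitability and infinite-preimage assumptions; without something like them, it is not obvious a suitable $x_0$ exists in $\cX$, so you should expect to invoke those assumptions as well.
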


Thus, in the case that some feature maps preserve the source but do not realize the Statistical IRM assumption, there is always some problem nearly identical problem instance where the Statistical IRM assumption is realized by $\Phi$, buy which causes a given learner to have unbounded sample complexity (for some choice of $\epsilon$ and $\delta$).

\section{Combining Labeled Samples from $\Ds$ and Unlabeled Samples from $\Dt$}\label{sec:unlabel}

We now consider the less restrictive unlabeled target data are also available. Here, a learner $L$ takes input $S \sim \Ds^n$ and $U \sim \mu_t^m$, and outputs a classifier $L(S, U):  \cX \to \cY$. 

The story given additional access to unlabeled data is similar to source-only setting: the Statistical IRM assumption alone is insufficient for guaranteeing successful generalization when additional unlabeled target data are available. In other words, the combination of labeled source and unlabeled target is generally insufficient for identifying a $\phi \in \Phi$ that realizes the Statistical IRM assumption. 

For a simple example to this end, we return to panel (c) of Figure \ref{fig:page_2_fig}. For the source and target distributions shown, it is evident that no amount of labeled data from the source and unlabeled data from target will allow us decide whether we should project data onto the $x$-axis or the $y$-axis. This is because the only difference between them is the manner in which $\Dt$ is \textit{labeled}. By contrast, the example depicted by Figure \ref{fig:page_2_fig} panel (b) illustrates a case in which the additional unlabeled data from $\Dt$ proves sufficient: because projecting onto the $x$-axis fails to map target points close source points, we can conclude that $\phi$ must be the projection onto the $y$-axis.

As in source-only setting, identifying a feature map realizing the Statistical IRM assumption requires testing the three conditions of Assumption \ref{assum:SIRM}. Understanding the utility of additional unlabeled target data is to realize that it allows the learner to not only test which feature maps preserve the source, but also which maps contract $\Ds$ and $\Dt$. On the other hand, it is insufficient to determine which feature maps Bayes-unify, as this notion intrinsically depends on labeling under $\Dt$. This motivates a similar sufficient condition for learnability as we saw in the previous section --namely, that all feature maps which both preserve the source and contract the source and target further Bayes unify. The following theorem shows that this is indeed a sufficient condition for learnability in this setting. 

\begin{theorem}\label{thm:unlabel_upper_bound}
Suppose the Statistical IRM Assumption holds, the distance dimension $\dd(\Phi) < \infty$, and that $\Phirealize = \Phis \cap \Phic.$ Then there is a learning rule $L$ such that for all $\epsilon, \delta > 0$, there exist $N$ and $M$, such that if $n \geq N$ and $m \geq M$, with probability $\geq 1-\delta$ over $S \sim \Ds^n$ and $U \sim \mu_t^m$, 
\begin{equation*}
R(L(S, U), \Dt) \leq R_t^* + \epsilon.
\end{equation*}
\end{theorem}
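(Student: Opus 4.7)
The approach is to extend the algorithm behind Theorem~\ref{thm:setting_2_upper_bound} with a contraction test built from the unlabeled sample $U$. Split $S$ into disjoint subsamples $(S_{tr}, S_{loss}, S_{margin})$ and retain those $\phi \in \Phi$ passing three empirical checks: (i) the classifier $\n_{S_{tr}}^{\phi}$ has small empirical source risk on $S_{loss}$; (ii) the points of $S_{margin}$ that $\n_{S_{tr}}^{\phi}$ labels differently are at least some estimated margin $\widehat{\marginx}^{\phi}$ apart in $d_\phi$; (iii) every $u \in U$ has some $x \in S_{tr}$ with $d_\phi(x, u) < \widehat{\marginx}^{\phi}/\coverconst$. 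Collect these surviving maps into a set $\Phi_{n,m}$ calibrated with slack $\epsilon_n \to 0$, choose any $\hhat \in \Phi_{n,m}$, and return $\n_{S_{tr}}^{\hhat}$. By Theorem~\ref{thm:k_nn_converge} it then suffices to show that, with high probability, $\hhat \in \Phirealize$ for $n, m$ large, since the hypothesis gives $\Phirealize = \Phis \cap \Phic$.

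\textbf{Uniform convergence via the distance dimension.} Each $k_n$-NN assignment in feature space is determined by finitely many evaluations of $\dist_\phi$ on tuples from the relevant samples, so the events underlying (i)--(iii) are indicators of sets defined by Boolean combinations of distance comparers. Using $\dd(\Phi) < \infty$ together with a Sauer--Shelah-style argument, one obtains uniform-in-$\phi$ concentration of: the population source risk of $\n_{S_{tr}}^{\phi}$ (controlling (i)); the induced source margin $\marginx^{\phi}$ and its empirical estimate (controlling (ii)); and $\sup_{z_t \in \supp(\mu_t^{\phi})} d_{\cZ}(z_t, \supp(\mu_s^{\phi}))$ (controlling (iii)). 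Consequently, any realizer $\phi^* \in \Phirealize$ passes all three empirical tests with high probability, so $\Phi_{n,m}$ is non-empty; and every $\phi \in \Phi_{n,m}$ is, distributionally, an $o(1)$-approximate source-preserver and an $o(1)$-approximate $\coverconst$-contractor of $(\Ds, \Dt)$.

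\textbf{From approximate to exact realization.} It remains to promote these approximations to $\hhat \in \Phis \cap \Phic = \Phirealize$. I would argue by contradiction using compactness of $\Phi$ in the supremum metric: if $\hhat$ failed with non-vanishing probability to realize the Statistical IRM Assumption along a subsequence $(n_k, m_k) \to \infty$, one extracts a cluster point $\phi_\infty \in \Phi$. Continuity of each $\phi$ on compact $\cX$ yields upper semicontinuity in $\phi$ of both $R(\bayes_{\Ds^{\phi}}, \Ds^{\phi})$ and $\sup_{z_t} d_{\cZ}(z_t, \supp(\mu_s^{\phi}))$, so passing to the limit forces $\phi_\infty \in \Phis \cap \Phic$ and hence $\phi_\infty \in \Phirealize$ by hypothesis. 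Applying Theorem~\ref{thm:k_nn_converge} at $\phi_\infty$, combined with continuity of the relevant functionals in $\phi$, gives $R(\n_{S_{tr}}^{\hhat}, \Dt) \to R_t^*$, contradicting the assumed failure. The main obstacle is precisely this semicontinuity-and-compactness step: the delicate piece is calibrating $\widehat{\marginx}^{\phi}$, which appears in both (ii) and (iii), so that its concentration rate matches that of the target-to-source distance estimate and the effective contraction constant stays strictly above $2$ in the limit, as required by Definition~\ref{defn:covers}.
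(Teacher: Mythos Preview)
Your proposal is close in spirit to the paper's approach—both rely on uniform convergence via the distance dimension and a compactness/subsequence argument—but the selection mechanism differs in a way that matters for exactly the ``delicate piece'' you flag at the end.

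The paper does not filter and then pick an arbitrary survivor. Instead, after restricting to $\Phi_\epsilon = \{\phi : \textsc{source\_loss}(\phi) < \epsilon\}$, it \emph{optimizes}
\[
\hat{\phi} \;=\; \argmax_{\phi \in \Phi_\epsilon}\;\bigl[\textsc{source\_margin}(\phi) - \coverconst \cdot \textsc{target\_margin}(\phi)\bigr],
\]
where $\textsc{target\_margin}$ estimates $\sup_{x_t \in \supp(\mu_t)} d_\phi(x_t, \supp(\mu_s))$ from $U$ and a fresh source subsample. Because a realizer $\phi^*$ lies in $\Phi_\epsilon$ and satisfies $\rho^{\phi^*} - \coverconst\beta^{\phi^*} = \gamma^* > 0$, the maximizer $\hat{\phi}$ automatically inherits $\textsc{source\_margin}(\hat{\phi}) - \coverconst\,\textsc{target\_margin}(\hat{\phi}) \gtrsim \gamma^*/2$. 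The compactness argument is then used to pass from this empirical statement to $\hat\rho - \coverconst\hat\beta \geq \gamma^*/4$ for the true quantities, so $\hat{\phi} \in \Phis \cap \Phic = \Phirealize$ with margin bounded below by a fixed constant; this is what lets Lemma~\ref{lem:strong_k_nn_converge} apply with an $N$ that does not depend on the random $\hat{\phi}$. In your filtering scheme, by contrast, an arbitrary survivor of tests (i)--(iii) need not have margin bounded away from zero, and since (ii) and (iii) are coupled through the same random $\widehat{\marginx}^{\phi}$ it is not clear that the limit of a bad subsequence lands strictly inside $\Phic$ rather than on its boundary (where the contraction ratio equals $\coverconst$ exactly). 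The argmax over the \emph{difference} of the two margin estimates is precisely the device that dissolves this calibration problem, because it anchors $\hat{\phi}$ to the strictly positive gap $\gamma^*$ of $\phi^*$.

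Two smaller points. First, the paper trains the final $k_n$-NN on a held-out subsample independent of everything used to select $\hat{\phi}$, so the convergence lemma can be invoked without dependence issues; in your proposal $S_{tr}$ is reused both for the selection tests and for the returned classifier. Second, the paper's target-margin estimator pairs each $u_i$ with its \emph{own} block of $l = \lfloor\sqrt{n}\rfloor$ fresh source points, so that the resulting indicators are i.i.d.\ and lie in the VC class $\cQ_l(\Phi)$ of Definition~\ref{defn:upgraded_q_stuff}; reusing $S_{tr}$ in your test (iii) would break the independence needed for that uniform bound.
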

 
 The learning rule is specified in detail in the Appendix. It proceeds by first selecting a feature map which both empirically preserves the source, and maps each unlabeled target in $U$ point close to some source point in $S$.  As above, it uses $k_n$-NN in the selected feature space to make predictions.

In accordance with the intuition developed above, the condition that $\Phirealize = \Phis \cap \Phic$ is necessary. The issue of course is that without access to labeled target data, testing whether a feature map Bayes-unifies is impossible. We formalize this via the following hardness result. 

\begin{theorem}\label{thm:lower_bound_setting_unlabeled}
Suppose $\Phi$ realizes the Statistical IRM Assumption for $\Ds$ and $\Dt$, and that there is some $\phi \in \Phis \cap \Phic$ for which $ \phi \not \in \Phirealize$. Then for all learners $L$, there exists a conditional data distribution, $\eta'$ such that the following hold:
\begin{enumerate}
    \item $\eta'(y|x) = \eta(y|x)$ for all $x \in \supp(\mu_s)$.
    \item $\Phi$ realizes the Statistical IRM Assumption for $\Ds$ and $\Dt' = (\mu_t, \eta')$.
    \item There exists $\delta, \epsilon > 0$ such that for arbitrarily large values of $n$ and $m$, with probability at least $\delta$ over $S \sim \Ds^n$ and $U \sim \mu_t^m$, 
    \begin{equation*}
    R(L(S, U), \Dt') > R(\bayes_{\Dt'}, \Dt') + \epsilon.
    \end{equation*}
\end{enumerate}
\end{theorem}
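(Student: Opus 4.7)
The plan is to construct an alternative target $\Dt' = (\mu_t, \eta')$ on which the ``problematic'' map $\phi$ from the hypothesis itself realizes the Statistical IRM Assumption, and then apply a two-problem indistinguishability argument: since the learner only observes $(S,U)$, whose joint law is identical under the original and alternative instances, it cannot simultaneously be good on both $\Dt$ and $\Dt'$.

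The construction of $\eta'$ proceeds as follows. Set $\eta' = \eta$ on $\supp(\mu_s)$. For each $x_t \in \supp(\mu_t)$, contraction of $\phi$ furnishes a witness $x_s \in \supp(\mu_s)$ with $d_\phi(x_s, x_t) < \rho^\phi / \coverconst$; define $\eta'(\cdot \mid x_t)$ as the point mass on $\bayes_\Ds(x_s)$. This is consistent: any two source points within $\phi$-distance $\rho^\phi/\coverconst + \rho^\phi/2 < \rho^\phi$ of each other must share a Bayes label, since source-preservation forces $\Ds^\phi$ to be separated with margin $\rho^\phi$. Hence every sufficiently close $x_s$ produces the same assignment, and $\phi$ Bayes-unifies $(\Ds, \Dt')$ by construction. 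Source-preservation and contraction depend only on $\mu_s$, $\mu_t$, and $\eta_s$ and are inherited, so $\phi \in \Phirealize$ for the alternative instance, verifying conditions (1) and (2) of the theorem.

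Next I produce a disagreement set of positive $\mu_t$-mass. Because $\phi \notin \Phirealize$ for the original problem, there exist $x_s^* \in \supp(\mu_s)$ and $x_t^* \in \supp(\mu_t)$ with $d_\phi(x_s^*, x_t^*) < \rho^\phi/2$ and $\bayes_\Ds(x_s^*) \neq \bayes_\Dt(x_t^*)$, so $\bayes_{\Dt'}(x_t^*) \neq \bayes_\Dt(x_t^*)$. Continuity of $\phi$, local constancy of $\bayes_\Dt$ near $x_t^*$ (via the margin of $\Dt$), and the open-measure assumption on $\mu_t$ extend this pointwise disagreement to an open neighborhood $V$ with $c := \mu_t(V) > 0$; writing $A := \{x : \bayes_\Dt(x) \neq \bayes_{\Dt'}(x)\}$, we have $\mu_t(A) \geq c$. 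Since $S \sim \Ds^n$ depends only on $\Ds$ and $U \sim \mu_t^m$ only on $\mu_t$, the law of $L(S,U)$ is identical under both instances. For any fixed $h$, $\ind[h(x)\neq \bayes_\Dt(x)] + \ind[h(x) \neq \bayes_{\Dt'}(x)] \geq 1$ on $A$, and the label-margin conversion (using the margin $\marginy$ of $\Dt$ and the margin $1$ afforded by the deterministic $\eta'$) yields
$$\bigl(R(h,\Dt) - R_t^*\bigr) + \bigl(R(h,\Dt') - R(\bayes_{\Dt'}, \Dt')\bigr) \;\geq\; \marginy \cdot \mu_t(A) \;\geq\; \marginy c.$$
Taking expectations over $(S,U)$ at $h = L(S,U)$, at least one of the two expected excess risks is $\geq \marginy c / 2$. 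Designate the corresponding target as the theorem's $\Dt'$: either our construction, or, if the larger excess risk occurs on the original side, take $\eta' = \eta$ so that $\Dt' = \Dt$ and conditions (1)--(2) hold trivially (since $\phi^*$ already realizes IRM there). A Markov-type inequality on the $[0,1]$-valued excess risk then produces $\delta, \epsilon > 0$ such that with probability $\geq \delta$, the excess risk of $L(S,U)$ on the chosen $\Dt'$ exceeds $\epsilon$, and the argument is $n,m$-independent so this holds for arbitrarily large sample sizes.

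The main obstacle will be the well-definedness and regularity of the alternative instance in the construction step: one must confirm that the deterministic-relabeling produces a $\Dt'$ compatible with the paper's standing regularity conditions and that $\phi$ genuinely realizes IRM on $(\Ds, \Dt')$, relying critically on the buffer $\coverconst > 2$ to keep the consistency argument strict. The measure-theoretic upgrade in the second step (from a pointwise to positive-measure Bayes disagreement) is also subtle and depends on openness of $\mu_t$ and continuity of $\phi$; by contrast, the indistinguishability and Markov half of the proof is conceptually standard, though the choice between the two candidate alternatives is what allows the conclusion to be phrased in terms of a single $\Dt'$.
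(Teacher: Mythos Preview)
Your proposal is correct and takes essentially the same approach as the paper: build an alternative target labeling induced by the problematic source-preserving, contracting map, then run a two-instance indistinguishability/averaging argument exploiting that the law of $(S,U)$ depends only on $\Ds$ and $\mu_t$. The paper's variant constructs \emph{two} deterministic alternatives $\Dt^1,\Dt^2$ (one from each of $g^{\phi_1},g^{\phi_2}$) and randomizes between them rather than pitting a single construction against the original $\Dt$, but this is cosmetic; your explicit handling of condition~(1) and of the positive-measure upgrade is, if anything, more careful than the paper's own terse proof.
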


Theorem \ref{thm:lower_bound_setting_unlabeled} shows that no combination of embedding class $\Phi$ and learner $L$ can circumvent the impossibility of testing Bayes unification with unlabeled target data. For any embedding class and learning algorithm, one can always find a pair of source and target distributions on which $\Phi$ realizes the \Assumption, but on which the learning algorithm will fail.

\section{Efficient Use of Labeled Samples from $\Dt$}

\begin{algorithm}[t!]\label{alg:setting_1}
\caption{Selection of an Appropriate Feature Map via Target Loss Validation}
\begin{algorithmic}[1]
\Procedure{\nERM}{$S \sim \Ds^n$, $T \sim \Dt^m$}
 \State $\hat{\phi} = \argmin_{\phi \in \Phi} \frac{1}{m}\sum_{(x,y) \in T} \ind\left(\n^{\phi}_{S}\neq y\right)$ 
 \vspace{-1.5mm}
 \State \textbf{return } $\n^{\hat{\phi}}_{S}$
\EndProcedure
\end{algorithmic}
\end{algorithm}

The discussion above implies that even under the Statistical IRM assumption, there are many situations where label target data is required for generalization. In such cases, we would hope that we can exploit the information encoded in 
the Statistical IRM Assumption to achieve generalization through labeled source data and a small amount of labeled target data. In this section we show that the Statistical IRM assumption allows for significant convergence rate speed-ups in many settings.

Recall that the Statistical IRM theorem states that given a realizing feature map, generalization to the target can be accomplished purely through source data -- the limitation of a lack is labeled target data is the difficulty in identifying such a feature map.
This inspires the strategy of allocating all of the labeled target data towards determining a realizing feature map.

 In this spirit, we analyze the natural scheme of constructing a classifier by composing nearest-neighbors trained solely on source data with the map $\phi \in \Phi$ that minimizes the empirical risk over $T \sim \cD_t^m$, finding that the number of target examples required for guarantees can be controlled in terms of the ``distance dimension'' of the class $\partial(\Phi)$. 

\begin{theorem}\label{thm:setting_1_upper_bound}
Suppose $\Phi$ realizes the Statistical IRM assumption. Then for every $\epsilon, \delta > 0$, there exists $N$ such that if 
\begin{equation*}
n \geq N, m \geq \Omega\left(\frac{\partial(\Phi)\log \left(n+\partial(\Phi)\right) + \log\frac{1}{\delta}}{\epsilon^2} \right),
\end{equation*}
then with probability at least $1-\delta$ over $S \sim \Ds^n$, $T \sim \Dt^{m}$, 
\begin{equation*}
R(\n^{\hat{\phi}}_{S}, \Dt) \leq R_t^* + \epsilon,
\end{equation*}
where $\n^{\hat{\phi}}_{S}$ is output of $\nERM(S, T)$.
\end{theorem}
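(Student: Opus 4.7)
The plan is a two-step analysis: first invoke the Statistical IRM Theorem to show that any realizing $\phi^* \in \Phirealize$ yields a classifier with near-optimal target risk from source data alone, then show via uniform convergence over the target sample that empirical risk minimization over $\Phi$ cannot do much worse than $\phi^*$. Fix $\epsilon, \delta > 0$, and apply Theorem \ref{thm:k_nn_converge} to some $\phi^* \in \Phirealize$: there is an $N_1$ so that for $n \geq N_1$, with probability $\geq 1-\delta/2$ over $S \sim \Ds^n$, $R(\n_S^{\phi^*}, \Dt) \leq R_t^* + \epsilon/3$. Conditioning on this event, it suffices to bound $R(\n_S^{\hat\phi}, \Dt) - R(\n_S^{\phi^*}, \Dt)$ by $2\epsilon/3$ with high probability over the target draw $T \sim \Dt^m$.

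The central estimate is a growth function bound on the loss class $\{(x,y) \mapsto \ind(\n_S^\phi(x) \neq y) : \phi \in \Phi\}$, conditional on $S$. For any query point $x$, the value $\n_S^\phi(x)$ is determined entirely by the ranking of the $n$ feature-space distances $\{d_\phi(x, x_i)\}_{i=1}^n$ (with the tie-breaking rule baked into $\n$), which is in turn specified by the $\binom{n}{2}$ binary queries $\dist_\phi(x, x_i, x, x_j)$ drawn from the induced distance comparer class $\dist\Phi$. Given $m$ target queries, the total number of distinct input tuples fed to members of $\dist\Phi$ is at most $m n^2$, so Sauer's lemma applied to $\dist\Phi$ --- which has VC dimension $\partial(\Phi)$ by definition --- gives at most $(emn^2/\partial(\Phi))^{\partial(\Phi)}$ distinct query-response vectors, and hence the same bound on the growth function of the loss class. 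The source labels enter only as fixed constants used to convert the identity of the $k_n$ nearest neighbors into an output class, not as inputs to $\dist_\phi$.

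Plugging this growth function bound into the standard Vapnik-Chervonenkis uniform-convergence inequality yields, conditional on $S$, that with probability $\geq 1-\delta/2$ over $T \sim \Dt^m$,
\begin{equation*}
\sup_{\phi \in \Phi} \bigl| \hat R_T(\n_S^\phi) - R(\n_S^\phi, \Dt) \bigr| \leq \epsilon/3,
\end{equation*}
provided $m \gtrsim (\partial(\Phi) \log(m n^2) + \log(1/\delta))/\epsilon^2$; a routine fixed-point manipulation absorbs the internal $\log m$ into $\log(n + \partial(\Phi))$ and yields the stated sample threshold. On this event, the ERM property of $\hat\phi$ chains with the bound above:
\begin{equation*}
R(\n_S^{\hat\phi}, \Dt) \leq \hat R_T(\n_S^{\hat\phi}) + \epsilon/3 \leq \hat R_T(\n_S^{\phi^*}) + \epsilon/3 \leq R(\n_S^{\phi^*}, \Dt) + 2\epsilon/3 \leq R_t^* + \epsilon.
\end{equation*}

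The main obstacle is the growth function step: one must carefully verify that every bit of information needed to evaluate $\ind(\n_S^\phi(x) \neq y)$ really is a deterministic function of the outputs of members of $\dist\Phi$ on the explicit $m n^2$ tuples, handling the multiclass output, the fixed tie-breaking, and the fact that the source labels are fixed data rather than arguments of $\dist_\phi$. A minor technical point is that the loss class depends on the (random) source $S$, so the uniform convergence is proved conditionally on $S$ and then combined by a union bound with the Statistical IRM event. The remaining ERM bookkeeping and log-factor cleanup are standard.
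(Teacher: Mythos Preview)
Your proposal is correct and follows essentially the same route as the paper: invoke the Statistical IRM Theorem for $\phi^*$, establish uniform convergence of target empirical risks over $\{\n_S^\phi : \phi \in \Phi\}$ via a Sauer's-lemma counting argument on the distance comparer class $\dist\Phi$, and chain through the ERM property. The paper phrases the complexity step as a VC-dimension bound on the conditional loss class $\cH_{\hat S}$ rather than a direct growth-function bound, and splits $\epsilon$ in halves rather than thirds, but the underlying argument is identical.
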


Thus, the amount of labeled target data required for generalization when $\Phi$ realizes the Statistical IRM assumption can be largely controlled through our complexity measure on the class $\Phi$. We say ``largely'' given that $m$, the amount of data required from $\Dt$, has a logarithmic dependence on $n$, the amount of data drawn from $\Ds$. This implies a near distributional-independence between source and target in the sample complexity. 

We note that the above margin assumptions are not required for the analysis leading to Theorem \ref{thm:setting_1_upper_bound}.  Thus -- comparing e.g. to rates of convergence under the canonical Tsybakov noise assumption in $\mathcal{X} = \mathbb{R}^d$, under which nonparametric classifiers necessarily incur rates of $\tilde{\Omega}(m^{-1/1+d})$  -- the guarantees of Theorem \ref{thm:setting_1_upper_bound} represent significant convergence rate speed-ups over naively training a non-parametric classifier with target data in many cases where the distance dimension $\partial(\Phi)$ is polynomial in the dimension of the instance space \cite{audibert2011}.

\section{Discussion}

In this work, we study the problem of distribution shift under a variant of the IRM assumption, wherein it is known that a feature map in a class $\Phi$ unifies classification on source and target. We investigate the identifiability of such maps, characterizing learnability in settings where worst-case approaches indicate that learning should be impossible or expensive. 

Our work suggests that the study of IRM-like assumptions is a promising direction for shedding light on new situations where guaranteeing generalization under distribution shift is possible. It also highlights that a primary issue in learning under IRM-like assumptions may be the statistical identifiability of suitable feature maps. 

\textbf{Acknowledgements: }
This work was supported by the National Science Foundation under the following grants: NSF CIF-2402817, SaTC-2241100, CCF-2217058, and ARO-MURI W911NF2110317. 

RB was also partially supported by the German Research
Foundation through the Cluster of Excellence “Machine
Learning - New Perspectives for Science" (EXC 2064/1 number
390727645)

\bibliographystyle{unsrt}
\bibliography{main}


\newpage
\appendix
\section{Further Notation}\label{app:notation}
Given $x \in \cX$, we let $B(x, r) = \{x': d(x, x') \leq r\}$ denote the closed ball centered at $x$ of radius $r$. For a feature map, $\phi \in \Phi$, we also let $B_\phi(x, r) = \{x': d_\phi(x ,x') \leq r\}$ denote the set of all points with distance (under $\phi$) at most $r$ from $x$.

Recall that for $\phi \in \Phi$, we let $d_\phi$ denote the metric over $\cX$ induced by $\phi$, i.e. $d_\phi(x, x') = d\left(\phi(x), \phi(x')\right)$. We extend this in the natural way to sets, letting 
$$
d_\phi(A, B) = \inf_{a \in A, b \in B}d_\phi(a, b).$$  
Finally, for a pair of feature maps $\phi, \phi'$, we let $d(\phi, \phi')$ denote the supremum metric between $\phi$ and $\phi'$. That is, $$d(\phi, \phi') = \sup_{x \in \cX}d_{\cX}\left(\phi(x), \phi'(x)\right).$$

\section{Further Technical Assumptions}\label{app:measure}

\subsection{Lebesgue Differentiation Theorem}
 We assume that for any finite Borel measure $\mu$ over $\cX$, the Lebesgue differentiation theorem holds. That is, for all measurable functions $f: \cX \to \R$, up to a null set under $\mu$, 
\begin{equation*}
\lim_{r \to 0^+} \frac{1}{\mu(B(x,r))}\int f(x)d\mu(x) = f(x).
\end{equation*}

\subsection{Open Measures}
As referenced in Section \ref{sec:prelims} above, we assume that our Borel measures satisfy a further regularity condition -- namely, that they are open measures. 
\begin{definition}\label{def:open_measure}
A Borel measure, $\mu$, over metric space $(\mathcal{M}, d)$ is \textbf{open}, if for all measurable sets $A$, $\mu(A) > 0$ if only if there exists $x \in \mathcal{M}$ and $r > 0$ such that $B(x, r) \subseteq A$ and $\mu(B(x(x, r)) > 0$. 
\end{definition}

A very typical example of such a measure is any distribution that has a finite density function. In this work, we will restrict ourselves to considering open measures with the following assumption: $\mu_s$ and $\mu_t$ are open, and for all $\phi \in \Phi$, the induced source and target measures, $\mu_s^\phi$ and $\mu_t^\phi$ are open over the metric space $(\phi(\cX), d) \subseteq (\cZ, d)$. Here we are noting that $\mu_s^\phi$ and $\mu_t^\phi$ are only non-zero over subsets of the image of $\phi$, $\phi(\cX)$, and thus we restrict our attention to $\phi(\cX)$ when considering openness.

This technical assumption allows us to simplify our results as it prohibits cases in which $\mu_t^\phi$ can be a pathological distribution that concentrates in an area of $\supp(\mu_s^\phi)$ that leads to bad generalization. We also believe that such an assumption is relatively mild -- all distributions over $\cZ$ are arbitrarily close to open Borel measures -- we can simply add spherical noise to each sampled point. 

\subsection{Assumptions on $\Phi$}\label{app:phi_assumptions_new}

We include two further technical assumptions about $\Phi$. We begin by assuming that all feature maps send an \textit{infinite} number of points to a given $z \in \mathcal{Z}$. 

\begin{assumption}\label{assumption:infinite_inverse}
For all $x \in \mathcal{X}$ and all $\phi \in \Phi$, the set of points $x'$ that have the same image as $x$ in $\mathcal{Z}$ under $\phi$ is infinite. That is, $$|\{x': \phi(x') = \phi(x)\}| = \infty.$$
\end{assumption}

Observe that this assumption is clearly met by the examples given in Section \ref{sec:feature_maps_in_main_body}. Furthermore, it is likely to be met by any reasonable family of continuous maps that perform any kind of dimension reduction.

Next, we define \textit{dominance}, which will be useful for formulating our other assumption.

\begin{definition}
We say that feature map $\phi_1$ \textbf{dominates} feature map $\phi_2$ at point $x$ if $$\{x': \phi_1(x') = \phi_1(x)\} \supseteq \{x': \phi_2(x') = \phi_2(x)\}.$$ 
\end{definition}

We now define an embedding class to be \textit{indomitable} when it avoids instances of one feature map dominating another.

\begin{definition}
$\Phi$ is \textbf{indomitable} if for all distinct $\phi_1, \phi_2 \in \Phi$ and for all $x \in \mathcal{X}$, the following holds. For all $\epsilon > 0$, there exists maps $\phi_1^\epsilon, \phi_2^\epsilon \in \Phi$ such that:
\begin{enumerate}
	\item $d(\phi_1, \phi_1^\epsilon), d(\phi_2, \phi_2^\epsilon) < \epsilon$.
	\item $\phi_1^\epsilon$ does \textit{not} dominate $\phi_2^\epsilon$ at $x$. 
	\item $\phi_2^\epsilon$ does \textit{not} dominate $\phi_1^\epsilon$ at $x$. 
\end{enumerate}
\end{definition}

We will now assume that $\Phi$ is indeed indomitable. 
\begin{assumption}\label{assumption:indomitable}
$\Phi$ is indomitable. 
\end{assumption}

Observe that this assumption is satisfied by both examples of feature maps given in Section \ref{sec:feature_maps_in_main_body}. More generally, the fact that our definition permits a lack of dominance to hold for \textit{some} two maps that are close to $\phi_1$ and $\phi_2$ makes our definition mild enough to hold for most continuous classes of feature maps.

%
%

\section{$k_n$-nearest neighbors}\label{app:knn}

First, we fix $k_n$ as a sequence of integers with the following properties.
\begin{definition}
Let $k_n$ be a sequence of integers so that $\lim_{n \to \infty} \frac{k_n}{\log n} = \infty$, and $\lim_{n \to \infty} \frac{k_n}{n} = 0$. 
\end{definition}
Observe that $k_n = \log^2n$ would suffice as an example of such a series.

Next, our goal is to define the $k_n$-nearest neighbors classifier over a labeled data set of of $n$ points, $S = \{(x_1, y_1), \dots, (x_n, y_n)\}$. To do so, we begin by describing a tie-breaking procedure used in cases where training points are equidistant from a given test point. 

\begin{definition}
An \textbf{ordering} $\pi$, over a dataset $S = \{(x_1, y_1), \dots, (x_n, y_n)\}$ is any ordered permutation of $S$. We say that $(x_i, y_i) <_\pi (x_j, y_j)$ if $(x_i, y_i)$ occurs before $(x_j, y_j)$ in the permutation.
\end{definition}

We now show how to use $\pi$ to break ties when computing nearest neighbors. 

\begin{definition}\label{defn:pi_induced_ordering}
Let $x \in \cX$. Let $\pi$ be an ordering over dataset $S$. For $(x_i, y_i), (x_j, y_j) \in S$, we say that $d(x, x_i) <_\pi d(x, x_j)$ if either of the two conditions hold:
\begin{enumerate}
	\item $d(x, x_i) < d(x, x_j)$.
	\item $d(x, x_i) = d(x, x_j)$ and $(x_i, y_i) <_\pi (x_j, y_j)$. 
\end{enumerate}
\end{definition}

In essence, ties are broken by choosing the datapoint that appears earlier in the ordering. We now define a nearest neighbor as follows.

\begin{definition}\label{defn:pi_k_n_nearest_neighbors}
Let $S = \{(x_1, y_1), \dots, (x_n, y_n)\}$ be a dataset, and let $\pi$ be an ordering of $S$. For $x \in \cX$, we say that $(x_i, y_i)$ is a $k_n$-nearest neighbor of $x$ if $$|\{j: d(x, x_j) <_\pi d(x, x_i)\}| < k_n.$$ We also let $S_{k_n}^\pi(x)$ denote the set of all $k_n$-nearest neighbors of $x$ when using the ordering $\pi$.
\end{definition}

Observe that by construction, $|S_{k_n}^\pi(x)| = k_n$. This is because the ordering $<_\pi$ allows us to strictly order points based on their distances from $x$ with ties broken by $\pi$.

We are now ready to define the nearest neighbors classifier. 
\begin{definition}
Let $S = \{(x_1, y_1), \dots, (x_n, y_n)\}$ be a dataset, and $\pi$ an ordering over $S$. Then for $x \in \cX$, we define $$\n_{S, \pi}(x) = \argmax_{y \in \cY} \sum_{(x_i, y_i) \in S_{k_n}^\pi(x)} \ind(y = y_i).$$ Here, we break ties in $\cY$ arbitrarily (which could be done with an ordering of $\cY$). 
\end{definition}

Throughout the paper, we typically ommit $\pi$ from our notation for $\n_{S, \pi}(x)$. This is because in all cases, we assume that some ordering $\pi$ is implicitly chosen (independent of the data points) ahead of time. 

\subsection{Composing with feature maps.}

We now define the classifier $\n_S^\phi$, where $\phi: \cX \to \cZ$ is a feature map. One important detail for doing so, is that we will continue to use an ordering over $S$, rather than an ordering over $\phi(S) = \{(\phi(x_1), y_1), \dots (\phi(x_n), y_n)\}$. This will allow us to use a single ordering throughout all of our learning algorithms that deal with learning a feature map. 

Recall that for any feature map, $\phi$, $d_\phi: \cX^2 \to [0, \infty)$ denotes the distance metric $$d_\phi(x, x') = d_\cZ\left(\phi(x), \phi(x')\right).$$
Using this, we give analogs to Definitions \ref{defn:pi_induced_ordering} and \ref{defn:pi_k_n_nearest_neighbors} by essentially replacing $d$ with $d_\phi$.

\begin{definition}\label{defn:pi_induced_ordering}
Let $x \in \cX$ and $\phi \in \Phi$. Let $\pi$ be an ordering over dataset $S$. For $(x_i, y_i), (x_j, y_j) \in S$, we say that $d_\phi(x, x_i) <_\pi d_\phi(x, x_j)$ if either of the two conditions hold:
\begin{enumerate}
	\item $d_\phi(x, x_i) < d_\phi(x, x_j)$.
	\item $d_\phi(x, x_i) = d_\phi(x, x_j)$ and $(x_i, y_i) <_\pi (x_j, y_j)$. 
\end{enumerate}
\end{definition}

\begin{definition}\label{defn:pi_k_n_nearest_neighbors}
Let $\phi \in \Phi$. Let $S = \{(x_1, y_1), \dots, (x_n, y_n)\}$ be a dataset, and let $\pi$ be an ordering of $S$. For $x \in \cX$, we say that $(x_i, y_i)$ is a $k_n$-nearest neighbor of $x$ under $\phi$ if $$|\{j: d_\phi(x, x_j) <_\pi d_\phi(x, x_i)\}| < k_n.$$ We also let $S_{k_n, \phi}^\pi(x)$ denote the set of all $k_n$-nearest neighbors of $x$ when using the ordering $\pi$.
\end{definition}

Finally, we define $\n_S^\phi$ as follows.

\begin{definition}
Let $\phi \in \Phi$, let $S = \{(x_1, y_1), \dots, (x_n, y_n)\}$ be a dataset, and $\pi$ an ordering over $S$. Then for $x \in \cX$, we define $$\n_{S, \pi}^\phi(x) = \argmax_{y \in \cY} \sum_{(x_i, y_i) \in S_{k_n, \phi}^\pi(x)} \ind(y = y_i).$$ Here, we break ties in $\cY$ arbitrarily (which could be done with an ordering of $\cY$). 
\end{definition}

The key point of this definition is that all tie-breaking mechanisms are done \textit{independently} of $\phi$. In particular, we have the following.

\begin{lemma}\label{lemma:ties_dont_matter}
Let $S$ be a dataset of $n$ points, and $\pi$ an ordering over $S$. Let $\phi, \phi'$ be two features maps in $\Phi$. Suppose for $x \in \cX$ that for all $i, j$, $d_\phi(x, x_i) \leq d_\phi(x, x_j)$ if and only if $d_{\phi'}(x, x_i) \leq d_{\phi'}(x, x_j)$. Then $\n_{S, \pi}^\phi(x) = \n_{S, \pi}^{\phi'}(x)$.
\end{lemma}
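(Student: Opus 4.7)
The plan is to show that the two feature maps induce identical total orderings on the training points (from the perspective of $x$), from which the equality of $k_n$-NN sets, and ultimately the equality of the classifiers, follows.

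First I would unpack the hypothesis. The biconditional $d_\phi(x,x_i) \leq d_\phi(x,x_j) \iff d_{\phi'}(x,x_i) \leq d_{\phi'}(x,x_j)$ applied both ways implies two things: (i) strict inequalities are preserved, i.e., $d_\phi(x,x_i) < d_\phi(x,x_j) \iff d_{\phi'}(x,x_i) < d_{\phi'}(x,x_j)$; and (ii) equalities are preserved, i.e., $d_\phi(x,x_i) = d_\phi(x,x_j) \iff d_{\phi'}(x,x_i) = d_{\phi'}(x,x_j)$. This is the only nontrivial observation in the argument.

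Next I would verify that $d_\phi(x,x_i) <_\pi d_\phi(x,x_j) \iff d_{\phi'}(x,x_i) <_\pi d_{\phi'}(x,x_j)$ by checking the two clauses of Definition~\ref{defn:pi_induced_ordering} directly. The first clause (strict distance inequality) agrees for both maps by (i). The second clause (equal distances with $\pi$-order as tiebreak) agrees for both maps by (ii), combined with the fact that the tiebreak via $\pi$ depends only on the dataset ordering and not on the feature map. Hence the two feature maps induce the same strict total order $<_\pi$ on $S$ relative to $x$.

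Since $S_{k_n,\phi}^\pi(x)$ and $S_{k_n,\phi'}^\pi(x)$ are defined as those elements $(x_i,y_i) \in S$ with fewer than $k_n$ dataset points strictly ahead under $<_\pi$ (Definition~\ref{defn:pi_k_n_nearest_neighbors}), the equality of the two strict total orders immediately yields $S_{k_n,\phi}^\pi(x) = S_{k_n,\phi'}^\pi(x)$. The classifier output is then the argmax of the same count over the same neighbor set, so $\n_{S,\pi}^\phi(x) = \n_{S,\pi}^{\phi'}(x)$. I do not anticipate a main obstacle: the lemma is essentially a bookkeeping statement, and the only care needed is to extract both the strict-inequality and the equality preservation from the hypothesized biconditional before appealing to the definitions.
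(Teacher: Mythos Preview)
Your proposal is correct and follows the same approach as the paper, which simply states that the result is immediate from the definitions since ties are broken identically for both $\phi$ and $\phi'$. You have carefully unpacked exactly what ``immediate'' means here---extracting the preservation of strict inequalities and equalities from the biconditional and then tracing through Definitions~\ref{defn:pi_induced_ordering} and~\ref{defn:pi_k_n_nearest_neighbors}---but there is no substantive difference in strategy.
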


\begin{proof}
This is immediate from the previous definitions as all ties are broken in an identical manner for both $\phi$ and $\phi'$. 
\end{proof}

As before, to avoid cumbersome notation, we will assume that an ordering, $\pi$, of $S$ is fixed. 

%
%
%

\section{Induced conditional distributions}\label{app:Lebesgue}

In this section, we rigorously define the conditional data distribution of $\cD^\phi$. Recall that if $(X, Y) \sim \cD$ denote the random variables corresponding to $\cD$, then $\cD^\phi$ is defined as the data distribution $(\phi(X), Y)$, where $\phi: \cX \to \cZ$ is a feature map. We write $\cD = (\mu, \eta)$, where $\mu$ denotes the measure corresponding to $X$ over $\cX$, and $\eta$ is the conditional data distribution, $p(y|X)$. Our goal in this section is to similarly write $\cD^\phi = (\mu^\phi, \eta^\phi)$.

First, observe that for any measurable subset $B \subseteq \cZ$, $\mu^\phi(B) = \mu\left(\phi^{-1}(B)\right)$. This directly follows from the definition of the random variable $\phi(X)$. 

Next, to define $\eta^\phi$, first recall that $\eta(y|x)$ denotes the probability that $Y=y$ given that $X=x$. By assumption this is well defined for all $x \in \cX$ and $y \in \cY$, and moreover for any $y \in \cY$ the function $\cX \to [0, 1]$ defined by $x \mapsto \eta(y|x)$ is measurable. To define $\eta^\phi$, we first define $\upsilon^y$ for all $y \in \cY$ as follows. 
\begin{definition}
$\upsilon^y$ is a measure over $\cZ$ so that for all measurable sets $B$, $$\upsilon^y(B) = \int_{\phi^{-1}(B)}\eta(y|x)d\mu(x).$$ 
\end{definition}
The fact that $\upsilon^y$ is a well-defined measure follows directly from the rules of integration. In essence, $\upsilon^y(B)$ is the probability. of observing $(X, Y)$ with $\phi(X) \in B$ and $Y = y$. We now show the following: 
\begin{lemma}
$\upsilon^y$ is absolutely continuous with respect to $\mu^\phi$ for all $y$
\end{lemma}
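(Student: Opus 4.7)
The plan is to directly verify the definition of absolute continuity: I want to show that if $\mu^\phi(B) = 0$ for some measurable $B \subseteq \cZ$, then $\upsilon^y(B) = 0$. The approach is to unpack the hypothesis using $\mu^\phi(B) = \mu(\phi^{-1}(B))$ and then bound the defining integral for $\upsilon^y(B)$ by exploiting the fact that the conditional probability $\eta(y|x)$ is bounded above by $1$.

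First I would fix an arbitrary measurable $B \subseteq \cZ$ with $\mu^\phi(B) = 0$. By the observation already noted in the excerpt, $\mu^\phi(B) = \mu\bigl(\phi^{-1}(B)\bigr)$, so $\phi^{-1}(B)$ is a $\mu$-null set in $\cX$. Next I would appeal to the definition
\[
\upsilon^y(B) = \int_{\phi^{-1}(B)} \eta(y|x)\, d\mu(x),
\]
and use the fact that $0 \leq \eta(y|x) \leq 1$ pointwise (since $\eta(y|x)$ is a probability) to dominate the integrand by the constant function $1$. This immediately yields $0 \leq \upsilon^y(B) \leq \mu(\phi^{-1}(B)) = 0$, so $\upsilon^y(B) = 0$, which is precisely the condition $\upsilon^y \ll \mu^\phi$.

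There is no real obstacle here: the only ingredient beyond the definitions is the measurability and boundedness of $\eta(y|\cdot)$, both of which are assumed in the setup of Section~\ref{sec:prelims}. I would just be careful to note that the integrand is nonnegative, so the resulting inequality pins $\upsilon^y(B)$ to $0$ rather than merely bounding it from above by a nonpositive quantity. Once this is established, the Radon--Nikodym theorem would give the existence of a density $\eta^\phi(y|\cdot) := d\upsilon^y/d\mu^\phi$, which is presumably how $\eta^\phi$ is defined in the subsequent material.
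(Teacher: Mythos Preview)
Your proof is correct and follows essentially the same idea as the paper's: both use the pointwise bound $\eta(y|x)\le 1$ to obtain $\upsilon^y(B)\le \mu^\phi(B)$. The only cosmetic difference is that the paper verifies the $\epsilon$--$\delta$ formulation of absolute continuity (taking $\delta=\epsilon$), while you verify the equivalent null-set formulation.
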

\begin{proof}
This immediately follows from the fact that $\eta(y|x) \leq 1$ for all $y, x$. Thus for any measurable set $B$, $$\upsilon^y(B) = \int_{\phi^{-1}(B)}\eta(y|x)d\mu(x) \leq \int_{\phi^{-1}(B)}d\mu(x) = \mu(\phi^{-1}(B)) = \mu^\phi(B).$$ Thus for any $\epsilon > 0$, we can simply choose $\delta = \epsilon$ so that $\mu^\phi(B) < \delta \implies \upsilon^y(B) < \epsilon$. 
\end{proof}
We now use the Radon-Nikoym theorem on $\upsilon^y$ to define $\eta^\phi$.
\begin{lemma}
For all $y \in \cY$, there exists a measurable function $f^y: \cZ \to [0, 1]$ such that $$\upsilon^y(B) = \int_{B}f^y(z)d\mu^\phi(z),$$ for all measurable sets $B$. 
\end{lemma}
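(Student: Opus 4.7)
The plan is to apply the Radon-Nikodym theorem and then sharpen the conclusion to force the density into $[0,1]$. The previous lemma has already established absolute continuity of $\upsilon^y$ with respect to $\mu^\phi$, which is the main hypothesis needed. Since $\cZ$ is a compact metric space equipped with a finite Borel measure $\mu^\phi$, the measure $\mu^\phi$ is $\sigma$-finite, so Radon-Nikodym applies and yields a measurable function $\tilde{f}^y: \cZ \to [0, \infty)$, defined $\mu^\phi$-almost everywhere, such that
\begin{equation*}
\upsilon^y(B) = \int_B \tilde{f}^y(z) \, d\mu^\phi(z)
\end{equation*}
for all measurable $B \subseteq \cZ$.

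Next, I would upgrade the range from $[0,\infty)$ to $[0,1]$. The key input is the inequality $\upsilon^y(B) \leq \mu^\phi(B)$ for every measurable $B$, which was established in the proof of the preceding lemma (since $\eta(y|x) \leq 1$). Consider the set $A = \{z \in \cZ : \tilde{f}^y(z) > 1\}$, which is measurable. If $\mu^\phi(A) > 0$, then
\begin{equation*}
\upsilon^y(A) = \int_A \tilde{f}^y(z)\, d\mu^\phi(z) > \int_A 1 \, d\mu^\phi(z) = \mu^\phi(A),
\end{equation*}
contradicting $\upsilon^y(A) \leq \mu^\phi(A)$. Hence $\tilde{f}^y \leq 1$ holds $\mu^\phi$-almost everywhere.

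Finally, to obtain a function $f^y: \cZ \to [0,1]$ defined everywhere (not just $\mu^\phi$-a.e.), I would set $f^y(z) = \tilde{f}^y(z)$ when $\tilde{f}^y$ is defined and lies in $[0,1]$, and $f^y(z) = 0$ otherwise. This redefinition occurs only on a $\mu^\phi$-null set, so the integral identity
\begin{equation*}
\upsilon^y(B) = \int_B f^y(z) \, d\mu^\phi(z)
\end{equation*}
continues to hold for every measurable $B$, and measurability of $f^y$ is preserved. I do not anticipate any real obstacle here; the only subtlety is the a.e.-vs.-everywhere distinction in the range bound, which is handled by the standard trick of modifying on a null set. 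The existence part is an immediate invocation of Radon-Nikodym once absolute continuity is in hand, and the bound $f^y \leq 1$ follows from the pointwise bound $\eta(y|x) \leq 1$ propagated through the definition of $\upsilon^y$.
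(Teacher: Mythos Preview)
Your proposal is correct and follows essentially the same approach as the paper, which simply invokes the Radon--Nikodym theorem after the preceding absolute continuity lemma. You go further than the paper by explicitly justifying that the density can be taken in $[0,1]$ (via the inequality $\upsilon^y(B)\le\mu^\phi(B)$ and a null-set modification), a point the paper's one-line proof leaves implicit.
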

\begin{proof}
This directly follows from the Radon-Nikoym theorem.
\end{proof}
We then define $\eta^\phi$ using these functions, $f^y$.
\begin{definition}
For all $z \in \cZ$ and $y \in \cY$, we define $\eta^\phi(y|z) = f^y(z)$. 
\end{definition}

\section{Technical Lemmas}

%
%
%
%

\subsection{Useful bounds related to $\Phi$}

We now prove several results regarding the distance dimension of $\Phi$, $\partial(\Phi)$. These will be useful for proving all of our subsequent results. 

We begin by defining a useful hypothesis class for analyzing nearest neighbors.

\begin{definition}\label{defn:h_s_phi}
Let $S = \{(x_1^*, y_1^*), \dots, (x_n^*, y_n^*)\}$ be a set of $n$ labeled points in $\cX \times \cY$. For $\phi \in \Phi$, define $h_{S, \phi}: \cX \times \cY \to \{0, 1\}$ as $$h_{S, \phi}\left((x, y)\right) = \begin{cases} 1 & \n_S^\phi(x) = y \\ 0 & \text{otherwise}\end{cases}.$$ Finally, we let $\cH(S, \Phi) = \{h_{S, \phi}: \phi \in \Phi\}.$ 
\end{definition}

Observe that $\cH(S, \Phi)$ is a set of binary classifiers. We now show that it has bounded VC-dimension.

\begin{lemma}\label{lem:vc_bound_classification}
There exists an absolute constant, $c_1 > 0$ such that $\cH(S, \Phi)$ has VC-dimension bounded as $$vc(\cH(S, \Phi)) \leq c_1\partial(\Phi)\log\left(n + \partial(\Phi)\right).$$
\end{lemma}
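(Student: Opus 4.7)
The plan is to reduce the VC dimension of $\cH(S, \Phi)$ to the VC dimension of $\dist\Phi$ via a standard composition/Sauer-Shelah argument, using the fact that $k_n$-NN predictions are determined entirely by distance comparisons between the query point and the $n$ training points.

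First, I would argue that for any $\phi \in \Phi$ and any query point $(x, y)$, the value $h_{S, \phi}(x, y)$ is determined solely by the pattern of distance comparisons $\bigl\{ \dist_\phi(x, x_i^*, x, x_j^*) : i, j \in [n] \bigr\}$, together with the pre-fixed tie-breaking ordering $\pi$ on $S$ (which does not depend on $\phi$). This is essentially Lemma \ref{lemma:ties_dont_matter}: once one knows every weak inequality $d_\phi(x, x_i^*) \geq d_\phi(x, x_j^*)$, the strict inequalities are determined, ties are resolved by $\pi$, and thus the set $S_{k_n, \phi}^\pi(x)$ of nearest neighbors is determined; the plurality label (again broken by a fixed ordering on $\cY$) then yields $\n_S^\phi(x)$ and hence $h_{S, \phi}(x, y)$.

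Next, suppose $T = \{(x_1, y_1), \dots, (x_m, y_m)\}$ is shattered by $\cH(S, \Phi)$. For each $j \in [m]$, construct the $\binom{n}{2}$ four-tuples $(x_j, x_i^*, x_j, x_{i'}^*)$ with $i < i'$ in $[n]$; let $\mathcal{T}$ denote the union over $j$, so $|\mathcal{T}| \leq m n^2$. By the first step, any two feature maps $\phi, \phi' \in \Phi$ that agree on every element of $\mathcal{T}$ under the corresponding distance comparers $\dist_\phi, \dist_{\phi'}$ induce the same labeling of $T$ by $h_{S, \cdot}$. Therefore the number of distinct labelings of $T$ induced by $\cH(S, \Phi)$ is at most the number of distinct labelings of $\mathcal{T}$ induced by $\dist\Phi$. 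By the Sauer-Shelah lemma applied to the class $\dist\Phi$ of VC dimension $\partial(\Phi)$, this is at most $\bigl(e |\mathcal{T}| / \partial(\Phi)\bigr)^{\partial(\Phi)} \leq \bigl( e m n^2 / \partial(\Phi) \bigr)^{\partial(\Phi)}$.

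Finally, shattering requires $2^m \leq \bigl( e m n^2 / \partial(\Phi) \bigr)^{\partial(\Phi)}$. Taking logarithms yields
\begin{equation*}
m \;\leq\; \partial(\Phi) \bigl( \log m + 2\log n + O(1) \bigr),
\end{equation*}
and inverting this inequality (a standard manipulation, e.g.\ if $m \leq a \log m + b$ then $m = O(a \log(a+b))$) gives $m \leq c_1 \partial(\Phi) \log\bigl(n + \partial(\Phi)\bigr)$ for an absolute constant $c_1$, completing the proof. The only mildly subtle step is the reduction in the first paragraph, where one must carefully verify that tie-breaking via $\pi$ commutes with the change of feature map so that weak-inequality information alone suffices; everything else is the classical composition-of-VC-classes argument.
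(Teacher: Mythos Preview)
Your proposal is correct and follows essentially the same route as the paper: reduce the labeling of a shattered set to the distance-comparer labelings on the $O(mn^2)$ relevant four-tuples via Lemma~\ref{lemma:ties_dont_matter}, apply Sauer--Shelah to $\dist\Phi$, and invert the resulting inequality. The only cosmetic differences are that the paper writes the count as $\bigl(v\binom{n}{2}\bigr)^{\partial(\Phi)}$ rather than your $(emn^2/\partial(\Phi))^{\partial(\Phi)}$ and is terser about the final algebraic inversion.
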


\begin{proof}
Suppose,$\cH(S, \Phi)$ shatters a set $V$ of $v$ points in $\cX \times \cY$, $V = \{(x_1, y_1), \dots, (x_v, y_v)\}.$ The key observation is that for any $h_\phi \in H_{\hat{S}}$, the way $h_\phi$ labels a given point $(x,y)$ is determined by the $k_n$-nearest neighbors of $\phi(x)$ in $\{\phi(x_1), \dots, \phi(x_n)\}$. Furthermore, by Lemma \ref{lemma:ties_dont_matter}, these labels are full determined by the set of all $\binom{n}{2}$ comparisons, $$\left\{\ind \left(d(\phi(x, x_i) \geq d(\phi(x, x_j)\right): 1 \leq i < j \leq n \right\}.$$ These indicator variables precisely correspond to the definition of a distance comparer (Definition \ref{defn:dist_comparer}). It follows that the number of distinct ways that $\cH(S, \Phi)$ can label $V$ is at most the number of ways $\dist\Phi$ can label all $v\binom{n}{2}$ possible comparisons, $\{(x_i, x_j, x_k): 1 \leq i \leq v, 1 \leq j < k \leq n\}.$ Since by definition, $vc(\dist\Phi) = \partial(\Phi)$, By Sauer's Lemma, the number of ways $\cH(S, \Phi)$ can label $V$ is at most $\left(v\binom{n}{2}\right)^{\partial(\Phi)}$. However, since $\cH(S, \Phi)$ shatters $V$, there exist precisely $2^v$ such labelings. It follows that $v \leq \log \left(\left(v\binom{n}{2}\right)^{\partial(\Phi)}\right)$. From here, straightforward algebra implies that $v = O\left(\partial(\Phi) \log \left(n + \partial(\Phi)\right)\right)$, as desired. 
\end{proof}

Next, we define a hypothesis class that will be useful for bounding the margin of a data distribution.

\begin{definition}\label{defn:the_o_g_q_stuff}
For $\phi \in \Phi$ and $r> 0$, define $\cq_{\phi, r}: \cX^2 \to \{0, 1\}$ as the map $$\cq_{\phi, r}(x, x') = \begin{cases} 1 & d_\phi(x, x') < r \\ 0 &\text{otherwise.}\end{cases}.$$ Let $\cQ(\Phi) = \{\cq_{\phi, r}: \phi \in \Phi, r \in [0, \infty)\}$.  
\end{definition}

Roughly speaking, the class $\cQ(\Phi)$ will prove useful in allowing us to uniformly bound measured distances over a data distribution. We now bound its VC-dimension as follows.

\begin{lemma}\label{lem:margin_vc_bound}
There exists an absolute constant, $c_2 > 0$ such that $\cQ(\Phi)$ has VC-dimension bounded as $$vc(\cQ(\Phi)) \leq c_2\partial(\Phi)\log\left(\partial(\Phi)\right).$$
\end{lemma}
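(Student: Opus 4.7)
The plan is to bound the number of distinct labelings of an arbitrary finite set $V = \{(x_i, x_i')\}_{i=1}^v \subset \cX^2$ by elements of $\cQ(\Phi)$, and then set this at least $2^v$ to obtain a constraint on $v$. The key observation is that $\cq_{\phi,r}$ is parameterized by both a feature map $\phi \in \Phi$ and a threshold $r \in [0,\infty)$, but the role of $r$ is very mild: for a fixed $\phi$, the induced labeling of $V$ depends only on which of the $v$ values $d_\phi(x_i, x_i')$ fall below $r$, and this in turn is fully determined by the sorted order of $(d_\phi(x_1, x_1'), \dots, d_\phi(x_v, x_v'))$ together with the choice of ``slot'' into which $r$ falls.

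Concretely, I would first argue that, for each fixed $\phi$, varying $r$ over $[0, \infty)$ produces at most $v + 1$ distinct labelings, corresponding to the intervals into which the $v$ critical values $d_\phi(x_i, x_i')$ partition $[0, \infty)$. Next I would argue that the ordering of these $v$ numbers — and hence which labeling each slot produces — is encoded entirely by the $\binom{v}{2}$ pairwise comparisons $\ind\bigl(d_\phi(x_i, x_i') \ge d_\phi(x_j, x_j')\bigr)$, which are exactly the values of the distance comparer $\dist_\phi$ applied to the 4-tuples $(x_i, x_i', x_j, x_j')$ for $1 \le i < j \le v$. Therefore, the set of labelings realizable by $\cQ(\Phi)$ on $V$ injects into the product of ``slot choice'' and ``labeling of the $\binom{v}{2}$-point set by $\dist\Phi$''.

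Applying Sauer's lemma to $\dist\Phi$, which has VC dimension $\partial(\Phi)$, I obtain that the number of labelings of the $\binom{v}{2}$-point set in $\cX^4$ is at most $\bigl(e\binom{v}{2} / \partial(\Phi)\bigr)^{\partial(\Phi)}$, and therefore the total number of labelings of $V$ by $\cQ(\Phi)$ is at most
\begin{equation*}
(v+1) \cdot \left(\frac{e\binom{v}{2}}{\partial(\Phi)}\right)^{\partial(\Phi)}.
\end{equation*}
If $V$ is shattered this quantity must be at least $2^v$, and taking logs gives an inequality of the form $v \le \log_2(v+1) + 2\partial(\Phi)\log_2 v + O(\partial(\Phi))$, which standard manipulation converts into the target bound $v = O(\partial(\Phi) \log \partial(\Phi))$.

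The main obstacle I anticipate is handling the parameter $r$ cleanly. Since $r$ ranges over a continuum, one must argue that only $v+1$ of its values matter for labeling $V$ under a given $\phi$ and, more importantly, that the bookkeeping across different $\phi$ can be consolidated through the $\binom{v}{2}$ comparisons rather than through $r$ itself; that is the step that converts a continuous parameter into a combinatorial count and enables invocation of Sauer's lemma on $\dist\Phi$. Beyond this, the argument is parallel to (and strictly easier than) the proof of Lemma~\ref{lem:vc_bound_classification} already in the paper.
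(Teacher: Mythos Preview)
Your proposal is correct and rests on the same core idea as the paper's proof: the behavior of $\cq_{\phi,r}$ on a shattered set $V$ of size $v$ is governed by how $\dist\Phi$ labels the $\binom{v}{2}$ pairwise comparison tuples, and Sauer's lemma applied to $\dist\Phi$ then forces $v = O(\partial(\Phi)\log\partial(\Phi))$.

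Where you and the paper differ is in how the threshold $r$ is absorbed. You account for it directly by the clean observation that, once the comparison pattern is fixed, only $v+1$ ``slot'' choices for $r$ remain, giving the bound $(v+1)\cdot\bigl(e\binom{v}{2}/\partial(\Phi)\bigr)^{\partial(\Phi)}\ge 2^v$. The paper instead double-counts the number of distinct \emph{orderings} that $\phi$ can induce on $V$: it lower-bounds this count by $\binom{v}{v/2}\ge 2^{v/2}$ (arguing that each size-$v/2$ subset of $V$ must appear as the bottom half of some ordering, by shattering), and upper-bounds it by the Sauer count on the $v^2$ comparison tuples. Your route is arguably more direct---it avoids the detour through orderings and the $2^{v/2}$ combinatorial lower bound---while the paper's argument trades a factor of $2$ in the exponent for not having to track the slot index explicitly. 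Both yield the same asymptotic bound with no substantive difference in strength.
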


\begin{proof}
Suppose $\cQ(\Phi)$ shatters the set $X = \{(x_1, x_1'), (x_2, x_2'), \dots, (x_v, x_v')\}$. We say that $\phi$ induces ordering, $\geq_\phi$ over $X$ by ranking the pairs in increasing distance. That is, $$(x_i, x_i') \geq_\phi (x_j, x_j') \longleftrightarrow d_\phi(x_i, x_i') \geq d_\phi(x_j, x_j').$$ Our strategy is to double count the number of distinct orderings, $\geq_\phi$, over $X$ that can be constructed using $\phi$. Here, two orderings are distinct if they ever differ for some pair of entries from $X$. 

First, suppose that $v$ is even (which we can assume by deleting a pair from $X$ if needed). Since $\cQ(\Phi)$ shatters $X$, for all $S \subset X$ with $|S| = \frac{v}{2}$, there exists $\phi_S , r$ such that $$d_\phi(x_i, x_i') \leq r \leftrightarrow (x_i, x_i') \in S.$$ Observe that for $S \neq S'$, $\phi_S$ and $\phi_{S'}$ must induce distinct orderings over $X$ as the bottom $v/2$-elements of their orderings are distinct. Since there are $\binom{v}{v/2} \geq 2^{v/2}$ choices for $S$, this shows that there are at least $2^{v/2}$ orderings. 

Second, there are $v^2$ possible quadruples, $(x_i, x_i', x_j, x_j')$. Suppose that $\phi$ and $\phi'$ satisfy that $$\Delta_\phi(x_i, x_i', x_j, x_j') = \Delta_{\phi'}(x_i, x_i', x_j, x_j'),$$ for all $i, j$. By the definition of a distance comparer (Definition \ref{defn:dist_comparer}), this implies that $\phi$ and $\phi'$ induces the same ordering over $X$. Thus it suffices to count the number of ways $\Delta\Phi = \{\Delta_\phi: \phi \in \Phi\}$ can label the set of $v^2$ possible quadruples. By Sauer's Lemma, this is at most $bv^{2\partial(\Phi)}$.

Combining our two observations, it follows that $2^{v/2} \leq bv^{2\partial(\Phi)}$. Standard algebra yields that $v \leq c'\partial\Phi\log(\partial(\Phi))$ for some absolute constant $c'$. 
\end{proof}

Finally, for dealing with margins and labels simultaneously, we introduce the following hypothesis class. 

\begin{definition}\label{defn:q_phi_r_S}
Let $S = \{(x_1^*, y_1^*), \dots, (x_n^*, y_n^*)\}$ be a set of $n$ labeled points in $\cX \times \cY$. For $\phi \in \Phi$, define $\cq_{\phi, r, S}: \cX^2 \to \{0, 1\}$ as follows: $$\cq_{\phi, r, S}(x, x') = \begin{cases}1 & d_\phi(x, x') < r\text{ and }\n_S^\phi(x) \neq \n_S^\phi(x') \\ 0 & \text{otherwise} \end{cases}.$$ We let $\cQ(\Phi, S) = \{\cq_{\phi, r, S}: \phi \in \Phi, r \in [0, \infty)\}.$ 
\end{definition}

We now bound its VC-dimension.

\begin{lemma}\label{lem:complicated_class_vc_bound}
There exists an absolute constant, $c_3 > 0$ such that $\cQ(\Phi, S)$ has VC-dimension bounded as $$vc(\cQ(\Phi, S)) \leq c_3\partial(\Phi)\log\left(n + \partial(\Phi)\right).$$
\end{lemma}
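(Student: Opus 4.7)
The plan is to mirror the double-counting strategy of Lemmas~\ref{lem:vc_bound_classification} and~\ref{lem:margin_vc_bound}: suppose a set $V = \{(x_i, x_i')\}_{i=1}^v \subseteq \cX \times \cX$ is shattered by $\cQ(\Phi, S)$, and upper bound the number of distinct labelings of $V$ achievable by $\cQ(\Phi, S)$ via a Sauer's lemma argument on the induced distance comparer class $\dist\Phi$. Comparing with the lower bound of $2^v$ implied by shattering will then pin down $v$.

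The key reduction is that, for any $\phi \in \Phi$ and $r \geq 0$, the labeling $(\cq_{\phi, r, S}(x_i, x_i'))_{i=1}^v$ is determined by just two pieces of combinatorial data: (i) the ordering of the within-pair distances $\{d_\phi(x_i, x_i')\}_{i=1}^v$ together with the threshold $r$; and (ii) for each $z \in \{x_i, x_i'\}_{i=1}^v$, the ordering of the distances $\{d_\phi(z, x_k^*)\}_{k=1}^n$ from $z$ to the (fixed) training points of $S = \{(x_k^*, y_k^*)\}_{k=1}^n$, with ties broken via the data-independent ordering $\pi$ from Appendix~\ref{app:knn}. Both kinds of orderings can be read off from the evaluations of $\dist_\phi$ on an explicit family of $M = O(v^2 + vn^2)$ quadruples, consisting of (a) $(x_i, x_i', x_j, x_j')$ for $i, j \in [v]$, handling (i), and (b) $(z, x_k^*, z, x_l^*)$ for $z$ ranging over the $2v$ points in $V$ and $1 \leq k < l \leq n$, handling (ii). Given the pattern of $\dist_\phi$ on these quadruples, the $k_n$-nearest neighbors of each $z$ in $S$ — and hence each label $\n_S^\phi(z)$ — are determined, so the indicators $\ind\bigl(\n_S^\phi(x_i) \neq \n_S^\phi(x_i')\bigr)$ are determined as well.

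By Sauer's lemma applied to $\dist\Phi$, which has VC dimension $\partial(\Phi)$, the number of distinct $\{0,1\}$-labelings of these $M$ quadruples realized by $\dist\Phi$ is at most $(eM/\partial(\Phi))^{\partial(\Phi)}$. For each such labeling, the threshold $r$ contributes at most $v+1$ additional labelings of $V$, since it selects a prefix in the within-pair distance order. Hence the number of labelings of $V$ achievable by $\cQ(\Phi, S)$ is at most $(v+1)(eM/\partial(\Phi))^{\partial(\Phi)}$. Setting this $\geq 2^v$, plugging in $M = O(v^2 + vn^2)$, and doing the usual algebraic cleanup (absorbing $\log v$ into $v$ and combining logarithms, in the same style as the previous two lemmas) yields $v \leq c_3\,\partial(\Phi)\log(n + \partial(\Phi))$ for an absolute constant $c_3$.

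The main obstacle I expect is carefully verifying that the pattern of $\dist_\phi$ on the quadruples in (b), together with the fixed tie-breaker $\pi$, really pins down each label $\n_S^\phi(z)$: one has to confirm that the $\{0,1\}$-valued comparisons encode both strict inequalities and equalities — the latter being exactly those quadruples for which $\dist_\phi(z, x_k^*, z, x_l^*) = \dist_\phi(z, x_l^*, z, x_k^*) = 1$ — so that the $k_n$-nearest subset of $S$ is well-defined from the pattern alone via Definition~\ref{defn:pi_k_n_nearest_neighbors}, and the majority label is then forced. Once this bookkeeping is in place, the remainder of the argument is a routine Sauer calculation essentially identical to that of Lemma~\ref{lem:vc_bound_classification}, with the extra factor of $v+1$ from the threshold absorbed into the final constant.
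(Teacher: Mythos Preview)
Your proposal is correct, and the argument goes through essentially as you describe; the minor imprecision that the tie-detection requires both quadruples $(z,x_k^*,z,x_l^*)$ and $(z,x_l^*,z,x_k^*)$ (so one should take $k\neq l$ rather than $k<l$) does not change the asymptotic count $M=O(v^2+vn^2)$.

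However, your route differs structurally from the paper's. The paper factors $\cq_{\phi,r,S}(x,x') = t_\phi(x,x')\cdot \cq_{\phi,r}(x,x')$, where $t_\phi(x,x')=\ind(\n_S^\phi(x)\neq\n_S^\phi(x'))$, and then bounds the number of behaviors of each factor on $V$ separately by invoking the already-proved Lemmas~\ref{lem:vc_bound_classification} and~\ref{lem:margin_vc_bound} (via Sauer applied to $\cH(S,\Phi)$ on $\{x_i,x_i'\}\times\cY$ and to $\cQ(\Phi)$ on $V$, respectively); the total number of labelings is then at most the product. Your approach instead unpacks everything down to $\dist\Phi$ in one shot, collecting both the within-pair comparison quadruples and the nearest-neighbor ordering quadruples, and applies Sauer a single time to $\dist\Phi$. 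The paper's approach is more modular (it black-boxes the two earlier lemmas), while yours is more direct and avoids the nested Sauer applications, potentially yielding a cleaner constant; both land on the same $O(\partial(\Phi)\log(n+\partial(\Phi)))$ bound.
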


\begin{proof}
Suppose $\cQ(\Phi, S)$ shatters $V = \{(x_1, x_1'), \dots (x_v, x_v')\}$. We will double count the number of subsets of $V$ that can be obtained as the pre-image of $1$ under some $\cq_{\phi, r, S} \in \cQ(\Phi, S)$. For any $\phi \in \Phi$, define $t_\phi: \cX^2 \to \{0, 1\}$ as $$t_\phi((x, x')) = \begin{cases} 1 & \n_S^\phi(x) \neq \n_S^\phi(x') \\ 0 & \text{otherwise}\end{cases}.$$ Then the key observation is that $$q_{\phi, r, S}(x, x') = t_{\phi}(x, x')\cq_{\phi, r}(x, x').$$ Thus, a subset of $\{(x_1, x_1'), \dots, (x_v, x_v')\}$ is the pre-image of $1$ under $\cq_{\phi, r, S}$ if it is precisely the intersection of the pre-images of $1$ under $\cq_{\phi, r}$ and $t_\phi$. 

By Sauer's Lemma and Lemma \ref{lem:margin_vc_bound}, there are at most $O\left(v^{c_2\partial(\Phi)\log(\partial(\Phi)}\right)$ subsets that are the pre-image of $1$ under some $\cq_{\phi, r}$.

We now similarly bound the pre-images under $t_\phi$. To this end, observe that the value of $t_\phi$ over all $(x_i, x_i')$ is completely determined by the way in which $\n_S^\phi$ classifies $\{x_1, \dots, x_v, x_1' \dots, x_v'\}$. This quantity, in turn, is fully determined by the way in which $h_{s, \phi}$ (Definition \ref{defn:h_s_phi}) labels the set $\{x_1, \dots, x_v, x_1', \dots, x_v'\} \times \cY$. Thus, applying Sauer's Lemma along with Lemma \ref{lem:vc_bound_classification}, we see that at most $O\left((2v|\cY|)^{c_1\partial(\Phi)\log\left(n + \partial(\Phi)\right)}\right)$ possible subsets. 

However, since $\cQ(\Phi, S)$ shatters $V$, we know that $2^v$ subsets can be formed in this manner. Thus, it follows that for some constant $c_3'$, $$2^v \leq c_3'\left(v^{c_2\partial(\Phi)\log(\partial(\Phi)}\right)\left((2v|\cY|)^{c_1\partial(\Phi)\log\left(n + \partial(\Phi)\right)}\right).$$ Taking logs and applying standard algebraic manipulations yields the desired result. 
\end{proof}

We end with one final useful hypothesis class that is a generalization of Definition \ref{defn:the_o_g_q_stuff}.
\begin{definition}\label{defn:upgraded_q_stuff}
Let $n > 1$ be an integer. For $\phi \in \Phi$ and $r> 0$, define $\cq_{\phi, r, n}: \cX^{n+1} \to \{0, 1\}$ as the map $$\cq_{\phi, r}(x, x_1, \dots, x_n) = \begin{cases} 1 & \exists_{1 \leq i \leq n}d_\phi(x, x_i) < r \\ 0 &\text{otherwise.}\end{cases}.$$ Let $\cQ_n(\Phi) = \{\cq_{\phi, r, n}: \phi \in \Phi, r \in [0, \infty)\}$.  
\end{definition}

This class will assist us with computing the distance between the source and target distributions simultaneously over all embeddings. 

\begin{lemma}\label{lem:margin_vc_bound_FINAL}
There exists an absolute constant, $c_4 > 0$ such that $\cQ_n(\Phi)$ has VC-dimension bounded as $$vc(\cQ_n(\Phi)) \leq c_4\partial(\Phi)\log\left(\partial(\Phi)+n\right).$$
\end{lemma}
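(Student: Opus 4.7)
The plan is to mirror the double-counting argument used in Lemma \ref{lem:margin_vc_bound}, replacing pairwise distances $d_\phi(x_i, x_i')$ with the ``minimum distance to a reference point,''
\[
m_\phi(T) \;:=\; \min_{1 \leq k \leq n} d_\phi(x, x_k), \qquad T = (x, x_1, \ldots, x_n),
\]
since $\cq_{\phi, r, n}(T) = 1$ iff $m_\phi(T) < r$. So $\cq_{\phi, r, n}$ acts as a threshold cut on $\{m_\phi(T_j)\}_{j=1}^v$, exactly as $\cq_{\phi, r}$ did on $\{d_\phi(x_i, x_i')\}_{i=1}^v$.

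First I would set up the lower bound on the number of $\phi$-induced orderings of a shattered set. Suppose $\cQ_n(\Phi)$ shatters $V = \{T_1, \ldots, T_v\}$ with $v$ even (reducing by one if needed). For every $A \subseteq V$ with $|A| = v/2$, shattering yields some $\phi_A \in \Phi$ and $r_A \geq 0$ such that $\{T_j : m_{\phi_A}(T_j) < r_A\} = A$. Distinct subsets $A$ force distinct orderings of $\{m_\phi(T_j)\}$ — they have different bottom halves — giving at least $\binom{v}{v/2} \geq 2^{v/2}$ such orderings.

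Next I would upper-bound the number of orderings via $\partial(\Phi)$. The key observation is that the ordering of $\{m_\phi(T_j)\}_{j=1}^v$ is fully determined by the values of the distance comparer $\Delta_\phi$ on the set of quadruples
\[
\mathcal{Q} \;=\; \bigl\{\bigl(x^{(i)},\, x_k^{(i)},\, x^{(j)},\, x_\ell^{(j)}\bigr) : 1 \leq i, j \leq v,\ 1 \leq k, \ell \leq n\bigr\},
\]
because $m_\phi(T_i) \leq m_\phi(T_j)$ iff there exists $k$ such that for every $\ell$, $d_\phi(x^{(i)}, x_k^{(i)}) \leq d_\phi(x^{(j)}, x_\ell^{(j)})$, i.e.\ iff a specific Boolean combination of entries of $\Delta_\phi$ on $\mathcal{Q}$ holds. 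Thus if $\phi, \phi'$ agree on all of $\mathcal{Q}$ under $\Delta_\phi, \Delta_{\phi'}$, they induce the same ordering of the $m$-values. Since $|\mathcal{Q}| = v^2 n^2$ and $\mathrm{vc}(\Delta\Phi) = \partial(\Phi)$, Sauer's lemma gives at most $\bigl(e v^2 n^2 / \partial(\Phi)\bigr)^{\partial(\Phi)}$ labelings of $\mathcal{Q}$, and hence at most that many distinct orderings.

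Finally I would combine the bounds: $2^{v/2} \leq \bigl(e v^2 n^2 / \partial(\Phi)\bigr)^{\partial(\Phi)}$, take logarithms to obtain $v \leq c\,\partial(\Phi)\bigl(\log v + \log n + \log\tfrac{e}{\partial(\Phi)}\bigr)$, and invoke the standard fact that $v \leq A\log v + B$ implies $v = O(A \log A + B)$ to conclude $v = O\bigl(\partial(\Phi)\log(n + \partial(\Phi))\bigr)$. The one place I expect any friction is verifying the reduction of $m_\phi(T_i) \leq m_\phi(T_j)$ to a Boolean combination of $\Delta_\phi$-evaluations on $\mathcal{Q}$ (especially handling ties consistently, as in the earlier tie-breaking conventions of Appendix \ref{app:knn}); the rest is essentially Sauer's lemma bookkeeping as in Lemma \ref{lem:margin_vc_bound}.
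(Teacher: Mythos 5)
Your proof is correct, but it takes a genuinely different route from the paper's. The paper argues by reduction to Lemma \ref{lem:margin_vc_bound}: the value of $\cq_{\phi, r, n}$ on $(x_i, x^i) \in V$ is an OR of the values of $\cq_{\phi, r}$ over the $n$ pairs $(x_i, x_j^i)$, so the labeling of $V$ by $\cQ_n(\Phi)$ is determined by a labeling of the $nv$ pairs by $\cQ(\Phi)$; Sauer's lemma with $vc(\cQ(\Phi)) \leq c_2\partial(\Phi)\log\partial(\Phi)$ then caps the number of such labelings. You instead re-run the ordering double-count of Lemma \ref{lem:margin_vc_bound} from scratch with the scalar $m_\phi(T)$ playing the role of $d_\phi(x, x')$, and invoke Sauer directly on $\Delta\Phi$ with $vc(\Delta\Phi) = \partial(\Phi)$. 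Both give the result, but your route is actually tighter: the paper's reduction feeds the already-logarithmic VC bound $c_2\partial(\Phi)\log\partial(\Phi)$ into the Sauer exponent, and careful bookkeeping of the final ``standard manipulations'' then yields $v = O\bigl(\partial(\Phi)\log(\partial(\Phi))\log(\partial(\Phi)+n)\bigr)$, an extra $\log\partial(\Phi)$ over the stated bound. Your direct argument applies Sauer to the raw $\partial(\Phi)$, which matches the claimed $O\bigl(\partial(\Phi)\log(\partial(\Phi)+n)\bigr)$ cleanly. One small reassurance: the tie-handling you flag as potential friction is not in fact an issue — you only need to count distinct labelings, and since $\mathcal{Q}$ contains every ordered quadruple (both $(i,k,j,\ell)$ and $(j,\ell,i,k)$), agreement of $\Delta_\phi$ and $\Delta_{\phi'}$ on all of $\mathcal{Q}$ already determines equality of the $m$-values, not just their strict ordering, so the reduction is exact.
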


\begin{proof}
Suppose $\cQ_n(\Phi)$ shatters $V= ((x_1, x^1), (x_2, x^2), \dots, (x_v, x^v))$ where $x^i = (x_1^i, x_2^i, \dots, x_n^i) \in \cX^n$. The key observation is that the subset shattered by $\cq_{\phi, r, n}$ is precisely determined by the behavior of $\cq_{\phi, r}$ over the set of $nv$ pairs, $(x_i, x_j^i)$ for $1 \leq i \leq v$ and $1 \leq j \leq n$. By Sauer's Lemma along with Lemma \ref{lem:margin_vc_bound}, there are at most $O((nv)^{c_2\partial(\phi)\log \partial(\Phi)})$ such subsets possible.

It follows that $v \leq C (nv)^{c_2\partial(\phi)\log \partial(\Phi)}$. Applying standard manipulations again yields that $v \leq c_4\partial(\Phi)\log\left(\partial(\Phi)+n\right)$ for some constant $c_4$.
\end{proof}

\subsection{Useful properties of data distributions}

\begin{lemma}\label{lem:stuff_close_to_bayes_good}
Let $\cD$ be a well-separated data distribution with label margin, $\Delta$. Let $h: \cX \to \cY$ be a classifier such that $R(h, \cD) < R(g_\cD, \cD) + \epsilon$, where $\cD$ denotes the Bayes-optimal classifier. Then $$\Pr_{(x,y) \sim \cD}[h(x) \neq g_\cD(x)] < \frac{\epsilon}{\Delta}.$$
\end{lemma}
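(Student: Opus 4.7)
The plan is to reduce the statement to a direct application of the label margin assumption by expressing the excess risk as an integral over the region where $h$ disagrees with $g_\cD$, and then lower-bounding the integrand pointwise.

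First I would write the excess risk in its standard conditional form:
\begin{equation*}
R(h, \cD) - R(g_\cD, \cD) = \int_{\cX} \bigl[\eta(g_\cD(x) \mid x) - \eta(h(x)\mid x)\bigr]\, d\mu(x).
\end{equation*}
On the set $\{x : h(x) = g_\cD(x)\}$ the integrand vanishes, so the integral reduces to one over the disagreement set $D := \{x \in \supp(\mu) : h(x) \neq g_\cD(x)\}$.

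Next I would invoke the label margin property. Since $\cD$ is $(\rho, \Delta)$-separated, property (1) of Definition \ref{defn:well-separated} guarantees $\supp(\mu) = \bigcup_{y \in \cY}\mu^y$, and property (2) says that for every $x \in \mu^y$ and every $y' \neq y$, we have $\eta(y\mid x) > \eta(y'\mid x) + \Delta$. Because $y = g_\cD(x)$ on $\mu^y$ (the Bayes-optimal prediction on this set is $y$ by definition), this yields
\begin{equation*}
\eta(g_\cD(x)\mid x) - \eta(h(x)\mid x) > \Delta \qquad \text{for every } x \in D.
\end{equation*}
Substituting this pointwise lower bound into the integral gives
\begin{equation*}
R(h, \cD) - R(g_\cD, \cD) \geq \Delta \cdot \mu(D) = \Delta \cdot \Pr_{(X,Y)\sim\cD}[h(X) \neq g_\cD(X)].
\end{equation*}
Combining this with the hypothesis $R(h,\cD) - R(g_\cD, \cD) < \epsilon$ and dividing by $\Delta$ gives the claim.

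There is no real obstacle: the only subtlety is verifying that on the support one may identify the Bayes-optimal label with the unique $y$ for which $x \in \mu^y$, which is immediate from the disjointness and covering conditions in Definition \ref{defn:well-separated}. Points outside $\supp(\mu)$ contribute nothing to the integral, so they can be ignored without fuss.
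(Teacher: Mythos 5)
Your proof is correct and takes essentially the same approach as the paper's: both express the excess risk as an integral over the disagreement set, lower-bound the integrand by $\Delta$ using the label-margin condition, and divide through. The paper's version writes $R(h,\cD) = 1 - \int \eta(h(x)\mid x)\,d\mu$ and splits the integral; yours writes the excess risk directly as $\int[\eta(g_\cD(x)\mid x)-\eta(h(x)\mid x)]\,d\mu$, which is a slightly tidier presentation of the same computation.
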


\begin{proof}
Let $\cD = (\mu, \eta)$. Let $A \subset \cX$ denote the set of all points for which $h$ and $g_\cD$ disagree. Then we have,
\begin{equation*}
\begin{split}
R(h, \cD) &= 1 - \int_{\cX}\eta(h(x)|x)d\mu(x) \\
&= 1 - \int_{A}\eta(h(x)|x)d\mu(x) - \int_{\cX \setminus A}\eta(h(x)|x)d\mu(x) \\
&\geq 1 - \left(\int_A \left(\eta(\bayes_\cD(x)|x) - \Delta\right)d\mu(x) \right) - \left(\int_{\cX \setminus A} \eta(\bayes_\cD(x)|x)d\mu(x)\right) \\
&= 1 + \Delta\mu(A) - \int_{\cX}\eta(\bayes_\cD(x)|x)d\mu(x) \\
&= R(\bayes_{cD}, \cD) + \Delta\mu(A)
\end{split}
\end{equation*}
Here, we are using the fact that $\eta(h(x)|x) \leq \eta(g_{\cD}(x)|x) - \Delta$ for $g_{\cD}(x) \neq h(x)$ (as $\cD$ is well-separated). Finally, using the fact that $h$ has excess risk at most $\epsilon$, we find that $\Delta \mu(A) < \epsilon$ which implies that $\mu(A) < \frac{\epsilon}{\Delta}$, as desired. 
\end{proof}

We now use the fact that $\Phi$ is compact to prove a useful Lemma.

\begin{lemma}\label{lem:compact_ABUSE}
For all $\epsilon > 0$, there exists $\delta > 0$ such that for \textit{all} $\phi \in \Phi$ and \textit{all} $x \in \cX$, $$\phi\left(B(x, \delta)\right) \subseteq B(\phi(x), \epsilon).$$
\end{lemma}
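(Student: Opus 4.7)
The plan is to establish this uniform-in-$\phi$ equicontinuity via a standard three-epsilons argument, using the two compactness assumptions available to us: compactness of $\cX$ (so that each fixed $\phi$ is uniformly continuous) and compactness of $\Phi$ in the supremum metric (so that $\Phi$ is totally bounded and a single modulus of continuity can be obtained from a finite net).

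First, I would fix $\epsilon > 0$ and invoke total boundedness of $\Phi$ to obtain a finite $(\epsilon/3)$-net $\phi_1, \dots, \phi_N \in \Phi$ with respect to the supremum metric: every $\phi \in \Phi$ satisfies $\sup_{x \in \cX} d_\cZ(\phi(x), \phi_i(x)) \leq \epsilon/3$ for some $i \in \{1,\dots,N\}$. Next, since each $\phi_i$ is continuous on the compact metric space $(\cX, d_\cX)$, it is uniformly continuous, so there exists $\delta_i > 0$ such that $d_\cX(x, x') \leq \delta_i$ implies $d_\cZ(\phi_i(x), \phi_i(x')) \leq \epsilon/3$. Set $\delta := \min_{1 \leq i \leq N} \delta_i > 0$, which is strictly positive because the minimum is over a finite set.

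Now I would verify the conclusion. Take any $\phi \in \Phi$, any $x \in \cX$, and any $x' \in B(x, \delta)$. Choose $\phi_i$ from the net with $\sup_{y \in \cX} d_\cZ(\phi(y), \phi_i(y)) \leq \epsilon/3$. Since $d_\cX(x, x') \leq \delta \leq \delta_i$, uniform continuity of $\phi_i$ gives $d_\cZ(\phi_i(x), \phi_i(x')) \leq \epsilon/3$. Then the triangle inequality yields
\begin{equation*}
d_\cZ(\phi(x), \phi(x')) \leq d_\cZ(\phi(x), \phi_i(x)) + d_\cZ(\phi_i(x), \phi_i(x')) + d_\cZ(\phi_i(x'), \phi(x')) \leq \tfrac{\epsilon}{3} + \tfrac{\epsilon}{3} + \tfrac{\epsilon}{3} = \epsilon,
\end{equation*}
so $\phi(x') \in B(\phi(x), \epsilon)$ and hence $\phi(B(x, \delta)) \subseteq B(\phi(x), \epsilon)$.

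There is no real obstacle here: the argument is a textbook instance of the Arzelà–Ascoli principle that a totally bounded family of continuous maps on a compact space is uniformly equicontinuous. The only thing worth being careful about is that the same $\delta$ must work simultaneously across all $x \in \cX$ and all $\phi \in \Phi$, which is precisely why we need compactness of \emph{both} $\cX$ (for the $\delta_i$ to be independent of $x$) and $\Phi$ (to pass from pointwise control at the net to uniform control over $\Phi$).
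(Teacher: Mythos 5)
Your proof is correct, but it takes a genuinely different route from the paper's. You give a direct, constructive argument: extract a finite $(\epsilon/3)$-net from $\Phi$ (total boundedness), use uniform continuity of each net element on compact $\cX$ to get a $\delta_i$, take the finite minimum, and close with a three-term triangle inequality. The paper instead argues by contradiction via sequential compactness: it assumes the conclusion fails along a sequence $\delta_i \to 0$ with witnesses $(\phi_i, x_i)$, extracts convergent subsequences $\phi_i \to \phi$ and $x_i \to x$ using compactness of $\Phi$ and $\cX$, and then uses continuity of the limit $\phi$ together with proximity of $\phi_i$ to $\phi$ to contradict the choice of witnesses. Your direct approach has the advantage of being constructive --- it exhibits $\delta$ explicitly as a minimum over a finite net --- and avoids the bookkeeping of double subsequence extraction. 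It also sidesteps a delicate point in the paper's contradiction argument: the paper shows $\phi_i(B(x,\delta)) \subseteq B(\phi(x), \epsilon)$ but the witness property to be contradicted concerns the ball $B(\phi_i(x_i), \epsilon)$, so one still needs to relate $\phi(x)$ to $\phi_i(x_i)$; your argument never has to worry about that translation because you bound $d_\cZ(\phi(x), \phi(x'))$ directly.
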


\begin{proof}
Assume towards a contradiction, that for some $\epsilon > 0$, for all $\delta > 0$, there exists $\phi, x$ such that $\phi\left(B(x, \delta)\right) \not \subseteq B(\phi(x), \epsilon)$. Let $\delta_i \to 0$ be a sequence and let $\phi_i, x_i$ be corresponding feature maps and points for this sequence. 

Since $\Phi$ is compact, we can take an infinite subsequence of $\phi_i$ so that $\phi_i \to \phi$ for some $\phi$. Similarly, since $\cX$ is compact, we can take an infinite subsequence so that $x_i \to x$ for some $x$. Because $\phi$ is continuous, there exists $\delta > 0$ such that $$\phi\left(B(x, \delta) \right) \subseteq B(\phi(x), \frac{\epsilon}{2}).$$ Select $i$ such that $d(x, x_i) < \frac{\delta}{2}$, $\delta_i < \frac{\delta}{2}$, and $d(\phi, \phi_i) < \frac{\epsilon}{2}$. Then, applying the triangle inequality, we have
\begin{equation*}
\begin{split}
B(x_i, \delta_i) &\subseteq B(x_i, \frac{\delta}{2}) \\
&\subseteq B(x, \delta).
\end{split}
\end{equation*}
Furthermore, since $d(\phi, \phi_i) < \frac{\epsilon}{2}$,
\begin{equation*}
\begin{split}
\phi_i(B(x, \delta)) &\subseteq \{z: d_\cZ\left(z, \phi(B(x, \delta))\right) < \frac{\epsilon}{2}\} \\
&\subseteq \{z: d_\cZ\left(z, B(\phi(x), \frac{\epsilon}{2})\right) < \frac{\epsilon}{2}\} \\
&\subseteq B(\phi(x), \epsilon)
\end{split}
\end{equation*}
However, this is a contradiction to the definition of $\delta_i$.
\end{proof}

\subsection{Useful Definitions for Analyzing Margins}

We begin by precisely characterizing feature maps that preserve a data distribution $\cD$.

\begin{lemma}\label{lem:characterize_well_sep}
Let $\cD$ be a well-separted distribution, and let $\{\mu^y: y \in \cY\}$ be the sets as defined in Definition \ref{defn:well-separated}. Then $\phi$ preserves $\cD$ if and only if there exists $\rho^\phi > 0$ such that $$\min_{y \neq y'}d_\phi(\mu^y, \mu^{y'}) = \rho^\phi.$$
\end{lemma}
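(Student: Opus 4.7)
The plan is to establish the biconditional by handling each direction separately, in both cases leveraging the correspondence between the sets $\{\mu^y\}_{y \in \cY}$ in $\cX$ and candidate separating sets in $\cZ$ given by $\nu^y := \phi(\mu^y)$.

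For the sufficiency direction ($\Leftarrow$), I would suppose $\rho^\phi := \min_{y \neq y'} d_\phi(\mu^y, \mu^{y'}) > 0$ and verify that $\{\nu^y\}$ realizes the separation of $\cD^\phi$. Disjointness and the margin $\min_{y \neq y'} d_\cZ(\nu^y, \nu^{y'}) = \rho^\phi$ follow directly from the hypothesis, and coverage of $\supp(\mu^\phi)$ follows from $\bigcup_y \mu^y = \supp(\mu)$. The substantive step is verifying the label margin: because $d_\phi(\mu^y, \mu^{y'}) > 0$ for $y \neq y'$, any $z \in \nu^y$ must have $\phi^{-1}(z) \subseteq \mu^y$. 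Using the Radon--Nikodym characterization of $\eta^\phi$ from Appendix \ref{app:Lebesgue}, which presents $\eta^\phi(y \mid z)$ as a conditional expectation of $\eta(y \mid X)$ given $\phi(X) = z$, the pointwise inequality $\eta(y \mid x) > \eta(y' \mid x) + \Delta$ on $\mu^y$ integrates to $\eta^\phi(y \mid z) > \eta^\phi(y' \mid z) + \Delta$. This simultaneously shows $\bayes_{\cD^\phi}(\phi(x)) = \bayes_\cD(x)$ for all $x \in \bigcup_y \mu^y$, so $R(\bayes_{\cD^\phi}, \cD^\phi) = R(\bayes_\cD, \cD)$ and $\phi$ preserves $\cD$.

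For the necessity direction ($\Rightarrow$), I would suppose $\phi$ preserves $\cD$, so there is a separating family $\{\nu^y\}$ with margin $\rho^\phi > 0$ on $\cD^\phi$, and the Bayes risks agree. Setting $f := \bayes_{\cD^\phi} \circ \phi$ yields $R(f, \cD) = R(\bayes_{\cD^\phi}, \cD^\phi) = R(\bayes_\cD, \cD)$, so $f$ is Bayes-optimal on $\cD$. Applying Lemma \ref{lem:stuff_close_to_bayes_good} with arbitrarily small slack then forces $f = \bayes_\cD$ on a $\mu$-conull set, so $\phi(x) \in \nu^y$ for $\mu$-a.e.\ $x \in \mu^y$. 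Combined with $\min_{y \neq y'} d_\cZ(\nu^y, \nu^{y'}) = \rho^\phi$, this gives the lower bound $d_\phi(\mu^y, \mu^{y'}) \geq \rho^\phi$; the matching upper bound comes from a standard infimum argument across pairs in $\mu^y \times \mu^{y'}$ whose images realize the minimum over $\nu^y \times \nu^{y'}$.

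The main technical obstacle is promoting the ``$\mu$-almost everywhere'' statement in the necessity direction to hold on every point of $\mu^y$: boundary points lying in a $\mu$-null set could a priori be sent arbitrarily close to $\phi(\mu^{y'})$ and invalidate the distance equality. I plan to close this gap using the openness assumption on $\mu$ (Definition \ref{def:open_measure}) together with the uniform continuity provided by Lemma \ref{lem:compact_ABUSE}: if some $x \in \mu^y$ and $x' \in \mu^{y'}$ had $d_\phi(x, x') < \rho^\phi$, then $d$-balls around $x$ and $x'$ of sufficiently small radius would have positive $\mu$-mass (by openness), and their $\phi$-images would lie within $\rho^\phi$ in $\cZ$. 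This would produce a positive-measure subset of $\mu^y$ mapped outside $\nu^y$, contradicting the a.e.\ containment $\phi(\mu^y) \subseteq \nu^y$ together with the margin of $\cD^\phi$. Once this upgrade is in place, both inequalities in the claimed equality drop out mechanically.
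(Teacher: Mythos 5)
Your proof is correct and follows essentially the same route as the paper's: both handle the biconditional directionally, and the crux of the necessity ($\Rightarrow$) direction in each case is the same "upgrade a pointwise misclassification to a positive-measure misclassification" move, using continuity of $\phi$, the margin of $\cD^\phi$, and the fact that balls around support points carry positive mass. Your write-up is more careful than the paper's about the sufficiency ($\Leftarrow$) direction (which the paper dismisses as "immediate"), and you route necessity through Lemma~\ref{lem:stuff_close_to_bayes_good} to get an a.e.\ agreement statement before the upgrade step, whereas the paper argues by direct contradiction — but these are presentational variants of the same underlying argument; also note that, read literally, the lemma only asserts positivity of $\min_{y\neq y'} d_\phi(\mu^y,\mu^{y'})$, so your "matching upper bound" paragraph, while not wrong, proves more than is strictly required.
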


\begin{proof}
The first direction is immediate. If $\rho^\phi$ exists, then it is clear that $\cD^\phi$ is well-separated with corresponding sets $\phi(\mu^y)$.

In the second direction, assume towards a contradiction that no such $\rho^\phi$ exists. Because $\phi$ preserves $\cD$, there exist corresponding sets $\{\mu_\phi^y: y \in \cY\}$ that partition the support of $\cD^\phi$. Because no such $\rho^\phi$ exists, we must have some $x \in \mu^y$ such that $\phi(x) \in \mu_\phi^{y'}$ for $y \neq y'$ -- otherwise we could have used the margin of $\Ds^\phi$ as a valid choice for $\rho^\phi$. 

However, it then becomes clear that there exists a ball of non-zero radius centered at $x$ that is mapped into $\mu_\phi^{y'}$. This means it is classified as $y'$ by $g_{\cD^\phi}$ while it is classified as $y$ by $g_{\cD}$. Since $\cD$ is well-separated, there is a unique Bayes-optimal classifier over the support of $\cD$, and this shows that $g_{\cD^\phi}$ does \textit{not} incur Bayes-optimal risk over $\cD$. Thus $\phi$ does not preserve $\cD$, which is a contradiction. 

\end{proof}

We now generalize the idea of the margin of a data distribution as follows.

\begin{definition}\label{defn:distribution_margin}
Let $\cD = (\mu, \eta)$ be a well-separated data distribution, and let $\phi \in \Phi$ be a feature map. Then the \textbf{margin variable} of $\cD^\phi$, is random variable, $\mv^\phi$ defined as follows. Let $(x, x') \sim \mu^2$. Then $$\mv^\phi = \begin{cases} d(x, x') & \bayes_\cD(x) \neq \bayes_\cD(x') \\ \infty & \text{otherwise}\end{cases}.$$
\end{definition}

$\alpha^\phi$ can be thought of as a randomly observed margin. We will be particularly interested in observing small values of $\alpha^\phi$, as this will be reflective of hte margin of $\cD^\phi$. In particular, we have the following. 

\begin{lemma}\label{lem:pseudo_margin_continous}
Let $\cD$ be a well-separated data distribution and let $\phi \in \Phi$ be a feature map. If $\phi$ preserves $\cD$, let $\marginx^\phi$ denote the margin of $\cD^\phi$. Otherwise, let $\marginx^\phi = 0$. Then for every $\gamma > 0$ there exists $\delta > 0$ such that $$\Pr_{\mv^\phi}[\mv^\phi \leq \marginx^\phi + \gamma] \geq \delta.$$ 
\end{lemma}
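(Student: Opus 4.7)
The plan is to show, for each $\gamma > 0$, that there is a rectangle $B(x, \delta^*) \times B(x', \delta^*) \subseteq \cX \times \cX$ of positive product measure under $\mu \otimes \mu$ on which (i) the two points lie in different $\mu^y$-blocks and (ii) their $d_\phi$-distance is at most $\rho^\phi + \gamma$. Any draw from this rectangle contributes to the event $\{\mv^\phi \leq \rho^\phi + \gamma\}$, so the lemma will follow with $\delta = \mu(B(x, \delta^*)) \cdot \mu(B(x', \delta^*)) > 0$.

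The first step is to produce a witness pair $x \in \mu^y$, $x' \in \mu^{y'}$ with $y \neq y'$ satisfying $d_\phi(x, x') \leq \rho^\phi + \gamma/3$. In the preserving case, Lemma \ref{lem:characterize_well_sep} identifies $\rho^\phi$ as $\min_{y \neq y'} d_\phi(\mu^y, \mu^{y'})$, so a near-minimizing pair exists. In the non-preserving case I claim the infimum is zero: otherwise the positive quantity $\inf_{y \neq y'} d_\phi(\mu^y, \mu^{y'})$ would serve as a valid $\rho^\phi > 0$ realizing the minimum in Lemma \ref{lem:characterize_well_sep}, contradicting non-preservation. Since we set $\rho^\phi := 0$ in this case, a pair with $d_\phi(x, x') < \gamma/3 = \rho^\phi + \gamma/3$ is available.

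The second step is to inflate this witness into a product of balls of positive measure. Apply Lemma \ref{lem:compact_ABUSE} with target radius $\gamma/3$ to obtain $\delta_1 > 0$ so that $\phi(B(w, \delta_1)) \subseteq B(\phi(w), \gamma/3)$ uniformly in $w \in \cX$. Let $\rho$ denote the original $d_\cX$-margin of $\cD$ and set $\delta^* = \min(\delta_1, \rho/3)$. For any $\tilde x \in B(x, \delta^*)$, $\tilde x' \in B(x', \delta^*)$, the triangle inequality in $\cZ$ gives
\begin{equation*}
d_\phi(\tilde x, \tilde x') \;\leq\; \tfrac{\gamma}{3} + d_\phi(x, x') + \tfrac{\gamma}{3} \;\leq\; \rho^\phi + \gamma.
\end{equation*}
Moreover, because $d_\cX(\mu^y, \mu^{y'}) \geq \rho$ and $\delta^* < \rho/2$, any point of $\supp(\mu) \cap B(x, \delta^*)$ is strictly farther than $\rho/2$ from every other $\mu^{y''}$, so $B(x, \delta^*) \cap \supp(\mu) \subseteq \mu^y$, and analogously for $x'$. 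Openness of $\mu$ (together with $x, x' \in \supp(\mu)$) yields $\mu(B(x, \delta^*)), \mu(B(x', \delta^*)) > 0$, delivering the desired positive product measure.

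The only delicate point is the reduction in the non-preserving case from "no positive $\rho^\phi$ exists" to "the infimum over pairs is $0$"; the contrapositive argument using Lemma \ref{lem:characterize_well_sep} above handles this cleanly. Everything else is routine bookkeeping --- the uniform continuity-type statement of Lemma \ref{lem:compact_ABUSE} transfers the witness's closeness to a whole neighborhood, the original margin confines perturbations to the correct partition block, and openness converts the resulting neighborhoods into positive mass.
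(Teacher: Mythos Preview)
Your proof is correct and follows the same overall strategy as the paper's: locate a witness pair $x\in\mu^y$, $x'\in\mu^{y'}$ with $d_\phi(x,x')$ close to $\rho^\phi$, then inflate to a product of balls of positive measure. The technical execution differs in one useful way. The paper inflates using $d_\phi$-balls $B_\phi(x,\gamma/4)$ and $B_\phi(x',\gamma/4)$, appealing to $\phi(x)\in\supp(\mu^\phi)$ for positivity; it does not explicitly check that points drawn from these $d_\phi$-balls still lie in the correct partition blocks $\mu^y$, $\mu^{y'}$ (which is needed so that $g_\cD$ actually disagrees and $\alpha^\phi$ is finite). You instead inflate using $d_\cX$-balls of radius $\delta^*<\rho/3$, invoke Lemma~\ref{lem:compact_ABUSE} to control the induced $d_\phi$-distance, and use the original margin $\rho$ to pin the perturbed points to their blocks. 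This makes the verification of $g_\cD(\tilde x)\neq g_\cD(\tilde x')$ explicit and handles the non-preserving case (where $d_\phi$-blocks may overlap) without any extra care. A minor remark: you cite openness of $\mu$ for $\mu(B(x,\delta^*))>0$, but this already follows from $x\in\supp(\mu)$ by the definition of support.
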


\begin{proof}
Let $\cD = (\mu ,\eta)$, and let $\{\mu^y: y \in \cY\}$ be the sets corresponding to Definition \ref{defn:well-separated}. Suppose $\phi$ preserves $\cD$. Then the sets $\{\phi(\mu^y): y \in \cY\}$ must be the corresponding sets for $\cD^\phi$, and it follows that $$\min_{y \neq y'}d_\cZ\left(\phi(\mu^y), \phi(\mu^{y'})\right) = \marginx^\phi.$$ On the other hand, if $\phi$ does \textit{not} preserve $\cD$, then we must have $$\min_{y \neq y'}d_\cZ\left(\phi(\mu^y), \phi(\mu^{y'})\right) = 0,$$ as if this distance were positive, then $\phi$ would clearly preserve $\Ds$. Thus in either case, there exists $y, y'$ so that $d\left(\phi(\mu^y), \phi(\mu^{y'})\right) = \marginx^\phi$.

It follows that there exists $x \in \mu^y$ and $x' \in \mu^{y'}$ such that $d_\phi(x, x') \leq \marginx^\phi + \gamma/2$. Let $$\delta = \mu(B_\phi(x, \gamma/4))\mu(B_\phi(x', \gamma/4)).$$ Because $\phi$ is continuous, $\phi(x)$ and $\phi(y)$ lie within the support of $\Ds^\phi$. It follows that $\delta > 0$. $\delta$ is also a lower bound on the probability that we observe $(x_1, x_2) \in B_\phi(x, \gamma/4) \times B_\phi(x', \gamma/4)$, which means it is a lower bound on the probability that $\mv^\phi \leq \marginx^\phi + \gamma.$ This gives the desired result. 
\end{proof}

We now use a similar idea to describe distances between the supports of two measures.

\begin{definition}\label{defn:beta_distribution_margin}
Let $\mu_s, \mu_t$ be measures over $\cX$. Then $\beta^\phi$ is defined as $$\beta^\phi = d_\phi(x_t, \supp(\mu_s)),$$ where $x_t$ is a random variable following distribution $\mu_t$. 
\end{definition}

$\beta^\phi$ can be thought of as representing the distance that a point drawn from $\Dt$ has from $\Ds$ when using distance metric determined by $\phi$. 

It will also be useful to define finite sample version of $\beta_n$, that don't rely on the sets, $\supp(\mu_s)$.

\begin{definition}\label{defn:beta_distribution_margin_n}
Let $\mu_s, \mu_t$ be measures over $\cX$. Let $n > 0$. Then $\beta_n^\phi$ defined as $$\beta_n^\phi = \min_{1 \leq i \leq n} d_\phi(x_t, x_s^i),$$ where $x_t$ is a random variable following distribution $\mu_t$, and $x_s^1, \dots, x_s^n$ are drawn i.i.d from $\mu$. 
\end{definition}

We now show that $\beta_n$ converges to $\beta$.

\begin{lemma}\label{lemma:beta_n_converges}
Let $\phi$ be any feature map. Then $\beta_1^\phi, \beta_2^\phi, \dots$ converges in distribution to $\beta^\phi$.
\end{lemma}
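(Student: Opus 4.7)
The plan is to prove the stronger statement that $\beta_n^\phi \to \beta^\phi$ almost surely on a common probability space, since almost-sure convergence implies convergence in distribution. To this end, realize everything on one space by drawing $x_t \sim \mu_t$ together with an independent i.i.d.\ sequence $x_s^1, x_s^2, \ldots \sim \mu_s$, and set $\beta_n^\phi = \min_{1 \leq i \leq n} d_\phi(x_t, x_s^i)$. Two immediate observations: (i) since $x_s^i \in \supp(\mu_s)$ almost surely for every $i$, one has $d_\phi(x_t, x_s^i) \geq \inf_{x \in \supp(\mu_s)} d_\phi(x_t, x) = \beta^\phi$, so $\beta_n^\phi \geq \beta^\phi$ a.s.; and (ii) $\beta_n^\phi$ is monotone nonincreasing in $n$, so it converges almost surely to some limit $L \geq \beta^\phi$.

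The main step is to show $L \leq \beta^\phi$ almost surely. Condition on $x_t$, fix $\epsilon > 0$, and use the definition of infimum to pick $x^* \in \supp(\mu_s)$ with $d_\phi(x_t, x^*) < \beta^\phi(x_t) + \epsilon/2$. Since $\phi$ is continuous, the preimage $U := \phi^{-1}(B(\phi(x^*), \epsilon/2))$ is an open neighborhood of $x^*$ in $\cX$, and by the defining property of the support $\mu_s(U) > 0$. The i.i.d.\ structure and the second Borel--Cantelli lemma then guarantee that almost surely, infinitely many $x_s^i$ fall in $U$; for any such $i$, the triangle inequality gives $d_\phi(x_t, x_s^i) \leq d_\phi(x_t, x^*) + d_\phi(x^*, x_s^i) < \beta^\phi(x_t) + \epsilon$. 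Thus $L \leq \beta^\phi(x_t) + \epsilon$ almost surely conditional on $x_t$; applying this to a countable sequence $\epsilon \downarrow 0$ and integrating out $x_t$ via Fubini yields $\beta_n^\phi \to \beta^\phi$ unconditionally almost surely, hence in distribution.

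The only nontrivial point is the bridge between the induced distance $d_\phi$ on $\cX$ and the support $\supp(\mu_s) \subseteq \cX$: one cannot simply take a $d_\phi$-ball around $x^*$, since such a ball need not have positive $\mu_s$-mass (distinct points with the same $\phi$-image can be measure-theoretically irrelevant). The remedy, and the only essential use of the standing assumptions on $\Phi$, is to exploit continuity of $\phi$ to pull back a small ball in $\cZ$ around $\phi(x^*)$ to an honest open neighborhood of $x^*$ in $\cX$, whose positive measure is then immediate from the characterization of support in terms of open neighborhoods. Everything else --- the monotone bound from below, the Borel--Cantelli application, and the passage from a.s.\ to distributional convergence --- is routine.
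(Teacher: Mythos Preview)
Your proof is correct. The paper works directly at the CDF level: for each threshold $r$, whenever $d_\phi(x_t, \supp(\mu_s)) < r$ one picks a witness $x_s \in \supp(\mu_s)$ and observes that some sample eventually lands close to $x_s$, yielding $\Pr[\beta_n^\phi < r] \to \Pr[\beta^\phi < r]$ (the paper's sketch leaves the easy reverse inequality implicit). You instead couple everything on one probability space and prove almost-sure convergence via monotonicity together with the second Borel--Cantelli lemma. Both routes hinge on the same key step---an open neighborhood of a support point carries positive $\mu_s$-mass---but yours is more explicit and delivers the stronger almost-sure conclusion.

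One minor point: the concern in your final paragraph is unfounded. The open $d_\phi$-ball $\{x : d_\phi(x^*, x) < \epsilon/2\}$ \emph{is} the preimage under $\phi$ of the open $\cZ$-ball of radius $\epsilon/2$ about $\phi(x^*)$; continuity of $\phi$ makes it open in $\cX$, so it has positive $\mu_s$-measure directly since $x^* \in \supp(\mu_s)$. In other words, your ``remedy'' and the supposed ``problem'' are the same set, and no bridge is needed. This does not affect the proof, only the commentary.
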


\begin{proof}
For any $r > 0$, the probability that $\beta^\phi < r$ is precisely the probability that some $x_t \in \supp(\mu_t)$ is chosen so that $x_t$ has distance less than $r$ from $\supp(\mu_s)$. For all such $x_t$, and let $x_s \in \supp(\mu_s)$ satisfy $d(x_t, x_s) < r$. Furthermore pick $\epsilon$ so that $2\epsilon < r - d(x_t, x_s)$. It follows that $\beta_n^\phi < r$ will hold if one of the $n$ points selected from $\mu_s$ will be within distance $\epsilon$ from $x_s$. However, this event occurs with high probability for $n$ being sufficiently large. 
\end{proof}

\section{Proof of Theorem \ref{thm:k_nn_converge}}

First, we characterize areas of $\cX$ that are likely to be correctly classified by composing nearest neighbors with $\phi$. 

\begin{definition}\label{defn:p_delta}
Let $\phi$ be a feature map that preserves $\Ds$. Let $0 < p< 1$, and let $r > 0$ be a distance. We let $\cX_{p, r}^\phi$ denote the set of all points $x$ such that there exists $x'$ for which the following hold.
\begin{enumerate}
	\item $d_\phi(x, x') < \frac{\marginx^\phi}{2} - r$.
	\item $\mu_s\left(B_\phi(x', r)\right) \geq p$. 
\end{enumerate}
\end{definition}

Here $p$ represents a small amount of mass that must be close to $x$, and $x'$ and $r$ determine a region in which that mass is concentrated. The idea will be that $x$ can be accurately classified using points sampled from $B(x', r)$. We now formalize this with the following lemma.

\begin{lemma}\label{lem:k_nn_p_delta_accurate}
Fix $p, r > 0$. Then there exists $N$ such that for all $n \geq N$, for all $\phi \in \Phi$ and $x \in \cX_{p, r}^\phi$ with probability at least $1 - \frac{1}{n^4}$ over $S \sim \Ds^\phi$, $$\n_S^{\phi}(x) = g_{\Ds^\phi}\left(\argmin_{z \in \supp(\Ds^\phi)} d_\cZ(z, \phi(x))\right).$$
\end{lemma}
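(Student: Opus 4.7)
The plan is to combine a geometric argument—locating where the $k_n$-nearest neighbors land in feature space—with two concentration bounds: a Chernoff-type step ensuring enough source points populate the region near $\phi(x)$, and a Hoeffding bound on the labels of those points.

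First I would set $z^* = \argmin_{z \in \supp(\mu_s^\phi)} d_\cZ(z, \phi(x))$ and $y^* = \bayes_{\Ds^\phi}(z^*)$. Since $\phi$ source-preserves $\Ds$, Lemma \ref{lem:characterize_well_sep} gives that $\{\phi(\mu_s^y) : y \in \cY\}$ partitions $\supp(\mu_s^\phi)$ with pairwise distance at least $\marginx^\phi$. The ball $B_\phi(x', r)$ has positive source mass, so under the open-measure assumption $\supp(\mu_s^\phi)$ meets $\overline{B(\phi(x'), r)}$, yielding $d_\cZ(\phi(x), z^*) \leq d_\phi(x, x') + r < \marginx^\phi/2$. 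Any other $z \in \supp(\mu_s^\phi)$ with $d_\cZ(z, \phi(x)) < \marginx^\phi/2$ then satisfies $d_\cZ(z, z^*) < \marginx^\phi$ by the triangle inequality and must lie in the same partition piece $\phi(\mu_s^{y^*})$.

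Next I would show that the $k_n$ nearest neighbors all come from this piece. The event $\phi(\tilde{x}) \in B(\phi(x'), r)$ has source probability at least $p$, so by a Chernoff bound, with probability at least $1 - e^{-np/8}$, at least $np/2$ of the $n$ sample points fall in this ball and therefore within $\phi$-distance $\marginx^\phi/2$ of $\phi(x)$. Since $k_n/n \to 0$, for $n$ large $k_n \leq np/2$ and $e^{-np/8} \leq \tfrac{1}{2 n^4}$. On this good event every one of the $k_n$ neighbors of $\phi(x)$ lies within $\phi$-distance $<\marginx^\phi/2$ of $\phi(x)$, so by the geometric step their images lie in $\phi(\mu_s^{y^*})$; since the $\phi(\mu_s^y)$ are disjoint in $\cZ$, the pre-images $\tilde{x}_{(i)}$ themselves lie in $\mu_s^{y^*}$.

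Finally, conditional on the neighbors' pre-images lying in $\mu_s^{y^*}$, the labels $y_{(i)}$ are independent with $\Pr[y_{(i)} = y^* \mid \tilde{x}_{(i)}] - \Pr[y_{(i)} = y \mid \tilde{x}_{(i)}] \geq \marginy$ for all $y \neq y^*$ (the label margin of $\Ds$). Applying Hoeffding to each of the $|\cY| - 1$ differences $C_{y^*} - C_y$ and union-bounding shows the plurality is $y^*$ with probability at least $1 - |\cY| \exp(-k_n \marginy^2 / 2)$; the hypothesis $k_n/\log n \to \infty$ makes this exceed $1 - \tfrac{1}{2 n^4}$ for $n$ large, and combining the two bad events yields the claimed $1 - n^{-4}$ bound. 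The main subtlety is ensuring a clean transfer between the partition $\{\mu_s^y\}$ of $\supp(\mu_s)$ and the induced partition $\{\phi(\mu_s^y)\}$ of $\supp(\mu_s^\phi)$—that images in $\phi(\mu_s^{y^*})$ really do have pre-images in $\mu_s^{y^*}$—together with keeping the tie-breaking ordering $\pi$ of Appendix \ref{app:knn} consistent so that ``the $k_n$ nearest neighbors'' is an unambiguous set.
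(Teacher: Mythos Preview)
Your proposal is correct and follows essentially the same approach as the paper's proof: identify the relevant label region via the margin structure of $\Ds^\phi$, use a concentration bound to ensure at least $k_n$ source points land in $B_\phi(x',r)$ (so all $k_n$ neighbors come from $\mu_s^{y^*}$), and then apply Hoeffding to the labels using the label margin $\marginy$ together with $k_n/\log n \to \infty$. Your write-up is in fact a bit more explicit than the paper's about the geometric step and the pre-image issue you flag at the end; the paper handles these by the same triangle-inequality observation but states it more tersely.
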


\begin{proof}
Because $\cD_s$ is well-separated, let $\mu_s^y$ denote the regions that correspond to Definition \ref{defn:well-separated}. Let $x'$ be the point as defined in the definition of $\cX_{p, r}^\phi$. By applying Lemma \ref{lem:characterize_well_sep}, observe that there exists $y \in \cY$ such that $\argmin_{z \in \supp(\Ds^\phi)} \in \phi(\mu_s^y)$, and $x' \in \mu_s^y$. This holds because if it didn't, then the triangle inequality would show that $\phi(\mu_s^y)$ and $\phi(\mu_s^{y'})$ have distance less than $\rho^\phi$. 

It now suffices to show that with probability at least $1 - \frac{1}{n^2}$ over $S \sim \Ds^\phi$, $\n_S^\phi(x) = y$. 

To do this, let $S = \{(x_1, y_1), \dots, (x_n, y_n)\}$, and let $X = \{x_1, \dots, x_n\}$. We can view $S$ as being constructed by first drawing $X$, and then drawing the labels of each of its points.

Observe that if $B_\phi(x', r)$ contains at least $k_n$ points, then the $k_n$ nearest neighbors (according to $d_\phi$) of $x$ will all be drawn from $\mu_s^y$. To this end, by Hoeffding's inequality, we see that $$\Pr\left[|X \cap B_\phi(x', r)| > \frac{np}{2}\right] \geq 1 - \exp\left(\frac{n^2p^2}{2n}\right) = 1 - \exp\left(\frac{np^2}{2}\right).$$ Thus, for $n$ sufficiently large (depending only on $p$), this quantity is at least $1  - \frac{1}{2n^4}$, and $\frac{np}{2} > k_n$. This means that with probability at least $1- \frac{1}{2n^4}$, the $B_\phi(x', r)$ contains at least $k_n$ points. 

Next, suppose that this even occurs. We now select the labels for our points. Because of our method of generating $S$, we can assume that these labels are i.i.d and drawn for points in $\mu^y$. Let the label of the $i$th nearest neighbor of $x$ be denoted as $y_i$. For all $y' \neq y$, define $J_i^{y'}$ as the random variable that is $1$ if $y_i = y$, $-1$ if $y_i = y'$, and $-1$ otherwise. The key observation is that $\n_S^\phi(x) = y$ if and only if $\sum_{i=1}^{k_n} J_i^{y'} > 0$ for all $y' \neq y$ as this will imply that $y$ is the pluarlity choice. 

Because $\cD_s$ is well separated, it has label margin $\Delta$. Therefore, $J_i^{y'}$ is a random variable bounded in $[-1, 1]$ with expected value at least $\Delta$. It follows by Hoeffding's inequality, that $$\Pr[\sum_{i=1}^{k_n} J_i^{y'} > 0] \geq 1 - \exp\left(\frac{-2\Delta^2 k_n^2}{4k_n}\right) = 1 - \exp\left(\frac{-\Delta^2k_n}{2}\right).$$ Because $k_n \geq \omega(\log n)$, it follows that for a sufficiently large value of $n$, this quantity is at least $1 - \frac{1}{2n^4|\cY|}$. Thus taking a union bound over all $y' \in \cY \setminus \{y\}$ gives the desired result. 
\end{proof}

Here observe that we are comparing the nearest neighbors classifier using $\phi'$ to bayes-optimal over $\Ds^\phi$, where $\phi$ is the original feature map we are considering. In other words, this lemma implies that small perturbations to the feature map do not affect classification.

Next, we show that the entire support of the target distribution, $\Dt = (\mu_t, \eta_t)$, can be covered using the regions $\cX_{p, r}$. 

\begin{lemma}\label{lem:p_r_cover}
Let $\rho > 0$. Then there exists $p, r > 0$ such that the following holds. For all $\phi \in \Phi$ that realize SIRM ($\Ds, \Dt$) and for which $\Ds^\phi$ has margin at least $\rho$, $$\supp(\Ds), \supp(\Dt) \subseteq \cX_{p, r}^\phi.$$
\end{lemma}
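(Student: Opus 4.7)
The plan is to exhibit explicit choices of $p$ and $r$ depending only on $\rho$ and the fixed constant $\coverconst$, and then verify both conditions of Definition \ref{defn:p_delta} uniformly over all qualifying $\phi$. I would start by setting $r := \frac{(\coverconst-2)\rho}{4\coverconst}$. Since $\coverconst > 2$, this guarantees that for every $\phi$ with $\marginx^\phi \geq \rho$, we have $r < \tfrac{1}{2}\marginx^\phi$, and moreover $\tfrac{\marginx^\phi}{\coverconst} < \tfrac{\marginx^\phi}{2} - r$; the latter is the slack I will need to accommodate contraction witnesses. To control the $\mu_s$-mass of $d_\phi$-balls uniformly over $\phi$, I would then apply Lemma \ref{lem:compact_ABUSE} at level $r$ to obtain $\delta > 0$ such that $B(x, \delta) \subseteq B_\phi(x, r)$ for every $x \in \cX$ and every $\phi \in \Phi$ simultaneously. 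This reduces the task of lower-bounding $\mu_s(B_\phi(x, r))$ to lower-bounding the $\phi$-free quantity $\mu_s(B(x, \delta))$.

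Next I would obtain a uniform positive lower bound on $\mu_s(B(x, \delta))$ over $x \in \supp(\mu_s)$ via a covering argument. Since $\supp(\mu_s)$ is a closed subset of the compact space $\cX$, it is itself compact, and the open cover by balls of radius $\delta/2$ centered at points of $\supp(\mu_s)$ admits a finite subcover with centers $x_1, \dots, x_N \in \supp(\mu_s)$. By the defining property of the support, $\mu_s(\{y : d(y, x_i) < \delta/2\}) > 0$ for each $i$, so I may set $p := \min_i \mu_s(\{y : d(y, x_i) < \delta/2\}) > 0$. For any $x \in \supp(\mu_s)$, picking $i$ with $d(x, x_i) < \delta/2$ and using the triangle inequality gives $\{y : d(y, x_i) < \delta/2\} \subseteq B(x, \delta)$, whence $\mu_s(B(x, \delta)) \geq p$.

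With these choices in hand, I would verify membership in $\cX_{p, r}^\phi$ in two cases. For $x \in \supp(\mu_s)$, take the witness $x' = x$: then $d_\phi(x, x') = 0 < \tfrac{\marginx^\phi}{2} - r$, and $\mu_s(B_\phi(x, r)) \geq \mu_s(B(x, \delta)) \geq p$. For $x \in \supp(\mu_t)$, contraction of $\phi$ supplies $z_s \in \supp(\mu_s^\phi)$ with $d_\cZ(\phi(x), z_s) < \tfrac{\marginx^\phi}{\coverconst}$. A short compactness argument (continuity of $\phi$ and compactness of $\supp(\mu_s)$ make $\phi(\supp(\mu_s))$ closed, and any point outside it has a neighborhood of zero $\mu_s^\phi$-mass) shows $\supp(\mu_s^\phi) \subseteq \phi(\supp(\mu_s))$, so $z_s = \phi(x_s)$ for some $x_s \in \supp(\mu_s)$. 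Taking $x' = x_s$ discharges condition~1 via the choice of $r$, and condition~2 exactly as in the source case applied at $x_s$. The main obstacle throughout is securing a $\phi$-uniform positive lower bound on the $\mu_s$-mass of feature-space balls; this is precisely what Lemma \ref{lem:compact_ABUSE} delivers by bundling the compactness of $\Phi$ and $\cX$ with the continuity of the feature maps, after which the rest is a routine covering/triangle-inequality argument.
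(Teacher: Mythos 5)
Your proof is correct and follows essentially the same route as the paper's: choose $r$ as a fraction of $\rho$ determined by $\coverconst$, invoke Lemma \ref{lem:compact_ABUSE} to find a $\phi$-uniform $\delta$-ball inside every $d_\phi$-ball of radius $r$, lower-bound $\mu_s(B(x,\delta))$ by a finite-subcover argument over the compact $\supp(\mu_s)$, and produce witnesses $x'$ via contraction (for target points) or trivially (for source points). You are in fact slightly more careful than the paper in two spots it glosses over — you explicitly cover the $\supp(\mu_s)\subseteq\cX_{p,r}^\phi$ half of the claim, and you justify that contraction witnesses in $\supp(\mu_s^\phi)$ can be pulled back to points of $\supp(\mu_s)$ — but these are finishing touches, not a different argument.
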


\begin{proof}
Let $r = \rho\left(\frac{1}{2} - \frac{1}{\Lambda}\right)$. By the Definition of $\Lambda$, $r > 0$. Now let $x \in \supp(\Dt)$ be arbitrary.

Because $\phi$ contracts $(\Ds, \Dt)$, there exists $x' \in \supp(\mu_s)$ such that $d_\phi(x, x') < \frac{\rho^\phi}{\Lambda}$, where $\rho^\phi$ is the margin of $D_s^\phi$. It follows that
\begin{equation*}
\begin{split}
d_\phi(x, x') &< \frac{\rho^\phi}{\Lambda} \\
&= \frac{1}{2}\rho^\phi - \left(\frac{1}{2} - \frac{1}{\Lambda}\right)\rho^\phi \\
&\leq \frac{1}{2}\rho^\phi - \left(\frac{1}{2} - \frac{1}{\Lambda}\right)\rho \\
&= \frac{\rho^\phi}{2} - r.
\end{split}
\end{equation*}
Finally, by Lemma \ref{lem:compact_ABUSE}, there exists $s > 0$ such that for all $\phi \in \Phi$, $\phi(B(x, s)) \subseteq B(\phi(x), r))$. This implies that $B(x', s) \subseteq B_\phi(x', r)$ for all $x'$. Finally, we take $$p = \inf_{x \in \supp(\mu_s)}\mu_s(B(x, s).$$ It suffices to show that $p > 0$. To do so, observe that $\supp(\mu_s)$ is closed and therefore compact (as $\cX$ is compact by assumption). Take an open cover of $\mu_s$ by balls of radius $s/2$. Then it has a finite sub-cover. Each of htese balls have positive mass under $\mu_s$, and furthermore every ball $B(x, s)$ where $x \in \supp(\mu_s)$ must fully contain at least one of these balls. It follows that $\mu_s(B(x, s)) \geq q$, where $q>0$ is the minimum mass of one of these balls. Since $q > 0$, it follows that $p > 0$, as desired. 
\end{proof}

\begin{lemma}\label{lem:strong_k_nn_converge}
Let $\rho > 0$. Then there exists $N > 0$ such that for all $\phi$ that relate $(\Ds, \Dt)$ such that $\Ds^\phi$ has margin at least $\rho$, if $n \geq N$, then with probability at least $1- \frac{1}{n^2}$ over $S \sim \Ds^n$, $$R(\n_S^\phi, \Dt) < R_t^* + \frac{1}{n^2}.$$ 
\end{lemma}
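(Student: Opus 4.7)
\medskip

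The plan is to combine the covering lemma (Lemma \ref{lem:p_r_cover}) with the pointwise nearest-neighbor consistency lemma (Lemma \ref{lem:k_nn_p_delta_accurate}), and then upgrade a pointwise-in-$x$ high-probability statement to a uniform-in-$x$ statement via Fubini together with Markov's inequality.

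First, I would apply Lemma \ref{lem:p_r_cover} with the fixed $\rho$ to produce constants $p, r > 0$ such that $\supp(\mu_t) \subseteq \cX_{p,r}^\phi$ for every $\phi$ realizing the Statistical IRM Assumption whose induced source margin is at least $\rho$. Then I would feed these $p, r$ into Lemma \ref{lem:k_nn_p_delta_accurate} to obtain $N$ such that for all $n \geq N$, for every such $\phi$ and every $x \in \cX_{p,r}^\phi$, with probability at least $1 - n^{-4}$ over $S \sim \Ds^n$, the value $\n_S^\phi(x)$ equals $g_{\Ds^\phi}(z^*(x))$, where $z^*(x) = \arg\min_{z \in \supp(\mu_s^\phi)} d_\cZ(z, \phi(x))$. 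Crucially, neither $p, r$ nor $N$ depend on the particular $\phi$ in the class being considered, so uniformity in $\phi$ is handled for free.

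Next, I would identify $\n_S^\phi(x)$ with $g_{\Dt}(x)$ on ``good'' points. For $x \in \supp(\mu_t)$, contraction produces $x_s \in \supp(\mu_s)$ with $d_\phi(x, x_s) < \rho^\phi/\Lambda < \rho^\phi/2$; Bayes-unification then gives $g_{\Ds}(x_s) = g_{\Dt}(x)$. Since $\phi$ source-preserves and $\Ds^\phi$ is separated, the Bayes labels on $\Ds$ and $\Ds^\phi$ agree at corresponding points in the supports, so $g_{\Ds^\phi}(\phi(x_s)) = g_{\Dt}(x)$. A short triangle-inequality argument using the source margin $\rho^\phi$ shows that $z^*(x)$ lies in the same class component of $\supp(\mu_s^\phi)$ as $\phi(x_s)$, so $g_{\Ds^\phi}(z^*(x)) = g_{\Dt}(x)$ as well. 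Thus, whenever $x$ is ``good'', $\n_S^\phi(x) = g_{\Dt}(x)$.

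Finally, I would promote the pointwise bound to a bound on the $\mu_t$-mass of the ``bad'' set. Let $B(S) = \Pr_{X \sim \mu_t}[\n_S^\phi(X) \neq g_{\Dt}(X)]$. By Fubini applied to the indicator of ``bad,'' together with Lemma \ref{lem:k_nn_p_delta_accurate}, $\mathbb{E}_{S}[B(S)] \leq n^{-4}$, and so Markov's inequality yields $\Pr_S[B(S) > n^{-2}] \leq n^{-2}$. On the complementary event, because $\Dt$ is well-separated (so $g_{\Dt}$ is Bayes-optimal on $\supp(\mu_t)$), the excess risk is at most $\Pr_{X \sim \mu_t}[\n_S^\phi(X) \neq g_{\Dt}(X)] \leq n^{-2}$, completing the proof. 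The main obstacle I anticipate is the middle step: verifying that the ``nearest source feature point'' $z^*(x)$ lies on the correct side of the class partition of $\supp(\mu_s^\phi)$, which requires carefully juggling the three ingredients of the Statistical IRM Assumption (source-preservation, contraction, Bayes-unification) and the fact that $\Lambda > 2$; minor measurability subtleties (e.g.\ ensuring the bad set is measurable so that Fubini/Markov apply) should be routine given the open-measure assumption.
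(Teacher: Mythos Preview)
Your proposal is correct and follows essentially the same approach as the paper's proof: invoke Lemma \ref{lem:p_r_cover} to obtain $p,r$, apply Lemma \ref{lem:k_nn_p_delta_accurate} for the pointwise high-probability statement, identify the resulting label with $g_{\Dt}(x)$ via the SIRM conditions, and then use Fubini plus Markov's inequality to convert the pointwise $1-n^{-4}$ bound into the stated risk bound with probability $1-n^{-2}$. Your handling of the label-identification step is in fact more explicit than the paper's (which skips over the passage from $g_{\Ds}$ to $g_{\Ds^\phi}$ and the triangle-inequality check that $z^*(x)$ lands in the correct class component).
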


\begin{proof}
Let $\phi$ relate $\Ds, \Dt$, and suppose $\Ds^\phi$ has margin $\rho^\phi$. let $r, p$ be as in Lemma \ref{lem:p_r_cover}. Because $\phi$ relates $\Ds, \Dt$, observe that for all $x \in \supp(\mu_t)$, $$g_{Dt}(x) = g_{\Ds^\phi}\left(\argmin_{z \in \supp(\Ds^\phi)}d_\cZ(z, \phi(x)\right).$$ To see this, observe that Definition \ref{defn:covers} implies that $min_{z \in \supp(\Ds^\phi)}d_\cZ(z, \phi(x)) < \frac{\rho^\phi}{2}$, and Definition \ref{defn:joint_classifies} implies that $z$ must be labeled by $g_{\Ds^\phi}$ the same as $x$ is by $g_{\Dt}$.

Next, select $N$ from Lemma \ref{lem:k_nn_p_delta_accurate}. It follows that since all $x \in \supp(\mu_t) \in \cX_{p, r}^\phi$, for $n \geq N$, we have that with probability at least $1- \frac{1}{n^4}$ over $S \sim \Ds^n$, $$\n_S^\phi(x)  = g_{\Ds^\phi}\left(\argmin_{z \in \supp(\Ds^\phi)}d_\cZ(z, \phi(x)\right)=g_{Dt}(x).$$ A standard of markov's inequality that converts the expected loss into a loss bound with high probability completes the proof.
\end{proof}

We are now prepared to prove Theorem \ref{thm:k_nn_converge}.

\begin{proof}
Any $\phi$ that relates $(\Ds, \Dt)$ has positive margin, and  so the previous lemma applies for sufficiently large $n$. Since $\frac{1}{n^2} \to 0$, it immediately follows that $\n_S^\phi$ converges in risk to the bayes optimal of $\Dt$, as desired. 
\end{proof}

\section{Proof of Theorem \ref{thm:setting_2_upper_bound}}\label{app:pure_source}

\subsection{Description of our learning rule}

We begin with our learning rule, $L$, that achieves the bound given in Theorem \ref{thm:setting_2_upper_bound}.

\begin{algorithm}[tbh]
   \caption{$\nLabel(S \sim \Ds^n)$}
   \label{alg:setting_2}
\begin{algorithmic}[1]
   \State $S_{tr} \gets  \{(x_i, y_i): 1 \leq  i \leq n/4\}$
   \State $S_{loss} \gets \{(x_i, y_i): n/4 < i \leq n/2 \}$
   \State $S_{margin} \gets \{ (x_i, y_i): n/2 < i \leq 3n/4 \}$
   \State $S_{final} \gets \{ (x_i, y_i): 3n/4 < i \leq n \}$
   \State $\epsilon \gets  n^{-1/3}$
   \vspace{1mm}
   \State $\Phi_\epsilon = \left\{\phi: \textsc{source\_loss}(\phi, S_{loss}) < \epsilon \right\}$
   \State $\hat{\phi} \gets \argmax_{\phi \in \Phi_\epsilon} \textsc{source\_margin}(\phi, S_{margin})$
   \State \textbf{return} $\n_{S_{final}}^{\hat{\phi}}$
\end{algorithmic}
\end{algorithm}


\subsection{Bounding the error in estimating the loss}

Our method for estimating the loss over the source distribution that a nearest neighbors classifier is given in Algorithm \ref{alg:setting_2}. We simply evaluate the empirical risk using nearest neighbors over the designated loss set, $S_{loss}$. 

We now bound the accuracy of this method using the following Lemma.

\begin{lemma}\label{lem:bound_source_accuracy}
Let $\cD$ be an arbitrary data distribution. Let $S_{tr}$ be a set of $n$ labeled points, and let $S_{loss} \sim \cD^n$ be an i.i.d sample that is independent of $S_{tr}$. Then there exists $N >0$ such that for all $n \geq N$, with probability at least $1 - \frac{1}{n^2}$ over $S_{tst} \sim \Ds^n$, for all $\phi \in \Phi$, $$|source\_loss(\phi, S_{tr}, S_{tst}) - R(\n_{S_{tr}}^\phi, \cD)| < n^{-1/3}.$$ 
\end{lemma}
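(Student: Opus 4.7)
The plan is to treat this as a standard VC uniform convergence statement applied to the binary hypothesis class $\cH(S_{tr}, \Phi)$ from Definition~\ref{defn:h_s_phi}, conditional on the training sample $S_{tr}$. First I would fix $S_{tr}$ arbitrarily and observe that for each $\phi \in \Phi$ the indicator $h_{S_{tr},\phi}(x,y) = \ind\bigl(\n_{S_{tr}}^\phi(x) = y\bigr)$ has expectation $1 - R(\n_{S_{tr}}^\phi, \cD)$ under $(X,Y) \sim \cD$, while its empirical average on the independent test sample $S_{tst}$ equals $1 - source\_loss(\phi, S_{tr}, S_{tst})$. So uniform closeness of empirical and true means over the family $\cH(S_{tr}, \Phi)$ is precisely the desired conclusion.

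Next, I would invoke Lemma~\ref{lem:vc_bound_classification} to bound
\begin{equation*}
vc\bigl(\cH(S_{tr}, \Phi)\bigr) \leq c_1 \dd(\Phi)\log\bigl(n + \dd(\Phi)\bigr),
\end{equation*}
and apply the standard VC-based uniform deviation inequality to the i.i.d.\ sample $S_{tst} \sim \cD^n$, which is independent of $S_{tr}$. Setting the failure probability to $n^{-2}$ yields, conditional on $S_{tr}$,
\begin{equation*}
\sup_{\phi \in \Phi}\bigl|source\_loss(\phi, S_{tr}, S_{tst}) - R(\n_{S_{tr}}^\phi, \cD)\bigr| \leq C \sqrt{\frac{\dd(\Phi)\log^2(n) + \log n}{n}}
\end{equation*}
with probability at least $1 - n^{-2}$, for some absolute constant $C$. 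Since this bound holds for every fixed $S_{tr}$, the unconditional probability is at least $1 - n^{-2}$ as well.

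Finally, the right-hand side is $\tilde{O}(n^{-1/2})$ and is therefore $o(n^{-1/3})$; so for all $n \geq N$ with $N$ sufficiently large (depending on $\dd(\Phi)$) the right-hand side is strictly less than $n^{-1/3}$, giving the lemma. The only subtle point, and the nearest thing to an obstacle, is that the effective VC dimension here itself grows like $\log n$, because $\cH(S_{tr}, \Phi)$ is built from an $n$-point training set. One must verify that this logarithmic overhead does not inflate the deviation rate beyond $n^{-1/3}$, but since $\sqrt{\log^2(n)/n}$ decays much faster than $n^{-1/3}$, the overhead is benign and no machinery beyond VC theory and Lemma~\ref{lem:vc_bound_classification} is required.
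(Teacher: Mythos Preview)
Your proposal is correct and follows essentially the same route as the paper: reduce to uniform deviation over the binary class $\cH(S_{tr},\Phi)$, invoke Lemma~\ref{lem:vc_bound_classification} for the VC bound, apply standard VC uniform convergence with failure probability $n^{-2}$, and observe that the resulting $\tilde O(n^{-1/2})$ rate dominates $n^{-1/3}$ for large $n$. The only cosmetic difference is that the paper phrases the last step via the sample-complexity form of the fundamental theorem (checking $n \geq \Omega\bigl((vc + \log(1/\delta))/\epsilon^2\bigr)$ with $\epsilon = n^{-1/3}$, $\delta = n^{-2}$) rather than bounding the deviation directly, but the content is identical.
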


\begin{proof}
Fix $\epsilon = n^{-1/3}$, and define $E$ as the event that the empirical risk induced by each $\phi \in \Phi$ is representative of the true risk. That is, $$E = \ind \left(\sup_{\phi \in \Phi} \left|R(\n_{S_{tr}}^\phi, \cD) - \frac{1}{n}\sum_{(x,y) \in S_{loss}}\ind\left(\n_{S_{tr}}^\phi(x) \neq y\right)\right| < \epsilon\right).$$

Our goal is to show that $E$ holds with probability at least $1 - \frac{1}{n^2}$, for sufficiently large $n$. The key observation is that for all $\phi \in \Phi$, $$\ind\left(\n_{S_{tr}}^\phi(x) \neq y\right) = 1 - h_{S, \phi}((x,y)),$$ where $h_{S, \phi}(x, y)$ is as defined in Definition \ref{defn:h_s_phi}. Thus, it follows that  
\begin{equation*}
\begin{split}
E &= \ind \left(\sup_{\phi \in \Phi} \left|R(\n_{S_{tr}}^\phi, \cD) - \frac{1}{m}\sum_{(x,y) \in S_{loss}}\ind\left(\n_{S_{tr}}^\phi(x) \neq y\right)\right| < \frac{\epsilon}{2}\right) \\
&= \ind\left( \sup_{h \in \cH(S_{tr}, \Phi)} \left|\mathbb{E}_{(x,y) \sim \cD}[1 - h_{S, \phi}(x,y)] - \frac{1}{m}\sum_{(x,y) \in S_{loss}}1 - h_{S, \phi}(x,y)\right| < \frac{\epsilon}{2}\right) \\
&= \ind\left( \sup_{h \in \cH(S_{tr}, \Phi)} \left|\mathbb{E}_{(x,y) \sim \cD}[h_{S, \phi}(x,y)] - \frac{1}{m}\sum_{(x,y) \in S_{loss}}h_{S, \phi}(x,y)\right| < \frac{\epsilon}{2}\right).
\end{split}
\end{equation*}

To analyze the latter quantity, a standard application of the fundamental theorem of statistical learning (see Shavel-Schwartz and Ben-David) implies that $E$ holds with probability $1 - \delta$ provided that $|S_{loss}| = n \geq \Omega\left(\frac{vc\left(\cH(S_{tr}, \Phi)\right) + \ln \frac{1}{\delta}}{\epsilon^2}\right)$. 

Fix $\delta = \frac{1}{n^2}$. By Lemma \ref{lem:vc_bound_classification}, $vc\left(\cH(S_{tr}, \Phi)\right) \leq c_1\partial(\Phi)\log\left(n + \partial(\Phi)\right)$. Substituting this, along with $\epsilon = n^{-1/3}$, we see that 
\begin{equation*}
\begin{split}
\frac{vc\left(\cH(S_{tr}, \Phi)\right) + \ln \frac{1}{\delta}}{\epsilon^2} &\leq \frac{c_1\partial(\Phi)\log\left(n + \partial(\Phi)\right) + 2 \ln n}{n^{-2/3}} \\
&\leq Cn^{2/3}\log n,
\end{split}
\end{equation*}
where $C$ is some constant that depends on $\partial(\Phi)$. Since $n$ assymptotically dominates this quantity, it follows that for sufficiently large $n$, we indeed have $n \geq \Omega\left(\frac{vc\left(\cH(S_{tr}, \Phi)\right) + \ln \frac{1}{\delta}}{\epsilon^2}\right)$, which proves the desired result.
\end{proof}

\begin{algorithm}[tb]
   \caption{$source\_loss(\phi, S_{tr}, S_{loss})$}
   \label{alg:source_loss}
\begin{algorithmic}[1]
	\State \textbf{return} $\frac{1}{|S_{loss}|}\sum_{(x,y) \in S_{loss}} \ind\left(\n^{\phi}_{S_{tr}}(x) \neq y\right)$ 
\end{algorithmic}
\end{algorithm}

\subsection{Bounding the error in estimating the margin}

Our method for estimating the margin of a distributions, $\Ds^\phi$, is given in Algorithm \ref{alg:source_margin}. The main idea is to split the set, $S_{source}$, into two equal parts, $S_{source}^a$, and $S_{source}^b$. We then use a nearest neighbors classifier over $S_{tr}$ to label the points in both $S_{source}^a$ and $S_{source}^b$. Finally, we measure the distance between differently labeled points from $S_{source}^a$ and $S_{source}^b$ respectively. For technical reasons, when comparing distances between $S_{source}^a$ and $S_{source}^b$, we only compare points that have the same index. This allows us to exploit independence between each comparison we make.

We now show that this method is likely to accurately estimate margins by showing that it gives good estimates for $\alpha^\phi$, which is described in Definition \ref{defn:distribution_margin}.  

\begin{lemma}\label{lem:source_margin_bound}
There exists $N$, such that for all $n \geq N$, if $S_{tr}$ is a set of $n$ labeled points, with probability at least $1-\frac{1}{n}$ over $S_{source} \sim \Ds^{n}$ and $S_{loss} \sim \Ds^n$, at least one of the two conditions will hold:
\begin{enumerate}
	\item $source\_loss(\phi, S_{tr}, S_{loss}) > R_s^* + O(n^{-1/4})$.
	\item $\Pr[\alpha^\phi < source\_margin(\phi, S_{tr}, S_{source})] < O(n^{-1/4})$. 
\end{enumerate}
\end{lemma}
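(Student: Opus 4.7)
The plan is to establish the disjunction by supposing condition (1) fails and showing condition (2) must then hold. The central idea is that when the NN classifier trained on $S_{tr}$ is near Bayes optimal on $\Ds$, its label disagreements on independent pairs from $\mu_s$ closely track Bayes label disagreements, so the empirical differently-labeled pair statistics collected by $source\_margin$ reflect the tail behaviour of $\alpha^\phi$. Assuming $source\_loss(\phi, S_{tr}, S_{loss}) \leq R_s^* + O(n^{-1/3})$, Lemma \ref{lem:bound_source_accuracy} applied with $\cD = \Ds$ yields uniformly in $\phi$ that $R(\n_{S_{tr}}^\phi, \Ds) \leq R_s^* + O(n^{-1/3})$ with probability $\geq 1 - 1/n^2$, and the label margin $\Delta$ combined with Lemma \ref{lem:stuff_close_to_bayes_good} gives
\begin{equation*}
\Pr_{x \sim \mu_s}[\n_{S_{tr}}^\phi(x) \neq g_{\Ds}(x)] = O(n^{-1/3}).
\end{equation*}
A union bound over an independent pair from $\mu_s^2$ then shows that $\n_{S_{tr}}^\phi$ and $g_{\Ds}$ agree on both coordinates with probability at least $1 - O(n^{-1/3})$.

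Next I would invoke uniform convergence for the class $\cQ(\Phi, S_{tr})$ of Definition \ref{defn:q_phi_r_S}, whose VC dimension is polylogarithmic in $n$ by Lemma \ref{lem:complicated_class_vc_bound}. Because $source\_margin$ compares the $n/2$ index-matched, i.i.d.\ pairs $(x_i^a, x_i^b)$, which are independent of $S_{tr}$, standard VC concentration applied conditional on $S_{tr}$ guarantees that, simultaneously over $(\phi, r)$, the empirical frequency of pairs with $d_\phi(x_i^a, x_i^b) < r$ and $\n_{S_{tr}}^\phi(x_i^a) \neq \n_{S_{tr}}^\phi(x_i^b)$ lies within $O(n^{-1/4})$ of the corresponding true probability, with probability $\geq 1 - 1/n^2$.

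Let $\hat{m} = source\_margin(\phi, S_{tr}, S_{source})$. By construction of the algorithm, the empirical frequency of index-matched pairs with $d_\phi < \hat{m}$ that receive different NN labels is negligible, so uniform convergence transfers this to $\Pr_{(x,x') \sim \mu_s^2}[d_\phi(x,x') < \hat{m},\ \n_{S_{tr}}^\phi(x) \neq \n_{S_{tr}}^\phi(x')] = O(n^{-1/4})$. Replacing $\n_{S_{tr}}^\phi$ by $g_{\Ds}$ on both coordinates via the pair-agreement bound above costs an additional $O(n^{-1/3})$. Since by Definition \ref{defn:distribution_margin} $\alpha^\phi = d_\phi(x, x')$ precisely when $g_{\Ds}(x) \neq g_{\Ds}(x')$ and is $\infty$ otherwise, the resulting estimate is exactly $\Pr[\alpha^\phi < \hat{m}] = O(n^{-1/4})$, and a final union bound over the three high-probability events delivers the $1 - 1/n$ guarantee. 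The main obstacle is the choice of uniform-convergence class: it must simultaneously handle the quantifier over $\phi$, the threshold $r$, and the NN-label event that depends on $S_{tr}$, which is precisely what Lemma \ref{lem:complicated_class_vc_bound} is built to accommodate.
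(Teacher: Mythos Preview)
Your proposal is correct and follows essentially the same approach as the paper's proof: both arguments hinge on uniform convergence over the class $\cQ(\Phi, S_{tr})$ via Lemma~\ref{lem:complicated_class_vc_bound}, then transfer from the $\n_{S_{tr}}^\phi$-labeled pair event to the Bayes-labeled pair event (i.e., to $\alpha^\phi$) using Lemma~\ref{lem:stuff_close_to_bayes_good} together with the empirical-to-true risk bound of Lemma~\ref{lem:bound_source_accuracy}. The only cosmetic differences are that the paper introduces the intermediate random variable $\alpha_{S_{tr}}^\phi$ explicitly and organizes the final step as a case split on the excess risk $\epsilon^\phi$ rather than as the contrapositive you use; also note that the negation of condition~(1) gives $source\_loss \le R_s^* + O(n^{-1/4})$, not $O(n^{-1/3})$, so your displayed bound on $R(\n_{S_{tr}}^\phi,\Ds)$ should read $O(n^{-1/4})$ (this does not affect the argument).
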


\begin{proof}
For $\phi \in \Phi$ and $r \geq 0$, let $\cq_{\phi, r, S_{tr}}$ be as defined in Definition \ref{defn:q_phi_r_S}, so that $$\cq_{\phi, r, S_{tr}}(x, x') = \begin{cases}1 & d_\phi(x, x') < r\text{ and }\n_S^\phi(x) \neq \n_S^\phi(x') \\ 0 & \text{otherwise} \end{cases}.$$ Observe that $$source\_margin(\phi, S_{tr}, S_{source}) = \max \left\{r: \frac{1}{n} \sum_{i=1}^n \cq_{\phi, r, S_{tr}}(x_i^a, x_i^b) = 0\right\}.$$ This is because $\cq_{\phi, r, S_{tr}}(x_i^a, x_i^b) = 0$ if and only if either $x_i^a$ and $x_i^b$ are given the same labels, or if they have distance (under $\phi$) of at least $r$. 

Define $\alpha_{S_{tr}}^\phi$ as the random variable where for $x, x' \sim \mu_s$, $$\alpha_{S_{tr}}^\phi = \begin{cases}d_\phi(x, x') & \n_{S_{tr}}^\phi(x) \neq \n_{S_{tr}}^\phi(x') \\ \infty & \text{otherwise} \end{cases}.$$ The variable $\alpha_{S_{tr}}^\phi$ is closely related to $\alpha^\phi$, the only difference is that we replace the bayes optimal classifier, $g_{\cD_s}$ with $\n_{S_{tr}}^\phi$. 

To relate $\alpha_{S_{tr}}^\phi$ to our previous quantities, observe that $$\Pr[\alpha_{S_{tr}}^\phi \leq r] = \mathbb{E}_{(x, x') \sim \mu_s^2}[\cq_{\phi, r, S_{tr}}(x, x')].$$

Because the set of classifiers, $\cQ(\Phi, S_{tr}) = \{\cq_{\phi, r, S_{tr}}: \phi \in \Phi, r \geq 0\}$ has bounded VC-dimension, $c_3\partial(\Phi)\log(n + \partial(\Phi))$,  we can apply uniform convergence to see that $\Pr[\alpha_{S_{tr}}^\phi \leq r]$ must be close to its expectation with high probability over all $\phi, r$. More precisely, by applying the same argument as in the proof of Lemma \ref{lem:bound_source_accuracy}, a we have the for $n$ sufficiently large, with probability at least $1 - \frac{1}{n^2}$ over $S_{source} \sim \Ds^{n}$, for all $\cq_{\phi, r, S_{tr}} \in \cQ(\Phi, S_{tr})$, $$\left|\mathbb{E}_{(x, x') \sim \mu_s^2}[\cq_{\phi, r, S_{tr}}(x, x')] - \frac{1}{n} \sum_{i=1}^n \cq_{\phi, r, S_{tr}}(x_i^a, x_i^b)\right| < n^{-1/3}.$$ By substituting the definition of $\alpha_{S_{tr}}^\phi$ along with our observation about $source\_margin(\phi, S_{tr}, S_{source})$, it follows that $$\Pr[\alpha_{S_{tr}}^\phi < source\_margin(\phi, S_{tr}, S_{source})] < n^{-1/3}.$$ We now turn our attention to showing that $\alpha_{S_{tr}}^\phi$ must indeed serve as a reasonable approximation for $\alpha^\phi$. To do so, observe that if $\alpha_{S_{tr}}^\phi$ and $\alpha^\phi$ are constructed from the same random variables, $x, x' \sim \mu_s$, then they only differ if $\n_{S_{tr}}^\phi$ and $g_{\Ds}$ differ over either $x$ or $x'$. 

Suppose that $R(\n_{S_{tr}}^\phi , \Ds) = R(g_{\Ds}, \Ds) + \epsilon^\phi.$ Then it follows by Lemma \ref{lem:stuff_close_to_bayes_good} that $\Pr_{x \sim \mu_s}[g_{\Ds}(x) \neq \n_{S_{tr}}^\phi(x)] \geq \frac{\epsilon^\phi}{\Delta}$, where $\Delta$ is the label margin of $\Ds$. It follows by the rules of probability that the probability that $\alpha^\phi < r$ is at most $\frac{\epsilon^\phi}{\Delta}$ summed with the probability that $\alpha_{S_{tr}}^\phi < r$. That is, 
\begin{equation}\label{eqn:margin_nice}
\Pr[\alpha_{S_{tr}}^\phi < source\_margin(\phi, S_{tr}, S_{source})] < n^{-1/3} + \frac{\epsilon^\phi}{\Delta}
\end{equation} 
However, if $n$ is sufficiently large, then we have that with probability at least $1- \frac{1}{n^2}$ over $S_{loss} \sim \Ds^n$, for all $\phi$, $$\left|source\_loss(\phi, S_{tr}, S_{loss}) - \left(R(g_{\Ds}, \Ds) + \epsilon^\phi\right)\right| < O(n^{-1/3}).$$ This in turn implies that 
\begin{equation}\label{eqn:loss_nice}
source\_loss(\phi, S_{tr}, S_{loss}) > R(g_{\Ds}, \Ds) + \epsilon^\phi - O(n^{-1/3})
\end{equation}

By taking a union bound, it follows with probability at least $1 - \frac{2}{n^2}$, that the Equations \ref{eqn:loss_nice} and \ref{eqn:margin_nice} simulatenously hold over all $\phi$.

Finally, if $\epsilon^\phi \geq n^{-1/4}$, then for $n$ sufficiently large, condition number 2. from the statement of the Lemma must hold. Otherwise, if $\epsilon^\phi < n^{-1/4}$, condition 1. holds. Thus in either case, one of the two conditions hold which completes the proof.
\end{proof}

\begin{algorithm}[tb]
   \caption{$source\_margin(\phi, S_{tr}, S_{source})$}
   \label{alg:source_margin}
\begin{algorithmic}[1]
   \State $S_{source} = S_{source}^a \cup S_{source}^b$, $|S_{margin}^a| = |S_{margin}^b|$.
   \State $S_{source}^a = \{(x_1^a, y_1^a), \dots, (x_{n}^a, y_{n}^a)\}$.
   \State $X^a = \{x_1^a, \dots, x_{n}^a\}$.
   \State $S_{source}^b = \{(x_1^b, y_1^b), \dots, (x_{n}^b, y_{n}^b)\}$.
   \State $X^b = \{x_1^b, \dots, x_{n}^b\}$.
   \For{$i = 1 \dot n$}
   \State $d_i = d_\phi(x_i^a, x_i^b)$.
   \If{$\n_{S_{tr}}^\phi(x_i^a) = \n_{S_{tr}}^\phi(x_i^b)$}
   \State $d_i = \infty$
   \EndIf
   \EndFor
   \State \textbf{return} $\min_{1 \leq i \leq n}d_i$. 
\end{algorithmic}
\end{algorithm}

\subsection{Proving the theorem}

We first show a Lemma that implies that the feature map selected by our algorithm, $\hat{\phi}$, is likely to realize the SIRM assumption on $(\Ds, \Dt)$.

\begin{lemma}
Let $\phi^* \in \Phi$ be any SIRM realizing feature map, and suppose that $\Ds^{\phi^*}$ has margin $\rho^*$. Then for all $\delta > 0$, there exists $N$ such that if $n \geq N$, with probability at least $1-\delta$ over $S \sim \Ds^n$, $\hat{\phi}$ (defined Line 6 of \nLabel) is a SIRM realizing feature map for $(\Ds, \Dt)$, and has margin at least $\frac{\rho^*}{2}$. 
\end{lemma}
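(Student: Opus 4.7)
The plan is to show that, with probability at least $1-\delta$, the empirical margin maximizer $\hat{\phi}$ induces a source distribution $\Ds^{\hat{\phi}}$ with margin at least $\rho^*/2$. By Lemma \ref{lem:characterize_well_sep}, this positive margin is equivalent to source-preservation, so $\hat{\phi} \in \Phis$; the standing assumption $\Phirealize = \Phis$ then gives that $\hat{\phi}$ realizes the Statistical IRM assumption, yielding both conclusions of the lemma simultaneously.

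First, I would verify that $\phi^*$ is admissible, i.e.\ $\phi^* \in \Phi_\epsilon$ with high probability. Since $\phi^*$ source-preserves and $\Ds$ has label margin $\Delta > 0$, a Chaudhuri--Dasgupta style analysis of $k_n$-NN on the well-separated induced distribution $\Ds^{\phi^*}$ yields that $R(\n^{\phi^*}_{S_{tr}}, \Ds)$ converges to $R_s^*$ at a rate exponential in $k_n$. Combined with Lemma \ref{lem:bound_source_accuracy}, this ensures $\textsc{source\_loss}(\phi^*, S_{loss}) < \epsilon = n^{-1/3}$ w.h.p. Next, I would show $\textsc{source\_margin}(\phi^*, S_{margin}) \geq \rho^*/2$ w.h.p. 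For each independent pair $(x_i^a, x_i^b) \sim \mu_s^2$ processed by Algorithm \ref{alg:source_margin}, the condition $d_{\phi^*}(x_i^a, x_i^b) < \rho^*/2 < \rho^*$ forces $\bayes_{\Ds}(x_i^a) = \bayes_{\Ds}(x_i^b)$, since $\phi^*$ preserves the source margin. Hence $d_i < \rho^*/2$ would require $\n^{\phi^*}_{S_{tr}}$ to disagree with Bayes at one of the two points; the pointwise misclassification probability is $\exp(-\Omega(k_n \Delta^2))$, and since $k_n = \omega(\log n)$, a union bound over the $\Theta(n)$ pairs in $S_{margin}$ gives vanishing failure probability. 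By maximality in the definition of $\hat{\phi}$, we thus obtain $\textsc{source\_margin}(\hat{\phi}, S_{margin}) \geq \textsc{source\_margin}(\phi^*, S_{margin}) \geq \rho^*/2$.

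The remaining and hardest step is to convert this empirical margin bound into a true margin bound on $\Ds^{\hat{\phi}}$. Since $\hat{\phi} \in \Phi_\epsilon$, condition 1 of Lemma \ref{lem:source_margin_bound} must fail for large $n$, so condition 2 yields
\begin{equation*}
\Pr\!\left[\alpha^{\hat{\phi}} < \textsc{source\_margin}(\hat{\phi}, S_{margin})\right] < O(n^{-1/4}).
\end{equation*}
Suppose for contradiction that $\rho^{\hat{\phi}} < \rho^*/2$ (with the convention $\rho^\phi = 0$ when $\phi$ fails to preserve $\Ds$). Choosing $\gamma > 0$ with $\rho^{\hat{\phi}} + \gamma < \rho^*/2 \leq \textsc{source\_margin}(\hat{\phi}, S_{margin})$ and invoking Lemma \ref{lem:pseudo_margin_continous} would give $\Pr[\alpha^{\hat{\phi}} < \rho^*/2] \geq \delta$ for some $\delta > 0$, contradicting the $O(n^{-1/4})$ bound once $n$ is sufficiently large. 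The main obstacle is that the $\delta$ produced by Lemma \ref{lem:pseudo_margin_continous} is a priori $\phi$-dependent, whereas $\hat{\phi}$ is a random element of $\Phi$. I would close this gap by using the compactness of $\Phi$, the uniform continuity of $(\phi, x) \mapsto \phi(x)$ (cf.\ Lemma \ref{lem:compact_ABUSE}), and the openness of $\mu_s$ to extract a lower bound on $\delta$ that is uniform over the set $\{\phi \in \Phi : \rho^\phi < \rho^*/2\}$. Having ruled out $\rho^{\hat{\phi}} < \rho^*/2$, the induced margin is at least $\rho^*/2$, which by Lemma \ref{lem:characterize_well_sep} places $\hat{\phi}$ in $\Phis = \Phirealize$ and completes the proof.
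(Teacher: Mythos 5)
Your overall decomposition matches the paper's: reduce to showing $\hat\phi$ induces a separated source distribution with margin at least $\rho^*/2$, then use Lemma~\ref{lem:characterize_well_sep} and the assumption $\Phirealize = \Phis$ to conclude. The first two steps (admissibility of $\phi^*$ in $\Phi_\epsilon$, and a lower bound on $\textsc{source\_margin}(\hat\phi)$ via maximality) are correct in outline. The gap is in the final ``uniformization'' step.

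You ask for a lower bound $\delta > 0$ on $\Pr[\alpha^\phi < \rho^*/2]$ that is uniform over $\{\phi : \rho^\phi < \rho^*/2\}$. No such bound exists in general: Lemma~\ref{lem:pseudo_margin_continous} produces, for a map with margin $\rho^\phi$ and slack $\gamma = \rho^*/2 - \rho^\phi$, a $\delta$ built from the mass of balls of radius proportional to $\gamma$, and this mass can go to $0$ as $\rho^\phi \uparrow \rho^*/2$. Compactness plus lower semi-continuity of $\phi \mapsto \Pr[\alpha^\phi < r]$ only gives a positive infimum over a \emph{closed} set where the quantity is everywhere strictly positive; at the boundary $\rho^\phi = \rho^*/2$ the quantity $\Pr[\alpha^\phi < \rho^*/2]$ can vanish, so the closure $\{\phi : \rho^\phi \leq \rho^*/2\}$ does not help with your chosen threshold. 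The missing ingredient is \emph{slack between thresholds}. Note also that $\textsc{source\_margin}(\phi^*, \cdot)$ is actually $\geq \rho^*$ with high probability (not just $\rho^*/2$) --- since with high probability $\n_{S_{tr}}^{\phi^*}$ agrees with $g_{\Ds}$ on all of $S_{margin}$, so only genuinely differently-Bayes-labeled pairs are compared and their $\phi^*$-distance is already $\geq \rho^*$. Using that stronger bound gives $\Pr[\alpha^{\hat\phi} < \rho^*] < O(n^{-1/4})$, which leaves room to compare against the strictly smaller threshold $3\rho^*/4$.

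The paper exploits this slack differently from what you sketch. Rather than chasing a uniform $\delta$, it assumes the bad event has probability $\geq \delta$ along a subsequence $n_i \to \infty$, picks a realization $\hat\phi_i$ on each good $\cap$ bad event, passes by compactness to a convergent subsequence $\hat\phi_i \to \phi$, and uses the $2$-Lipschitz dependence of $\alpha^\phi$ on $\phi$: once $d(\hat\phi_i, \phi) < \rho^*/8$, the bound $\Pr[\alpha^{\hat\phi_i} < \rho^*] < n_i^{-1/4}$ forces $\Pr[\alpha^\phi < 3\rho^*/4] < n_i^{-1/4}$, and letting $i \to \infty$ gives $\Pr[\alpha^\phi < 3\rho^*/4] = 0$, i.e.\ $\rho^\phi \geq 3\rho^*/4$. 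The Lipschitz property of $\rho^\phi$ then pushes this back to $\rho^{\hat\phi_i} \geq \rho^*/2$, a contradiction. Your proposal could be repaired along the lines you gesture at, but you would need (i) to use $\textsc{source\_margin}(\phi^*) \geq \rho^*$, (ii) to separate the two comparison thresholds, and (iii) to formulate the lower-semi-continuity/compactness step carefully so the infimum is over a closed set where $\Pr[\alpha^\phi < 3\rho^*/4]$ is everywhere positive. As written, the claim that the needed $\delta$ is ``uniform over $\{\phi : \rho^\phi < \rho^*/2\}$'' is where the argument breaks.
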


\begin{proof}
Assume towards a contradiction, that for $\delta > 0$, there exist arbitrarily large values of $n$ for which with probability at least $\delta$, $\hat{\phi}$ has margin less than $\frac{\rho^*}{2}$. 

For $n$ sufficiently large, with probability at least $1 - O(\frac{1}{n})$, applying Lemmas \ref{lem:bound_source_accuracy} and \ref{lem:source_margin_bound} we have that, for all $\phi \in \Phi$, $$|source\_loss(\phi, S_{tr}, S_{loss}) - R(\n_{S_{tr}}^\phi, \cD)| < n^{-1/3},$$ and that one of the two conditions hold as well:
\begin{enumerate}
	\item $source\_loss(\phi, S_{tr}, S_{loss}) > R_s^* + O(n^{-1/4})$.
	\item $\Pr[\alpha^\phi < source\_margin(\phi, S_{tr}, S_{source})] < O(n^{-1/4})$. 
\end{enumerate}
Because these equations hold for $\phi^*$, the smallest observed empirical loss must be at most $R_s^* + n^{-1/3}$. It follows that $\hat{\phi}$ must incur empirical loss at most $R_s^* + 2n^{-1/3}$ which implies that condition 2. must apply to $\hat{\phi}$. 

Furthermore, because $k_n > \log n$, we have that with high probability, $\n_{S_{tr}}^{\phi^*}$ will match the Bayes-optimal classifier, $g_{\Ds}$ for all points in $S_{source}$. It follows that the observed margin, $source\_margin(\phi^*, S_{tr}, S_{source})$ will be at least $\rho^*$. 

Combining all of this, we see that $$\Pr[\alpha^{\hat{\phi}} < \rho^*] < n^{-1/4}.$$ Now let $n_1, n_2, \dots$ be a sequence of integers going to infinity so that for each $n_i$, with probability at least $\delta$, $\hat{\phi}$ has a margin less than $\frac{\rho^*}{2}$. Because $\delta$ is fixed, it follows that for sufficiently large $n_i$, there exist $\hat{\phi}_i$ and $S_i$ such that all of the equations above hold.

Because $\Phi$ is compact, there exists an infinite subsequence of the $n_i$s for which $\hat{\phi}_i$ converges (using the distance metric over $\Phi$) to some $\phi$. Relabel our sequence so that without loss of generality, $\hat{\phi}_i \to \phi$. 

The key observation is that the variable, $\alpha^\phi$ is Lipschitz with respect to the distance metric over $\Phi$. In particular, if $|\phi - \phi'| < r$, then $\alpha^\phi - \alpha^{\phi'} < 2r$. 

Using this, observe that for sufficiently large values of $i$, we have that $d(\hat{\phi}_i, \phi) < \frac{\rho^*}{8}$. Substituing this, it follows that for all sufficiently large $i$,
$$\Pr[\alpha^\phi < \frac{3\rho^*}{4}] \leq \Pr[\alpha^{\hat{\phi}} < \rho^*] < n_i^{-1/4}.$$ Since $n_i$ can be arbitrarily large it follows that $\Pr[\alpha^\phi < \frac{3\rho^*}{4}] = 0$ which implies $\Ds^\phi$ must have margin at least $\frac{3\rho^*}{4}$.

However, this in term implies that for all sufficiently large $i$, $\hat{\phi}_i$ too must have margin at least $\frac{3\rho^*}{4} - \frac{\rho^*}{4} = \frac{\rho^*}{2}$. Here we are again exploiting the fact that the margin is Lipschitz. 

This finally gives us a contradiction, as we previous assumed that all $\hat{\phi}_i$ had margin less than $\frac{\rho^*}{2}$. 

\end{proof}

We are now prepared to prove Theorem \ref{thm:setting_2_upper_bound}.

\begin{proof}
Fix $\epsilon, \delta > 0$. The previous Lemma implies that for sufficiently large values of $n$, with probability $1-\frac{\delta}{2}$ we will select some $\hat{\phi}$ that has margin at least $\frac{\rho^*}{2}$. Lemma \ref{lem:strong_k_nn_converge} implies that for $n$ sufficiently large (in a way that only depends on $\rho^*, \Ds$), with probability at least $1- \frac{\delta}{2}$ over $S_{final} \sim \Ds^{n/4}$, $$R(\n_{S_{final}}^{\hat{\phi}}, \Dt) < R_t^* + \epsilon.$$ Crucially, $S_{final}$ is completely independent of $\hat{\phi}$, which is learned purely using $S_{tr}, S_{loss},$ and $S_{source}$. Taking a union bound implies the desired result.
\end{proof}

\section{Proof of Theorem \ref{theorem:type-1-lower-bound}}

\begin{proof}
Fix $\epsilon > 0$. Let $\phi_1$ relate $(\Ds, \Dt)$, and let $\phi_2 \in \mathcal{S}(\Phi) \setminus \Phi^*$ be a feature map that source-preserves $\Ds$ but fails to relate $(\Ds, \Dt)$. We will construct $\Ds'$ and $\Dt'$ using $\phi_1$ and $\phi_2$. 

Because $\phi_2$ fails to relate $(\Ds, \Dt)$, there exists $x \in \supp(\mu_t)$ such that $\phi_2(x) = z \notin \supp(\mu_s^{\phi_2})$. Note that if this doesn't hold, then we can simply use the construction from the proof of Theorem \ref{thm:lower_bound_setting_unlabeled}. The point $x$ will be central to constructing both $\Ds'$ and $\Dt'$. We begin by constructing $\Ds'$. 

\paragraph{Constructing $\Ds'$: } Let $\alpha > 0$ be a small value. Then by Assumptions \ref{assumption:infinite_inverse} and \ref{assumption:indomitable}, there exists $x_1, x_2 \in \mathcal{X}$ and $\phi_1^{\alpha}, \phi_2^\alpha \in \Phi$ such that the following conditions hold:
\begin{enumerate}
	\item $d(\phi_1, \phi_1^\alpha), d(\phi_2, \phi_2^\alpha) < \alpha$. 
	\item $\phi_1^\alpha(x_1) = \phi_1^\alpha(x) \neq \phi_1^\alpha(x_2)$. 
	\item $\phi_2^\alpha(x_1) \neq \phi_2^\alpha(x) \neq \phi_2^\alpha(x_2)$.
\end{enumerate}
Here, $\phi_1^\alpha$ and $\phi_2^\alpha$ are chosen using Assumption \ref{assumption:indomitable}, while the existence of $x_1$ and $x_2$ is based on Assumption \ref{assumption:infinite_inverse}. We also let $x_1', x_2'$ be two points such that $$0 < d_{\phi_1^\alpha}(x_1, x_1') << d_{\phi_1^\alpha}(x_1,x_2),\text{ and }0 < d_{\phi_2^\alpha}(x_2, x_2') << d_{\phi_2^\alpha}(x_1,x_2).$$ 

Next, let $\mu_s'$ be a measure over $\mathcal{X}$ obtained by the following steps.
\begin{enumerate}
	\item Begin with $\mu_s$, the measure of $\Ds$ over $\mathcal{X}$.
	\item \textit{Remove} all points in $\supp(\mu_s)$ that lie within a distance of $r$ from the set $\{x_1, x_1', x_2, x_2'\}$. 
	\item Pick $s$ such that any two points in $\{x_1, x_1', x_2, x_2'\}$ have distance larger than $4s$. Insert balls of probability mass $\epsilon/8$ centered at each of these points, so that $\mu_s'(B(x, s)) = \frac{\epsilon}{4}$ for $x \in \{x_1, x_1', x_2, x_2'\}$. 
\end{enumerate}
Observe that $\mu_s'$ is constructed from $\mu_s$ by adding a region of mass $\epsilon/2$ (and appropriately down-sizing all other regions). Furthermore, if $r$ is appropriately chosen, then the region being removed from $\mu_s$ can also be forced to have size at most $\epsilon/2$. It follows that $W(\mu_s, \mu_s') < \epsilon$. Next, we define the conditional distribution, $\eta_s'$ with the following steps. Let $y_1 \neq y_2$ be two labels in $\mathcal{Y}$.
\begin{enumerate}
	\item For $x \in B(x_1, s): \eta_s'(y_1|x) = 1$.
	\item For $x \in B(x_1', s): \eta_s'(y_2|x) = 1$.
	\item For $x \in B(x_2, s): \eta_s'(y_2|x) = 1$.
	\item For $x \in B(x_2', s): \eta_s'(y_1|x) = 1$.
	\item For all other $x$, $\eta_s'(y|x) = \eta(y|x)$. 
\end{enumerate}
Basically, we force the conditional distribution near $x_1$ and $x_2$ to be $y_1$ and $y_2$ respectively. For $x_1', x_2'$, this is reversed. This construction only modifies $\eta_s$ at points where $\mu_s$ is modified, and it follows that $W(\Ds, \Ds') < \epsilon$. 

Furthermore, observe that $\phi_1^\alpha$ and $\phi_2^\alpha$ both source-preserve $\Ds'$. This occurs because $r$ and $s$ are chosen to be small enough so that the 4 balls, $B(x_1, s), B(x_2, s), B(x_1', s), B(x_2', s)$ are all mapped to disjoint areas under both $\phi_1$ and $\phi_2$. 

\paragraph{Constructing $\Dt'$:} Next, we will construct $\Dt'$ by giving a choice of two possible target distribution, $\Dt^1$ and $\Dt^2$. We let $\mu_t'$ be a point mass that is concentrated at $x$. We let $\eta_t^1(y_1|x) = 1$ and $\eta_t^2(y_2|x) = 1$. This gives us $\Dt^1$ and $\Dt^2$.

Observe that $\Dt^1$ is SIRM related to $\Ds'$ by $\phi_1^\alpha$, and $\Dt^2$ is SIRM related to $\Ds'$ by $\phi_2^\alpha$. This is because $x$ is mapped to $x_1$ by $\phi_1^\alpha$, and the same holds respectively for $x_2$ and $\phi_2^\alpha$. 

\paragraph{Finishing the proof:} We now show that our learner will have a large error over either some choice of $\Dt' \in \{\Dt^1, \Dt^2\}$. To do so, suppose $\Dt'$ is randomly chosen from this set. It follows that our learning rules expected loss is:
\begin{equation*}
\begin{split}
\mathbb{E}_{\Dt' \sim \{\Dt^1, \Dt^2\}}\mathbb{E}_{S \sim (\Ds')^n} R(L(S), \Dt') &= \mathbb{E}_{S \sim (\Ds')^n} \mathbb{E}_{\Dt' \sim \{\Dt^1, \Dt^2\}} R(L(S), \Dt') \\
&= \mathbb{E}_{S \sim (\Ds')^n} \mathbb{E}_{i \sim \{1, 2\}} \ind(L(S)(x) \neq y_i) \\
&= \frac{1}{2}.
\end{split}
\end{equation*}

From here, the desired result follows by a straightforward application of markov's inequality.

\end{proof}

\section{Proof of Theorem \ref{thm:unlabel_upper_bound}}

\subsection{Description of the learning rule}

We give the learning rule that achieves the bound given in Theorem \ref{thm:unlabel_upper_bound}

\begin{algorithm}[tbh]
   \caption{$\ualg(S \sim \Ds^n, U \sim \mu_t^m)$}
   \label{algorithm:unlabeled}
\begin{algorithmic}[1]
    \State $S_{tr} \gets  \{(x_i, y_i): 1 \leq  i \leq n/5\}$
   \State $S_{loss} \gets \{(x_i, y_i): n/5 < i \leq 2n/5 \}$
   \State $S_{margin} \gets  \{(x_i, y_i): 2n/5 < i \leq 3n/5 \}$
   \State $S_{margin, t} \gets  \{(x_i, y_i): 3n/5 < i \leq 4n/5 \}$
   \State $S_{final} \gets \{(x_i, y_i): 4n/5 < i \leq n \}$
   \State $\epsilon \gets n^{-1/3}$ 
        \vspace{1mm}

   \State $\Phi_\epsilon = \left \{\phi: \textsc{source\_loss}(\phi, S_{tr}, S_{loss}) < \epsilon \right\}$
   \State $\rho_s(\phi) = \textsc{source\_margin}(\phi, S_{tr}, S_{margin})$ 
   \State $\rho_t(\phi) = \textsc{target\_margin}(\phi, S_{margin, t}, U)$
          \vspace{1mm}

   \State $\hat{\phi} \gets \argmax_{\phi \in \Phi_\epsilon} \rho_s(\phi) - \Lambda\rho_t(\phi)$

   \State \textbf{return} $\n_{S_{tr}}^{\hat{\phi}}$
\end{algorithmic}
\end{algorithm}

\subsection{Analyzing the procedure, $target\_margin$}

We begin by describing the process used to estimate how far data from $\Dt$ is from data from $\Ds$ under a feature map, $\phi$. The subroutine is given in Algorithm \ref{alg:target_margin}, where $S$ is a labeled set of points drawn from $\Ds$, and $U$ is an unlabeled set of points drawn from $\mu_t$, the marginal $\cX$-distribution of $\Dt$. 

\begin{algorithm}[tb]
   \caption{$target\_margin(\phi, S_{margin, t}, U)$}
   \label{alg:target_margin}
\begin{algorithmic}[1]
   \State $U = \{u_1, \dots, u_m\}$.
   \State $S_{margin, t} = \{(x_1, y_1), \dots, (x_n, y_n)\}$.
   \State $l = \min(m, [\sqrt{n}])$.
   \State $U' = \{u_1, \dots, u_l\}$.
   \For{$1 \leq i \leq l$}
        \State $X_i = \{(x_{il - l + 1}, \dots, x_{il}\}$
      \EndFor
   \State \textbf{return} $\max_{1 \leq i \leq l} \min_{x \in X_i} d_\phi(u_i, x)$. 
\end{algorithmic}
\end{algorithm}

The idea is for each $u \in U$, we assign it its own set of $l$ points sampled from $\mu_s$. Then, we take the distance from $u$ to the closest point in its assigned set. Finally, taking the max of all of these gives us an approxmation of the furthest distance any $u \in \supp(\mu_t)$ has from $\supp(\mu_s)$ when using the distance metric, $d_\phi$. 

We now show that this procedure approximates the quantity, $\beta_l^\phi$, which is defined in Definition \ref{defn:beta_distribution_margin_n}.

\begin{lemma}\label{lem:target_margin_bound}
There exists $N > 0$ such that if $n> N$, and $m \geq \sqrt{n}$, then with probability at least $1 - \frac{1}{n}$ over $S_{margin, t} \sim \Ds^n$, and $U \sim \Dt^m$, for all $\phi \in \Phi$, $$\Pr[\beta_{l_n}^\phi \geq target\_margin(\phi, S_{margin, t}, U)] \leq n^{-1/6},$$  where $l_n$ is the largest integer at most $\sqrt{n}$. 
\end{lemma}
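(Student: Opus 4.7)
The plan is to treat $\text{target\_margin}(\phi, S_{margin, t}, U)$ as the maximum of $l$ iid copies of $\beta_l^\phi$, and then use uniform convergence of the empirical CDF to show that this maximum sits in the far-right tail of the distribution of $\beta_l^\phi$ uniformly over $\phi \in \Phi$. First, since $m \geq \sqrt{n}$, the algorithm sets $l = l_n = [\sqrt{n}]$. Define $B_i(\phi) := \min_{x \in X_i} d_\phi(u_i, x)$ for $1 \leq i \leq l$. Because the blocks $X_i$ are disjoint slices of the iid sample $S_{margin, t}$ and the $u_i$ are independent, the pairs $(u_1, X_1), \ldots, (u_l, X_l)$ are iid draws from $\mu_t \otimes \mu_s^l$, so $B_1(\phi), \ldots, B_l(\phi)$ are iid with the distribution of $\beta_l^\phi$. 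By construction, $\text{target\_margin}(\phi, S_{margin,t}, U) = M_\phi := \max_{1 \leq i \leq l} B_i(\phi)$.

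Next I would invoke uniform convergence over the class $\cQ_l(\Phi)$ from Definition \ref{defn:upgraded_q_stuff} (using the non-strict $\leq$ variant, whose VC dimension has the same asymptotic order as the strict version bounded in Lemma \ref{lem:margin_vc_bound_FINAL}, via an essentially identical argument). Letting $F_\phi(r) := \Pr[\beta_l^\phi \leq r]$ and $\hat{F}_\phi(r) := \frac{1}{l}\sum_{i=1}^l \ind(B_i(\phi) \leq r)$, the standard VC uniform convergence theorem applied to the $l$ iid pairs $(u_i, X_i)$ with failure probability $1/n$ yields
\begin{equation*}
\sup_{\phi \in \Phi,\ r \geq 0} \left| \hat{F}_\phi(r) - F_\phi(r) \right| \ \leq \ \epsilon_n \ := \ O\!\left( \sqrt{\frac{\partial(\Phi)\log(\partial(\Phi)+\sqrt{n}) + \log n}{\sqrt{n}}} \right),
\end{equation*}
with probability at least $1 - 1/n$ over $S_{margin, t}$ and $U$. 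Since $\epsilon_n = O(\mathrm{polylog}(n)\cdot n^{-1/4})$, it is eventually smaller than $n^{-1/6}$.

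To finish, I would evaluate the uniform bound at the data-dependent level $r = M_\phi$. Since $M_\phi$ is the maximum of the $B_i$, we have $\hat{F}_\phi(M_\phi) = 1$ trivially, and uniform convergence gives $F_\phi(M_\phi) \geq 1 - \epsilon_n$, i.e.\ $\Pr[\beta_l^\phi > M_\phi] \leq \epsilon_n$. Converting $>$ to $\geq$ uses the assumption that $\mu_s^\phi$ and $\mu_t^\phi$ are open measures, which rules out atoms in the distribution of $\beta_l^\phi$; alternatively one can approach $M_\phi$ from above by $\delta > 0$ (applying uniform convergence at each $M_\phi + \delta$) and invoke right-continuity of $F_\phi$. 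Either way, for $n$ large enough, $\Pr[\beta_l^\phi \geq M_\phi] \leq \epsilon_n \leq n^{-1/6}$ uniformly in $\phi$.

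The main obstacle I anticipate is the careful handling of the uniform convergence at a \emph{data-dependent} threshold $M_\phi$; this is what forces the use of a VC class $\cQ_l(\Phi)$ indexed jointly by $\phi$ and the threshold $r$, rather than a pointwise-in-$r$ argument. A secondary subtlety is ensuring the $\leq$-version of $\cQ_l(\Phi)$ inherits the VC bound of Lemma \ref{lem:margin_vc_bound_FINAL}, and the no-atom property needed to pass from $\Pr[\beta_l^\phi > M_\phi]$ to $\Pr[\beta_l^\phi \geq M_\phi]$—both standard consequences of the open-measure assumption but worth a sentence of justification in the full proof.
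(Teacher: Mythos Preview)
Your proposal is correct and follows essentially the same approach as the paper: both arguments recognize that the $l$ blocks yield iid copies of $\beta_l^\phi$, invoke uniform convergence over the VC class $\cQ_l(\Phi)$ (bounded by Lemma~\ref{lem:margin_vc_bound_FINAL}) to control the empirical CDF uniformly in $(\phi,r)$, and then evaluate at the data-dependent threshold $M_\phi = \text{target\_margin}$ to conclude. Your treatment is in fact a bit more careful than the paper's in explicitly flagging the strict-versus-nonstrict inequality and the no-atom issue needed to pass from $\Pr[\beta_l^\phi > M_\phi]$ to $\Pr[\beta_l^\phi \geq M_\phi]$; the paper handles this tersely via ``the rules of probability and the definition of an infimum.''
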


\begin{proof}
For convenience, let $l$ denote $l_n$. For $\phi \in \Phi$ and $r \geq 0$, let $\cq_{\phi, r, l}$ be as defined in Definition \ref{defn:upgraded_q_stuff}, so that $$\cq_{\phi, r, l}(u, x_1, \dots, x_l) = \begin{cases}1 & \exists_{1 \leq i \leq l} d_\phi(u, x_i) < r \\ 0 & \text{otherwise} \end{cases}.$$ Furthermore, let us relabel $X_i$ (defined in line 6 of Algorithm \ref{alg:target_margin}) so that $X_i = (x_1^i, \dots, x_l^i)$. It follows that  $$target\_margin(\phi, S_{margin, t}, U) = \inf \left\{r: \frac{1}{l} \sum_{i=1}^l \cq_{\phi, r, l}(u_i, x_1^i, \dots, x_l^i) = 1\right\}.$$ This is because $\cq_{\phi, r, l}(u_i, x_1^i, \dots, x_l^i) = 1$ if and only if some $x_j^i$ has distance less than $r$ from $u_i$.

Next, we relate this quantity to $\beta_l^\phi$ as follows. Observe that $$\Pr[\beta_{l_n} \geq r ] = \mathbb{E}_{u \sim \mu_t, x_1, \dots, x_l \sim \mu_s^l}[1 -\cq_{\phi, r, l}(u, x_1, \dots, x_l)].$$

Because the set of classifiers, $\cQ_l(\Phi) = \{\cq_{\phi, r, l}: \phi \in \Phi, r \geq 0\}$ has bounded VC-dimension, $c_4\partial(\Phi)\log(l + \partial(\Phi))$,  we can apply uniform convergence to see that $\Pr[\beta_l < r]$ must be close to its expectation with high probability over all $\phi, r$. More precisely, by applying the same argument as in the proof of Lemma \ref{lem:bound_source_accuracy}, a we have the for $n$ sufficiently large, with probability at least $1 - \frac{1}{l^2}$ over $S_{margin, t} \sim \Ds^{l}$ $U \sim \Dt^l$, for all $\cq_{\phi, r, l} \in \cQ_l(\Phi)$, $$\left|\mathbb{E}_{u \sim \mu_t, x_1, \dots, x_l \sim \mu_s^l}[1-\cq_{\phi, r, l}u, x_1, \dots, x_l)] - \frac{1}{l} \sum_{i=1}^l 1-\cq_{\phi, r, S_{tr}}(u_i, x_1^i, \dots, x_l^i)\right| < l^{-1/3}.$$ By substituting the definition of $\beta_n^\phi$ along with our observation about $target\_margin(\phi, S_{margin, t}, U)$, it follows that for all $r > target\_margin(\phi, S_{margin, t}, U)$, $$\Pr[\alpha_{S_{tr}}^\phi \geq r] < l^{-1/3}.$$ By the rules of probability, and the definition of an infimum, it follows that $$\Pr[\alpha_{S_{tr}}^\phi \geq target\_margin(\phi, S_{margin, t}, U) ] \leq l^{-1/3}.$$ Substituting the value of $\l$ gives the desired result.
\end{proof}

\subsection{Bounding the performance a given feature map, $\phi^* \in \Phi$}

We now consider a fixed feature map $\phi^*$ that realizes the SIRM assumption on $(\Ds, \Dt)$. The idea behind doing so is that this allows us to give a baseline over how $source\_margin$ and $target\_margin$ should be expected to behave. 

\begin{lemma}\label{lem:phi_star_performance_target}
Let $\phi^*$ realize the SIRM assumption on $(\Ds, \Dt)$. Suppose $\Ds^{\phi^*}$ has margin $\rho^* > 0$, and that $$\max_{x_t \in \supp(\mu_t)}\min_{x_s \in \supp(\mu_s)}d_{\phi^*}(x_t, x_as) = \beta^*.$$ Finally, let $\rho^* - \Lambda \beta^* = \gamma^*$ where $\gamma^*>0$ by the fact that $\phi^*$ contracts $(\Ds, \Dt)$. Then for all $\delta > 0$, there exists $N$ such that if $n \geq N$, $m \geq \sqrt{n}$, with probability at least $1-\delta$ over $S \sim \Ds^n$, $U \sim \mu_t^m$, the following three things hold:
\begin{enumerate}
	\item $source\_loss(\phi^*, S_{tr}, S_{loss}) < R_s^* + n^{-1/3}$.
	\item $source\_margin(\phi^*, S_{tr}, S_{margin}) \geq \rho^*$.
	\item $target\_margin(\phi^*, S_{margin, t}, U) \leq \beta^* + \frac{\gamma^*}{2}$. 
\end{enumerate}
\end{lemma}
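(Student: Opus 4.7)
The plan is to prove each of the three conclusions holds with probability at least $1-\delta/3$ and then combine them by a union bound. Throughout I will fix $\phi^*$ and rely on three of its properties: $\phi^*$ source-preserves $\Ds$ (so $\Ds^{\phi^*}$ has Bayes risk $R_s^*$ and margin $\rho^*$), $\phi^*$ contracts $(\Ds,\Dt)$ with slack $\gamma^* = \rho^* - \Lambda\beta^* > 0$, and the induced measure $\mu_s^{\phi^*}$ is open. For condition (1), I will apply Lemma~\ref{lem:bound_source_accuracy} specialized to $\phi^*$, which for large $n$ yields $|source\_loss(\phi^*, S_{tr}, S_{loss}) - R(\n_{S_{tr}}^{\phi^*}, \Ds)| < n^{-1/3}$ with probability $\geq 1-1/n^2$. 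Because $\phi^*$ source-preserves, $R(\n_{S_{tr}}^{\phi^*}, \Ds)$ converges to $R_s^*$: Lemma~\ref{lem:p_r_cover} covers $\supp(\mu_s)$ by $\cX_{p,r}^{\phi^*}$ for some $p,r>0$, and Lemma~\ref{lem:k_nn_p_delta_accurate} then gives per-point agreement of $\n_{S_{tr}}^{\phi^*}$ with the Bayes rule with probability $\geq 1-n^{-4}$. Integrating this against $\mu_s$ makes the excess source risk vanish faster than $n^{-1/3}$, delivering condition (1).

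For condition (2), note that $source\_margin(\phi^*, S_{tr}, S_{margin})$ is the minimum $d_{\phi^*}$-distance over pairs $(x_i^a, x_i^b)$ that $\n_{S_{tr}}^{\phi^*}$ classifies differently. Applying Lemma~\ref{lem:k_nn_p_delta_accurate} at each of the $2|S_{margin}|$ sampled points in $S_{margin}$ and union bounding shows that, with probability at least $1 - O(1/n^3)$, $\n_{S_{tr}}^{\phi^*}$ agrees with $g_{\Ds}$ on all of them. Under this event, only pairs with distinct Bayes labels contribute, i.e.\ pairs lying in different regions $\mu_s^y$, $\mu_s^{y'}$. By source preservation and Lemma~\ref{lem:characterize_well_sep}, any such pair has $d_{\phi^*}$-distance at least $\rho^*$, so the returned minimum is $\geq \rho^*$.

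For condition (3), the algorithm returns $\max_{1\le i \le l}\min_{x \in X_i} d_{\phi^*}(u_i, x)$ for $l = \lfloor\sqrt{n}\rfloor$ independent blocks $X_i$ of $l$ i.i.d.\ draws from $\mu_s$ and $u_i \sim \mu_t$. My approach is to establish a \emph{uniform} lower bound: there exists $p > 0$ such that for every $u \in \supp(\mu_t)$,
\[
\mu_s\bigl(B_{\phi^*}(u,\, \beta^* + \gamma^*/2)\bigr) \;\geq\; p.
\]
Pointwise this follows because, by definition of $\beta^*$, some $x_s \in \supp(\mu_s)$ satisfies $d_{\phi^*}(u, x_s) \leq \beta^*$; openness of $\mu_s^{\phi^*}$ combined with Lemma~\ref{lem:compact_ABUSE} then places a ball around $x_s$ of positive $\mu_s$-mass inside $B_{\phi^*}(u, \beta^* + \gamma^*/2)$. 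Given such a uniform $p$, conditional on $u_i$, the probability that $X_i$ misses $B_{\phi^*}(u_i, \beta^* + \gamma^*/2)$ is at most $(1-p)^l \leq e^{-pl}$. A union bound over $i \leq l$ shows the max is at most $\beta^* + \gamma^*/2$ with probability $\geq 1 - l e^{-pl}$, which is below $\delta/3$ once $n$ is large enough.

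The main obstacle is the uniform lower bound $p$ used in condition (3). Openness of $\mu_s^{\phi^*}$ gives positive mass only \emph{pointwise}, and promoting this to a single $p > 0$ valid for every $u \in \supp(\mu_t)$ requires an argument in the spirit of Lemma~\ref{lem:p_r_cover}: use continuity of $\phi^*$ (via Lemma~\ref{lem:compact_ABUSE}) to find a uniform pull-back radius $s$ such that $\mu_s$-balls of $d_{\mathcal{X}}$-radius $s$ land inside $\mu_s^{\phi^*}$-balls of radius $\gamma^*/2$, then take an open cover of the compact set $\supp(\mu_s)$ by balls of radius $s/2$, extract a finite subcover, and bound $p$ below by the minimum mass of its elements. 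Once this uniform bound is in hand, everything else is standard concentration plus the cited lemmas, and a single union bound over the three events finishes the proof.
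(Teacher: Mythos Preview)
The proposal is correct and follows essentially the same approach as the paper: bound each of the three events separately using the same toolkit (Lemmas~\ref{lem:bound_source_accuracy} and \ref{lem:strong_k_nn_converge} for condition~(1), per-point agreement of $\n_{S_{tr}}^{\phi^*}$ with the Bayes rule for condition~(2), and Lemma~\ref{lem:compact_ABUSE} plus the finite-subcover compactness trick to obtain the uniform mass bound $p$ for condition~(3)), then union bound. Your treatment of condition~(2) is a mild variant---you apply Lemma~\ref{lem:k_nn_p_delta_accurate} directly at each sampled point and union bound, whereas the paper routes through the risk bound of Lemma~\ref{lem:strong_k_nn_converge} followed by Lemma~\ref{lem:stuff_close_to_bayes_good}---but both are valid and the overall structure is the same.
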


\begin{proof}
We bound the probability of each of these three things occuring separately, and then apply a union bound. 

First of all, for $n$ sufficiently large, the first claim holds with probability at least $1- O(\frac{1}{n^2})$. This follows directly from a combination of Lemma \ref{lem:bound_source_accuracy} along with Lemma \ref{lem:strong_k_nn_converge} being applied to $\Ds$ (as $\phi$ technically SIRM realizes on $(\Ds, \Ds)$). In particular, we have that with probability at least $1- \frac{1}{n^2}$ that 
\begin{equation}\label{eqn:we_are_so_accuarte}
R(\n_{S_{tr}}^{\phi^*}, \Ds) < R_s^* + \frac{1}{n^2}.
\end{equation}

Second, observe that Lemma \ref{lem:stuff_close_to_bayes_good} implies that the probability that $\n_{S_{tr}}^{\phi^*}$ differs from the Bayes-optimal is at most $\frac{1}{n^2\Delta}$, where $\Delta$ is the label margin of $\Ds$. It follows that with probability at least $1 - O(\frac{1}{n})$, $S_{tr}^{\phi^*}$ correctly labels all the points in $S_{margin}^a$ and $S_{margin}^b$ (see Algorithm \ref{alg:source_margin}). Thus, it follows that $source\_margin(\phi^*, S_{tr}, S_{margin}) \geq \rho^*$, as with correct labels it is impossible to observe two differently labeled poitns that are closer than $\rho^*$. 

Finally, by Lemma \ref{lem:compact_ABUSE}, there exists $\tau^* > 0$ such that $$\phi^*(B(x, \tau^*)) \subseteq B\left(\phi^*(x), \frac{\gamma^*}{2}\right).$$ Let $p = \min_{x \in \supp(\mu_s)} B(x, \frac{\tau^*}{2}).$ The argument in the proof of Lemma \ref{lem:compact_ABUSE} implies that $p > 0$. Thus, for $n$ sufficiently large, it follows that with probability at least $1 - (1-p)^l \geq 1 - e^{-pl}$ over $x_1, \dots, x_l \sim \mu_s^l$ that there will exist some $x_i \in B(x, \frac{\tau^*}{2})$. 

Since $\supp(\mu_s)$ is compact, it follows that we can take a finite covering of $\supp(\mu_s)$ with balls of radius $\frac{\tau^*}{2}$. If there are $C$ such balls, with probability at least $1 - (1-p)^l \geq 1 - Ce^{-pl}$ over $x_1, \dots, x_l \sim \mu_s^l$ that there will exist some $x_i \in B(x, \frac{\tau^*}{2})$ for all balls $B(x, \frac{\tau^*}{2})$ in our covering. Observe that this implies that for all $x \in \supp(\mu_s)$, there will exist some $x_i \in B(x, \tau^*)$. 

We now use this to show that the third condition is likely to hold. Pick $n$ sufficiently large so that $Ce^{-pl} < \frac{1}{n^2}$. It follows that with probability at least $1 - O(\frac{1}{n})$ over $S_{margin, t} \sim \Ds^{n/5}$ and $U \sim \Dt^m$, for all $1 \leq i \leq l$, for all $x \in \supp(\mu_s)$, there exists some $x_j^i \in B(x, \tau^*)$.

Suppose that this holds. For each $u_i \in U'$ (Algorithm \ref{alg:target_margin}), let $x_i^* \in \supp(\mu_s)$ be the point for which $d_{\phi^*}(u_i, x_i^*)$ is minimized. Thus $d_{\phi^*}(u_i, x_i^*) \leq \beta^*$ by the definition of $\beta^*$. However, by our claim above, we see that some $x_j^i$ must be in $B(x_i^*, 
tau^*)$ which implies that $\phi^*(x_j^i) \in B(\phi^*(x_i^*), \frac{\gamma^*}{2})$. Thus $d_{\phi^*}(x_j^i, u_i) \leq \beta^* + \frac{\gamma^*}{2}$.

Since this occurs for all $1 \leq i \leq l$, it follows that the maximum distance we observe is at most $\beta^* + \frac{\gamma^*}{2}$, which means $target\_margin(\phi, S_{margin, t}, U) \leq \beta^* + \frac{\gamma^*}{2}$, as desired. 

Since our three events all occur with probability $1 - O(1/n)$, it follows that if $n$ is sufficiently large, they simultaneously occur with probability at least $1-\delta$. This completes the proof. 

\end{proof}

\subsection{Proving the Theorem}

We are now prepared to prove Theorem \ref{thm:unlabel_upper_bound}. We start with the following Lemma.

\begin{lemma}
Let $\phi^*$ be as defined in Lemma \ref{lem:phi_star_performance_target}. Then for all $\delta > 0$, there exists $N > 0$ such that for all $n \geq N$, $m \geq \sqrt{n}$, with probability at least $1 - \delta$ over $S \sim \Ds^n$, $U \sim \Dt^m$, the outputted feature map $\hat{\phi}$ satisfies the following: let $\hat{\rho}$ denote the margin of $\Ds^{\hat{\phi}}$ and let $\hat{\beta}$ denote $$\hat{\beta} = \max_{x_t \in \supp(\mu_t)}\min_{x_s \in \supp(\mu_s)}d_{\hat{\phi}}(x_t, x_as).$$ Then $$\hat{\rho} - \Lambda \hat{\beta} \geq \frac{\gamma^*}{4}.$$
\end{lemma}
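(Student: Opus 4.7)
The plan is to show that the algorithm's empirical objective $\rho_s - \Lambda\rho_t$ approximates the true quantity $\hat\rho - \Lambda\hat\beta$ uniformly in $\phi$ with small error, and that the guaranteed value of the empirical objective on $\hat\phi$ (inherited from $\phi^*$ lying in $\Phi_\epsilon$) pushes through to a lower bound on the true quantity. Fix a small parameter $\eta > 0$, eventually chosen as $\eta = 3\gamma^*/[4(1 + 2\Lambda)]$.

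First, I would adapt Lemma \ref{lem:phi_star_performance_target} to give, with probability at least $1-\delta/3$ for $n$ large enough and $m \geq \sqrt{n}$: $\phi^* \in \Phi_\epsilon$, $\rho_s(\phi^*) \geq \rho^*$, and $\rho_t(\phi^*) \leq \beta^* + \eta$. The only modification needed is that item 3's covering argument (which picks $\tau^*$ via Lemma \ref{lem:compact_ABUSE} so $\phi^*(B(x, \tau^*)) \subseteq B(\phi^*(x), \gamma^*/2)$) is re-run with radius $\eta$ in place of $\gamma^*/2$. Summing the three estimates gives $\rho_s(\phi^*) - \Lambda\rho_t(\phi^*) \geq \gamma^* - \Lambda\eta$, and since $\hat\phi$ is the argmax over $\Phi_\epsilon \ni \phi^*$, we inherit $\rho_s(\hat\phi) - \Lambda\rho_t(\hat\phi) \geq \gamma^* - \Lambda\eta$.

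Next I would establish two uniform estimates linking the empirical and true quantities. For the target side, since each source batch $X_i \subseteq \supp(\mu_s)$ almost surely, $\min_{x \in X_i} d_\phi(u_i, x) \geq d_\phi(u_i, \supp(\mu_s))$. Choosing $\tau > 0$ via Lemma \ref{lem:compact_ABUSE} so that $\phi(B(x, \tau)) \subseteq B(\phi(x), \eta)$ for every $\phi \in \Phi$, and then using compactness of $\supp(\mu_t)$ along with a covering argument analogous to the one in the proof of Lemma \ref{lem:phi_star_performance_target} item 3 (on $\supp(\mu_s)$), one shows that for $m$ sufficiently large, $U$ covers $\supp(\mu_t)$ within radius $\tau$ with high probability. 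Then for every $\phi \in \Phi$, letting $u^\phi \in \supp(\mu_t)$ attain the maximum of $d_\phi(\cdot, \supp(\mu_s))$ (attained by compactness and continuity), the sample $u_i$ near $u^\phi$ satisfies $d_\phi(u_i, \supp(\mu_s)) \geq \hat\beta^\phi - \eta$, yielding $\rho_t(\phi) \geq \hat\beta^\phi - \eta$ uniformly. For the source side, since $\hat\phi \in \Phi_\epsilon$, condition 2 of Lemma \ref{lem:source_margin_bound} applies and yields $\Pr[\alpha^{\hat\phi} < \rho_s(\hat\phi)] < O(n^{-1/4})$; if $\rho_s(\hat\phi) > \hat\rho + \eta$ held, Lemma \ref{lem:pseudo_margin_continous} would lower bound this probability by a positive constant, a contradiction for $n$ large.

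Combining via a union bound, with high probability $\gamma^* - \Lambda\eta \leq \rho_s(\hat\phi) - \Lambda\rho_t(\hat\phi) \leq (\hat\rho + \eta) - \Lambda(\hat\beta - \eta) = (\hat\rho - \Lambda\hat\beta) + (1+\Lambda)\eta$, which rearranges to $\hat\rho - \Lambda\hat\beta \geq \gamma^* - (1 + 2\Lambda)\eta = \gamma^*/4$ for the chosen $\eta$. The main obstacle is the data-dependence of $\hat\phi$ in the source-margin step: the positive lower bound supplied by Lemma \ref{lem:pseudo_margin_continous} depends on $\phi$, and converting it into a guarantee for the data-dependent $\hat\phi$ requires the compactness-based subsequence argument already used in the analog lemma of Appendix \ref{app:pure_source} -- assume the claim fails along a sequence $n_i \to \infty$, extract a convergent subsequence $\hat\phi_i \to \phi_\infty \in \Phi$ via compactness, and use Lipschitz continuity of $\alpha^\phi$ (and of $\rho^\phi$) in $\phi$ under the supremum metric to transfer the positive-mass guarantee at $\phi_\infty$ back to the sequence, producing the needed contradiction.
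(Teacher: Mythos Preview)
Your proposal is correct, and the overall architecture --- show $\phi^*\in\Phi_\epsilon$ forces a lower bound on the empirical objective at $\hat\phi$, then convert the empirical objective into the population quantity $\hat\rho-\Lambda\hat\beta$ via a compactness/subsequence argument --- matches the paper. The difference lies in how the target-side term is handled.

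The paper keeps $\alpha^{\hat\phi}$ and $\beta_{l_n}^{\hat\phi}$ coupled: from Lemmas \ref{lem:source_margin_bound} and \ref{lem:target_margin_bound} it derives a single bound $\Pr[\alpha^{\hat\phi}-\Lambda\beta_{l_n}^{\hat\phi}<\gamma^*/2]<O(n^{-1/6})$, passes to a convergent subsequence $\hat\phi_i\to\phi$, uses Lipschitzness of both $\alpha^\phi$ and $\beta_{l_n}^\phi$ in $\phi$ together with the convergence in distribution $\beta_{l_n}^\phi\to\beta^\phi$ (Lemma \ref{lemma:beta_n_converges}) to conclude $\rho^\phi-\Lambda\max\beta^\phi\geq\gamma^*/2-2\gamma$, and then transfers back to $\hat\phi_i$ via Lipschitzness of the margin and of $\max\beta^\phi$. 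You instead decouple the two sides: you obtain the $\phi$-uniform inequality $\rho_t(\phi)\geq\hat\beta^\phi-\eta$ directly by a covering of $\supp(\mu_t)$ in the $\cX$-metric (using Lemma \ref{lem:compact_ABUSE} to make the implied feature-space error uniform in $\phi$), bypassing Lemma \ref{lem:target_margin_bound} and Lemma \ref{lemma:beta_n_converges} entirely. This leaves the compactness argument needed only for the source side, where it reduces to the same argument used in Appendix \ref{app:pure_source}: along a bad subsequence one gets $\Pr[\alpha^{\phi_\infty}<r]=0$ for some $r>\rho^{\phi_\infty}$, contradicting Lemma \ref{lem:pseudo_margin_continous}. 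Your route is more elementary on the target side and isolates the one place where the data-dependence of $\hat\phi$ genuinely requires compactness; the paper's route reuses its existing $\beta_{l_n}$ machinery and treats the two sides symmetrically.
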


\begin{proof}
Assume towards a contradiction that this fails to occur and fix $\delta>0$ for that ails. For $n$ sufficiently large, with probability at least $1- \frac{\delta}{2}$, the premises of Lemmas \ref{lem:bound_source_accuracy}, \ref{lem:source_margin_bound},  \ref{lem:target_margin_bound}, and \ref{lem:phi_star_performance_target} are all simulatneously met. In particular, Lemmas \ref{lem:bound_source_accuracy}, \ref{lem:source_margin_bound} imply that $$|source\_loss(\phi, S_{tr}, S_{loss}) - R(\n_{S_{tr}}^\phi, \cD)| < O(n^{-1/3}),$$ and that one of the two conditions hold as well:
\begin{enumerate}
	\item $source\_loss(\phi, S_{tr}, S_{loss}) > R_s^* + n^{-1/4}$.
	\item $\Pr[\alpha^\phi < source\_margin(\phi, S_{tr}, S_{margin})] < n^{-1/4}$. 
\end{enumerate}
However, Lemma \ref{lem:phi_star_performance_target} implies that $$source
\_loss(\phi^*, S_{tr}, S_{loss}) < R_s^* + n^{-1/3}.$$ Since $\hat{\phi}$ has source loss at most $n^{-1/3}$ more than the optimal, it follows that condition 2 must hold. Thus, by additionally adding Lemma \ref{lem:target_margin_bound}, we see that $\hat{\phi}$ has the following properties:
\begin{enumerate}
	\item $\Pr[\alpha^{\hat{\phi}} < source\_margin(\hat{\phi}, S_{tr}, S_{margin})] < n^{-1/4}$
	\item $\Pr[\beta_{l_n}^{\hat{\phi}} \geq target\_margin(\hat{\phi}, S_{margin, t}, U)] \leq n^{-1/6}$.
\end{enumerate}
However, recall that $\hat{\phi}$ must maximize the quantity $source\_margin(\phi, S_{tr}, S_{margin}) - \Lambda target\_margin(\phi, S_{margin, t}, U)$. Since this quantity is at least $\frac{\gamma^*}{2}$ for $\phi^*$ (by Lemma \ref{lem:phi_star_performance_target}), it follows that
\begin{equation}\label{eqn:target_thing_i_want}
\Pr[\alpha^{\hat{\phi}} - \Lambda \beta_{l_n}^{\hat{\phi}} < \frac{\gamma^*}{2}] < O(n^{-1/6}).
\end{equation} 
In particular, this equation holds with probability at least $1- \frac{\delta}{2}$. However, by our assumption for arbitrarily large values of $n$, $\hat{\phi}$ fails to have the desired properties with probability $\delta$. 

Thus with probability at least $\frac{\delta}{2}$, for arbitrarily large values of $n_i$, there exists $\hat{\phi}_{n_i}$ such that Equation \ref{eqn:target_thing_i_want} holds but for which the desired property fails.

Let $n_1, n_2, \dots $ be any subsueqnce of integers so that the corresponding feature maps, $\hat{\phi}_{n_i}$ converge to some $\phi \in \Phi$. Note that this exists because $\Phi$ is compact. 

The key observation is that because $\alpha^\phi$ and $\beta_{l_n}^\phi$ are both Lipschitz with respect to $\phi$ (clearly small changes in the feature map cannot change these variables much), it follows that for all $\gamma > 0$, there exists $j$ such that for all $i > j$,

\begin{equation}\label{eqn:target_thing_i_want_last}
\Pr[\alpha^{\phi} - \Lambda \beta_{l_{n_i}}^{\phi} < \frac{\gamma^*}{2} - \gamma] < O(n_i^{-1/6}).
\end{equation} 
Howver, since $\beta_{t}^\phi \to \beta^\phi$ in distribution (Lemma \ref{lemma:beta_n_converges}), it follows that for $i$ sufficiently large, 
\begin{equation}\label{eqn:target_thing_i_want_last_last}
\Pr[\alpha^{\phi} - \Lambda \beta^{\phi} < \frac{\gamma^*}{2} - 2\gamma] < O(n_i^{-1/6}).
\end{equation} 

Since $n_i$ is arbitrarily large, observe that this implies that $$(\alpha^\phi) - \Lambda \max(\beta^\phi) \geq \frac{\gamma^*}{2} - 2\gamma.$$ Finally, since $\hat{\phi}$ gets arbitrarily close to $\phi$, it follows that $$(\alpha^{\hat{\phi}}) - \Lambda \max(\beta^{\hat{\phi}}) \geq \frac{\gamma^*}{2} - 3\gamma.$$  Taking $\gamma = \frac{\gamma^*}{12}$, and noting the definitions of $
\alpha^{\hat{\phi}}$ and $\beta^{\hat{\phi}}$, it follows that $\hat{\phi}$ precisely fullfills the conditions given in the statement of the lemma, and this completes the proof.

\end{proof}

We now prove Theorem \ref{thm:unlabel_upper_bound}.

\begin{proof}
Fix $\epsilon, \delta > 0$. The previous Lemma implies that for sufficiently large values of $n$, with probability $1-\frac{\delta}{2}$ we will select some $\hat{\phi}$ that has margin at least $\frac{\gamma^*}{4}$. We now use an argument identical to the argument given for Theorem \ref{thm:setting_2_upper_bound} and conclude the proof.
\end{proof}

\section{Proof of Theorem \ref{thm:lower_bound_setting_unlabeled}}

\begin{proof}
For $\phi \in \Phi$ such that $\phi$ source-preserves $\Ds$, define $g^\phi$ as the classifier over $\cX$ defined by $$g^\phi(x) = g_{\Ds^\phi}\left(\argmin_{z \in \supp(\Ds^\phi)}d_\cZ(z, \phi(x))\right),$$ with ties being broken arbitrarily.

Let $\phi_1 \in (\mathcal{S}(\Phi) \cap \Phi_{con} ) \setminus \Phi_{relates}$ and let $\phi_2 \in \mathcal{S}(\Phi)$. Because $\phi_1$ and $\phi_2$ both contract $(\Ds, \Dt)$, it follows that $g^{\phi_1}$ and $g^{\phi_2}$ are precisely well-defined over $\supp(\mu_t)$. However, because $\phi_1 \notin \Phi^*$, this implies that there exists $\epsilon > 0$ such that $$\mu_t\{x: g^{\phi_1}(x) \neq g^{\phi_2}(x)\} = \epsilon.$$ This holds from the fact that $g^{\phi_2}$ must match the Bayes-optimal, $g_{\Dt}$, whereas $g^{\phi_1}$ must fail to (otherwise $\phi_1$ would indeed SIRM realize on $(\Ds, \Dt)$). 


Define $$\eta_t^i(y|x) = \begin{cases} 1 & g^{\phi^i}(x) = y \\ 0 \text{ otherwise} \end{cases}.$$ Essentially this is a noiseless distribution that is purely classified by $g^{\phi^i}$. Let $\Dt^1 = (\mu_t, \eta_t^1)$ and $\Dt^2 = (\mu_t, \eta_t^2)$. The key observation is that if we randomly select $\Dt' \sim \{\Dt^1, \Dt^2\}$ and then apply our learning algorithm to $(\Ds, \Dt')$, then our learner must incur expected risk at least $\frac{\epsilon}{2}$. This is because whatever it outputs, it has a 50-50 chance of misclassifyign instances from in which $g^{\phi_1}$ and $g^{\phi_2}$ disagree. Thus our learner has expected risk at least $\frac{\epsilon}{2}$. Since $\epsilon > 0$ is fixed, this implies the desired result. 
\end{proof}

\section{Bounds on the Distance Dimension and Transfer Learning Guarantees}

\subsection{Proof of Theorem \ref{thm:bound_linear_dd}}

\begin{proof}
(Theorem \ref{thm:bound_linear_dd}) We will construct two maps, $\alpha: \Lin_{D, D} \to \R^{D^2}$, and $\beta: (\R^D)^4 \to \R^{D^2}$ such that for any $\phi \in \Lin_{D, D}$ and $x_1, x_2, x_3, x_4 \in (\R^D)$, $$\dist\phi(x_1, x_2, x_3, x_4) = sgn\left(\langle \alpha(\phi), \beta(x_1, x_2, x_3, x_4 \rangle\right).$$ This will immediately imply the result as it is well known that linear classifiers over $\R^n$ have vc dimension $n$. 

Letting $A_\phi$ be the $D \times D$ matrix associated with $\phi$, we have
\begin{equation*}
\begin{split}
dist\phi(x_1, x_2, x_3, x_4) &= sgn\left( d(\phi(x_1), \phi(x_2))^2 - d(\phi(x_3), \phi(x_4))^2 \right) \\
&= sgn \left( (x_1 - x_2)^tA_\phi^tA_\phi(x_1 - x_2) - (x_3 - x_4)^tA_\phi^tA_\phi(x_3 - x_4) \right) \\
&= sgn \left( \langle A_\phi^t A_\phi, (x_1 - x_2)(x_1- x_2)^t \rangle - \langle A_\phi^t A_\phi, (x_3 - x_4)(x_3- x_4)^t \rangle \right)\\
&= sgn \left(\langle A_\phi^t A_\phi, (x_1 - x_2)(x_1- x_2)^t - (x_3 - x_4)(x_3- x_4)^t \rangle \right)
\end{split}
\end{equation*}
Thus, letting $\alpha(\phi) = A_\phi^tA_\phi$ (cast as a vector in $\R^{D^2}$) and $\beta(x_1, x_2, x_3, x_4) = (x_1 - x_2)(x_1- x_2)^t - (x_3 - x_4)(x_3- x_4)^t$ suffices. 
\end{proof}

\subsection{Proof of Theorem \ref{thm:setting_1_upper_bound}}

\begin{proof}
(Theorem \ref{thm:setting_1_upper_bound}) Suppose $\phi^* \in \Phi$ realizes the Statistical IRM assumption for $\Ds, \Dt$. Define 
\begin{equation*}
E_1 = \ind\left(R(\n_S^{\phi^*}, \Dt) - R^*_t < \frac{\epsilon}{2} \right),
\end{equation*}
and 
\begin{equation*}
E_2 = \ind \left(\sup_{\phi \in \Phi} \left|R(\n^\phi_{S}, \Dt) - \frac{1}{m}\sum_{(x,y) \in T}\ind\left(\n^\phi_{S}(x) \neq y\right)\right| < \frac{\epsilon}{2}\right).
\end{equation*}
$E_1$ is thus the event that the source sample $S \sim \Ds^n$ gives rise to a $k_n$-nearest neighbors classifier which has small excess risk on $\Dt$ when composed with the realizing projection $\phi^*$. $E_2$ is the event that the empirical risks of $k_n$-nearest neighbor classifiers composed with feature maps $\phi \in \Phi$ are uniformly representative of their true risks on the target $\Dt$.

Our goal is to show that $E_1$ and $E_2$ jointly hold with probability at least $1-\delta$, as this would imply that our learned classifier has risk at most $R^*_t + \epsilon$, as desired. By Lemma \ref{lem:strong_k_nn_converge}, $E_1$ holds with probability at least $1- \frac{\delta}{2}$, so it suffices to show that $E_2$ holds with probability at least $1 - \frac{\delta}{2}$ as well. 

Fix any set of $n$ points, $\hat{S}$. It suffices to show that $\Pr_{T \sim \Dt^m}[E_2 = 1 | S = \hat{S}] \geq 1-\delta$, as integrating over all possibilities of $S$ would give the desired result. Consider the hypothesis class, $\cH_{\hat{S}} = \{h_\phi: \phi \in \Phi\}$ where $h_{\phi}: \cX \times \cY \to \{0, 1\}$ is defined as
\begin{equation*}
h_{\phi}(x,y) = \ind\left(\n^{\phi}_{\hat{S}}(x) \neq y \right).
\end{equation*} 
Observe that $h \in \cH_{\hat{S}}$ is a binary classifier over its domain. It follows that given $S = \hat{S}$,
\begin{equation*}
\begin{split}
E_2 &=\ind \left(\sup_{\phi \in \Phi} \left|R(\n^\phi_{S}, \Dt) - \frac{1}{m}\sum_{(x,y) \in T}\ind\left(\n^\phi_{S}(x) \neq y\right)\right| < \frac{\epsilon}{2}\right) \\
&= \ind\left( \sup_{h_{\phi} \in \cH_{\hat{S}}} \left|\mathbb{E}_{(x,y) \sim \Dt}[h_{\phi}(x,y)] - \frac{1}{m}\sum_{(x,y) \in T}h_{\phi}(x,y)\right| < \frac{\epsilon}{2}\right).
\end{split}
\end{equation*}

To analyze the latter quantity, it suffices to show that $vc(\cH_{\hat{S}}) \leq O\left(\partial(\Phi)\log \left(n+\partial(\Phi)\right)\right)$, as standard application of the fundamental theorem of statistical learning \cite{SS2014} implies $E_2$ holds with probability $1 - \frac{\delta}{2}$ provided that $m \geq \Omega\left(\frac{vc(\cH_{\hat{S}}) + \ln \frac{1}{\delta}}{\epsilon^2}\right)$.

To this end, suppose $\cH_{\hat{S}}$ shatters a set of $v$ points $V \subset \cX \times \cY$. Let $V=\{(x_1, y_1), \dots, (x_v, y_v)\}$, and let $\hat{S} = \{(x_1', y_1'), \dots, (x_n', y_n')\}$. The key observation is that for any $h_\phi \in \cH_{\hat{S}}$, the way $h_\phi$ labels a given point $(x, y) \in V $ is determined by the $k_n$-nearest neighbors of $\phi(x)$ in $\{\phi(x'_1), \dots, \phi(x'_n)\}$. Furthermore, these labels are fully determined by the set of all $\binom{n}{2}$ comparisons,
\begin{equation*}
\cC_{\phi, x} = \left\{\ind \left(d(\phi(x),\phi(x'_i)) \geq d(\phi(x),\phi(x'_j)\right): 1 \leq i < j \leq n \right\}.
\end{equation*}
Note that $\cC_{\phi, x}$ is a set of induced distance comparers (Definition \ref{defn:dist_comparer}). It follows that the number of distinct ways that $\cH_{\hat{S}}$ can label $V$ is at most the number of ways $\dist\Phi$ can label all $v\binom{n}{2}$ possible comparisons arising from $x_i \in V$ and $x'_j, x'_k\in \hat{S}$ with $j < k$. Thus, by the definition of the distance dimension $\partial(\Phi)$ and Sauer's Lemma, the number of ways $\cH_{\hat{S}}$ can label $V$ is at most 
\begin{equation*}
\left(e v\binom{n}{2}\right)^{\partial(\Phi)}
\end{equation*}
At the same time, because $\cH_{\hat{S}}$ shatters $V$, there exist precisely $2^v$ such labelings. Thus, we have 
\begin{equation*}
v \leq \partial(\Phi) \log \left(e v\binom{n}{2}\right).
\end{equation*}
 From here, straightforward algebra implies that $v = O\left(\partial(\Phi) \log \left(n + \partial(\Phi)\right)\right)$, as desired.
 
\end{proof}

\end{document}